\documentclass{article}
\PassOptionsToPackage{table}{xcolor}
\usepackage{colm2024_conference}

\usepackage{amsmath,amsfonts,bm}

\def\eqref#1{equation~\ref{#1}}

\def\1{\bm{1}}

\DeclareMathAlphabet{\mathsfit}{\encodingdefault}{\sfdefault}{m}{sl}
\SetMathAlphabet{\mathsfit}{bold}{\encodingdefault}{\sfdefault}{bx}{n}

\usepackage[utf8]{inputenc} 
\usepackage[T1]{fontenc}    
\usepackage{url}            
\usepackage{booktabs}       
\usepackage{amsfonts}       
\usepackage{nicefrac}       
\usepackage{microtype}      
\usepackage{amsmath}
\usepackage{amssymb}
\usepackage{amsthm}
\newtheorem{lemma}{Lemma}[section]
\newtheorem{theorem}{Theorem}[section]
\newtheorem{proposition}[theorem]{Proposition}
\newtheorem{corollary}[theorem]{Corollary}
\newtheorem{assumption}{Assumption}[section]
\usepackage{tabularx}

\usepackage{subcaption}
\usepackage{arydshln}

\definecolor{lightblue}{RGB}{229, 235,253}
\definecolor{lightgray}{RGB}{235,235,235}
\usepackage{float}
\usepackage{wrapfig}
\usepackage{algorithm}
\usepackage{algpseudocode}
\usepackage{algorithmicx}   
\usepackage{mathtools}      
\usepackage{bm}             
\usepackage{dsfont}         
\usepackage{titletoc}

\makeatletter
\newcommand{\enableappendixcontents}{%
  \let\addcontentsline\colm@orig@addcontentsline
}
\makeatother

\title{STARE: Surprisal-Guided Token-Level Advantage Reweighting for Policy Entropy Stability}

{

\setcounter{footnote}{1}  
\author{\textbf{Haipeng Luo \quad
Qingfeng Sun \quad
Songli Wu \quad
Can Xu\thanks{\quad Corresponding authors. This work was done during Luo's internship at Tencent and was supported by the CIE-Tencent Ph.D. Student Research Incentive Program (Tencent Hunyuan Special Fund).}  \quad
Wenfeng Deng} \\
\textbf{Han Hu \quad
Yansong Tang}$^{\dagger}$ \\
Shenzhen International Graduate School, Tsinghua University\\
Tencent Hunyuan \\
\texttt{\{luohp24@mails., tang.yansong@sz.\}tsinghua.edu.cn}\\
\texttt{\{victorqsun,leocaxu\}@tencent.com}
}
}

\begin{document}

\maketitle

\begingroup

\begin{abstract}
Reinforcement Learning with Verifiable Rewards algorithms like GRPO have emerged as the dominant post-training paradigm for complex reasoning in LLMs, yet commonly suffer from policy entropy collapse during training. We conduct a first-order gradient analysis of token-level entropy dynamics under GRPO and identify a token-level credit assignment mismatch: the per-token entropy variation decomposes into the product of the trajectory-level advantage and an entropy sensitivity function over the next-token distribution, yielding an advantage-surprisal four-quadrant structure and a near-criticality property. Motivated by it, we propose STARE (Surprisal-guided Token-level Advantage Reweighting for policy Entropy stability), which identifies entropy-critical token subsets via batch-internal surprisal quantiles, selectively reweights their effective advantages, and incorporates a target-entropy closed-loop gate for stable entropy regulation. Across model scales from 1.5B to 32B and three task families (Short CoT, Long CoT, and Multi-Turn Tool Use), STARE sustains stable RL training over thousands of steps while maintaining policy entropy within the target band. On AIME24 and AIME25, STARE outperforms DAPO and other competitive baselines by 4\%–8\% in average accuracy, with reflection tokens and response length growing in tandem, indicating sustained exploration–exploitation balance that further unlocks RL training potential. Code is available at \url{https://github.com/hp-luo/STARE}

\end{abstract}

\begin{figure}[H]
    \centering
    \begin{subfigure}[t]{0.32\textwidth}\centering
        \includegraphics[width=\linewidth]{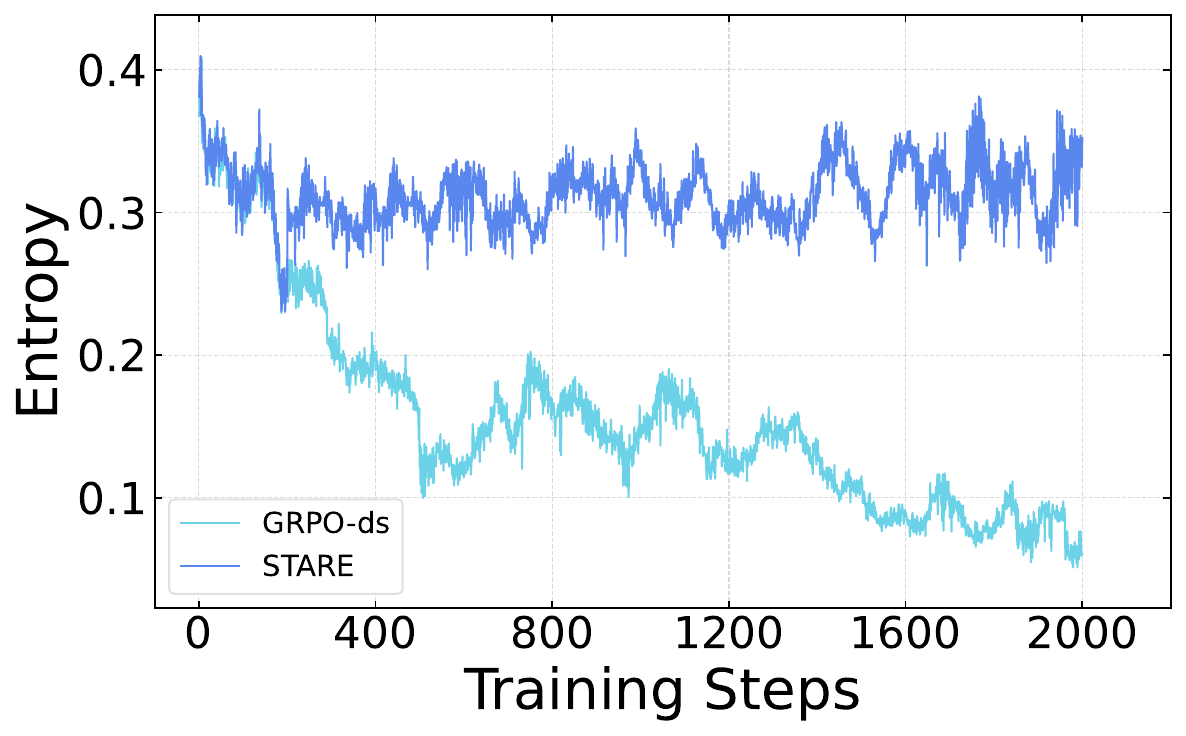}\caption{Training Entropy}\label{fig:7b_agent_sub1_first}
    \end{subfigure}\hfill
    \begin{subfigure}[t]{0.32\textwidth}\centering
        \includegraphics[width=\linewidth]{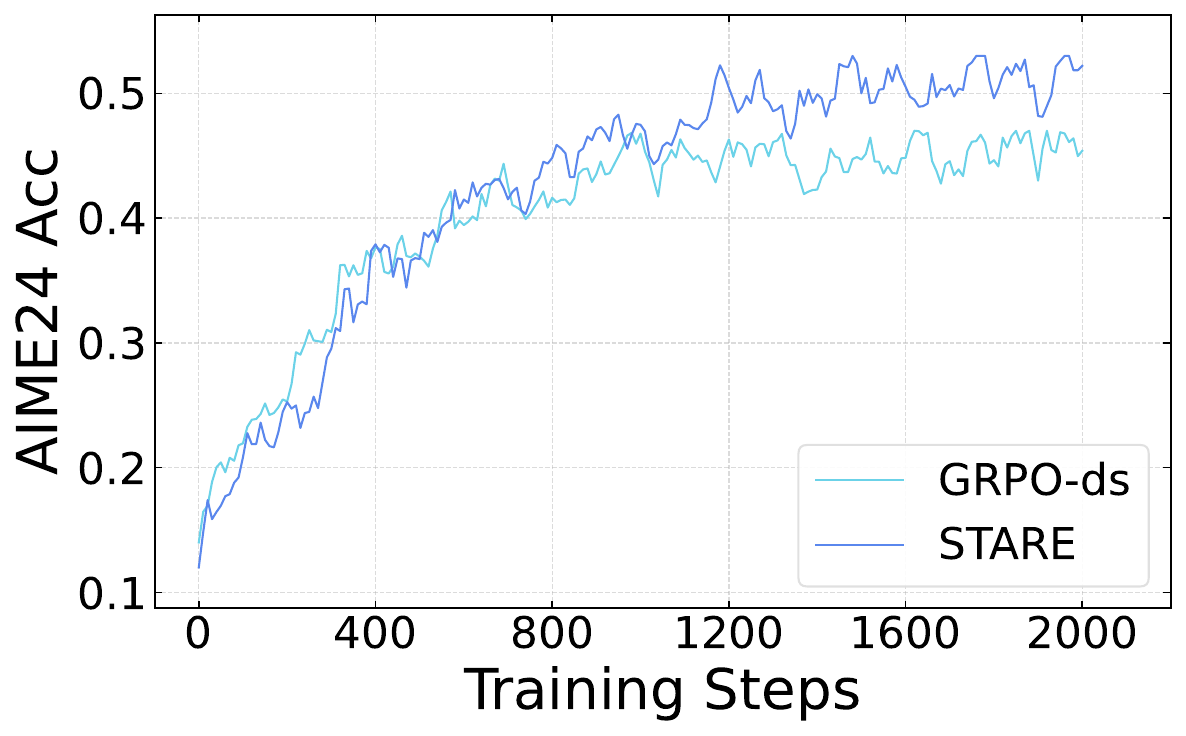}\caption{AIME24 Acc}\label{fig:7b_agent_sub2_first}
    \end{subfigure}\hfill
    \begin{subfigure}[t]{0.32\textwidth}\centering
        \includegraphics[width=\linewidth]{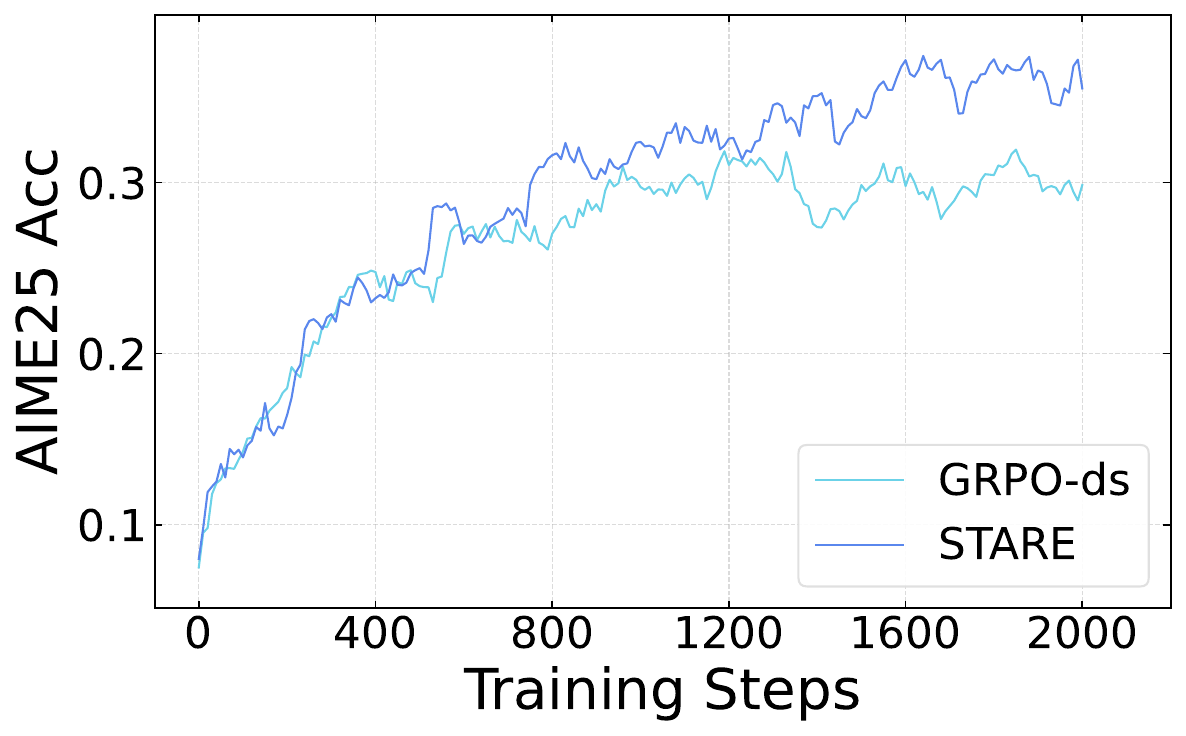}\caption{AIME25 Acc}\label{fig:7b_agent_sub3_first}
    \end{subfigure}
    \caption{Training dynamics of STARE vs. GRPO-ds on Qwen2.5-7B-Base (cold-start SFT from Retool 2K) in the tool-use agent scenario: policy entropy, AIME24 accuracy, and AIME25 accuracy.}\label{fig:qwen7b_agent_first}
\end{figure}

\section{Introduction}

Reinforcement Learning with Verifiable Rewards (RLVR) has emerged as the dominant post-training paradigm for eliciting complex reasoning in LLMs, as exemplified by DeepSeek-R1, Qwen3, and Kimi K1.5\citep{deepseek_r1,openai2024o1,kimi1.5,qwen3technicalreport,claude_example}. Among RLVR algorithms, Group Relative Policy Optimization (GRPO) dispenses with the value network and instead employs group-normalized rewards as the baseline for advantage estimation; and it has been widely adopted in mathematical reasoning, code generation, and is effective at inducing emergent behaviors like long-cot reasoning and self-reflection\citep{shao2024deepseekmathpushinglimitsmathematical,PPO}.

As RL training extends over more optimization steps, however, GRPO-style algorithms commonly suffer from policy entropy collapse: entropy decays rapidly, output diversity vanishes, the policy converges prematurely, and within-group rollouts homogenize, degrading relative advantage estimation and ultimately capping trainable steps, a critical bottleneck for long-horizon post-training\citep{Leap_Does_rl,semantic_entropy}. Existing mitigations fall into three directions. (i) Adjusting the clipping thresholds for the importance-sampling ratio (i.e., DAPO's clip-higher) protects low-probability exploratory tokens, but exerts an asymmetric and uncontrollable effect on entropy and is largely inactive in the on-policy regime where the ratios stay near one\citep{DAPO,chen2026flexible_DID,BAPO,DGPO}. (ii) Asymmetric trajectory-level weighting between positive and negative rollouts: Upweighting rare correct rollouts or biasing updates toward negative samples, provides coarse-grained control\citep{W_Reinforce_Negative_Zhu2025TheSE,A3PO,80_20_Rule,Clip_beta_entropy_dynamic,Lopti,reward_unlikely_grpo,EAPO}. (iii) Entropy-aware advantage reshaping or entropy regularization couples token-level entropy into the advantage, but tends to overamplify high-entropy tokens, induce oscillations, and remain hyperparameter-sensitive\citep{EntropyAdv,Clip_cov,EPO_entropy,skywork-or1,EntropyPIC,Lp_Reg}. These approaches slow entropy decay to varying degrees, yet operate at trajectory or sample granularity, lacking a principled account of the collapse mechanism. Two questions remain open: which tokens drive entropy decay under GRPO, and how strong an intervention suffices to reverse it.

We conduct a first-order gradient analysis of token-level entropy dynamics under GRPO and identify a token-level credit assignment mismatch: although the trajectory-level advantage $\hat{A}_i$ is shared across all tokens within a rollout, the per-token entropy contribution decomposes into the product of $\hat{A}_i$ and a local entropy sensitivity function $\Phi$ determined by the next-token distribution (Section~\ref{sec:theoretical-analysis}). This decomposition yields an advantage--surprisal four-quadrant view: within positive-advantage trajectories, low-surprisal tokens dominate the sampling frequency and drive most entropy-decreasing updates, whereas the rare high-surprisal tokens that could raise entropy are diluted (a mirror-image asymmetry holds for negative-advantage trajectories). We establish a near-criticality property: a mild token-level weight perturbation suffices to flip the sign of entropy evolution and is robust to the specific weight value and beyond the critical threshold, the weight modulates the magnitude rather than the direction of the entropy shift. 

Motivated by these, we propose STARE (Surprisal-guided Token-level Advantage Reweighting for policy Entropy stability), a minimally invasive token-level credit-rebalancing mechanism that operates inside the clipped surrogate of GRPO. STARE selects an entropy-critical token subset via batch-internal surprisal quantiles, thereby selectively amplifying the effective advantage of positive-advantage high-surprisal tokens and attenuating that of negative-advantage high-surprisal tokens. A target-entropy closed-loop gate further governs the intervention: the reweighting is activated to restore exploration when the batch-averaged entropy $\bar{H}_k$ drops below a target level $H_{\text{tgt}}$, and reverts to GRPO once entropy recovers, yielding closed-loop, stable, and low-intrusion entropy regulation.

We validate STARE across multiple model scales and task regimes. On 7B models, STARE stably sustains over 5k RL training steps; on 14B and 32B models, it sustains over 1.5k steps; throughout training, the policy entropy is held within the target band. Across three task families spanning Short CoT, Long CoT, and multi-turn tool-use agents, STARE consistently outperforms DAPO on AIME24 and AIME25 by $4\%$-$8\%$ in average accuracy, with reflection-related tokens and response length growing in tandem, indicating an improved exploration--exploitation balance.

The main contributions of this work are as follows: (i) From a first-order entropy-dynamics analysis, we expose the token-level credit assignment mismatch in GRPO and establish a near-criticality property: a mild weight perturbation suffices to flip the direction of entropy evolution while remaining robust to the weight value. (ii) We propose a surprisal-based advantage reweighting mechanism coupled with a target-entropy closed-loop constraint, achieving stable policy-entropy regulation through a minimal modification to the GRPO objective and sustaining RL training over thousands of steps. (iii) We validate STARE across model scales from 1.5B to 32B and across Short CoT, Long CoT, or multi-turn tool-use regimes, where it maintains stable policy entropy and substantially outperforms DAPO and other baselines by a consistent margin. We defer the discussion of related work to Appendix~\ref{app:related_work}.

\section{Preliminaries}\label{sec:preliminaries}

\textbf{GRPO.}
Given prompt $x$, the old policy $\pi_{\theta_{\mathrm{old}}}$ samples $G$ responses with rewards $\{r_i\}_{i=1}^G$; the group-normalized advantage is $\hat A_i = \bigl(r_i - \operatorname{mean}(\{r_j\})\bigr) / \operatorname{std}(\{r_j\})$.The clipped surrogate objective is:
\[
\mathcal J_{\mathrm{GRPO}}(\theta)
=
\frac{1}{N}
\sum_{i=1}^{B}\sum_{t=1}^{T_i}
\min\!\Big(
\rho_{i,t}(\theta)\,\hat A_i,\;
\operatorname{clip}\!\big(\rho_{i,t}(\theta),\,1{-}\epsilon,\,1{+}\epsilon\big)\,\hat A_i
\Big),
\]
where $\rho_{i,t}(\theta)\triangleq\pi_\theta(o_{i,t}\mid x_i,o_{i,<t})/\pi_{\theta_{\mathrm{old}}}(o_{i,t}\mid x_i,o_{i,<t})$ is the per-token importance ratio, $B$ is the number of responses, $N=\sum_{i=1}^{B}T_i$ is the total token count, and $\beta=0$ throughout (no KL penalty).
Let $c=(x,o_{<t})$ denote context and $\mathcal V$ the vocabulary.
The next-token distribution is parameterized as $\pi_v\triangleq\pi_\theta(v\mid c)=\exp(z_v)/\sum_{v'}\exp(z_{v'})$, with softmax Jacobian $\partial\pi_{v'}/\partial z_v=\pi_{v'}(\delta_{v'v}-\pi_v)$. \textbf{Token Surprisal, Entropy, and Logit updates.}
The \emph{token surprisal} is $\mathfrak{s}_v\triangleq-\ln\pi_v$ and the position-level \emph{policy entropy} is $H\triangleq-\sum_{v}\pi_v\ln\pi_v=\mathbb{E}_{\pi}[\mathfrak{s}]$\cite{shannon1948mathematical,surprisal_oh2024frequencyexplainsinversecorrelation,surprisal_zeng2026pruningunsurprisingefficientllm,oh2023transformer_surprisal,surprisal_smith2013effect}; the batch mean $\bar H\triangleq N^{-1}\sum_{i,t}H_{i,t}$ typically decreases during RL fine-tuning (\emph{entropy collapse}).
In the unclipped regime, the GRPO gradient at a position where token $a$ was sampled yields the logit update $\Delta z_v=\eta\,\hat A\,(\delta_{va}-\pi_v)$ for all $v\in\mathcal V$, where $\eta>0$ is an infinitesimal step size (Appendix~\ref{app:logit-update}).
\begin{lemma}[Entropy gradient w.r.t.\ logits: surprisal-deviation form]\label{lem:entropy-gradient-logits}
For any $v\in\mathcal V$, $\;\frac{\partial H}{\partial z_v}=\pi_v\!\big(\mathfrak{s}_v-H\big)$. Appendix~\ref{app:entropy-gradient-logits} provides the derivation.
\end{lemma}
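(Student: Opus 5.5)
The plan is a direct chain-rule computation through the softmax Jacobian stated in the Preliminaries. I write $H=-\sum_{v'}\pi_{v'}\ln\pi_{v'}$ and differentiate with respect to a single logit $z_v$. This gives
\[
\frac{\partial H}{\partial z_v}=-\sum_{v'}\frac{\partial \pi_{v'}}{\partial z_v}\bigl(\ln\pi_{v'}+1\bigr).
\]

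The first simplification is to drop the constant term. Since $\sum_{v'}\pi_{v'}=1$ identically in $z$, we have $\sum_{v'}\partial\pi_{v'}/\partial z_v=0$, so the ``$+1$'' contributes nothing. This leaves $-\sum_{v'}\frac{\partial\pi_{v'}}{\partial z_v}\ln\pi_{v'}$.

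Next I substitute the softmax Jacobian $\partial\pi_{v'}/\partial z_v=\pi_{v'}(\delta_{v'v}-\pi_v)$ and split the Kronecker delta from the $\pi_v$ term:
\[
\frac{\partial H}{\partial z_v}=-\pi_v\ln\pi_v+\pi_v\sum_{v'}\pi_{v'}\ln\pi_{v'}.
\]
I then recognize $-\ln\pi_v=\mathfrak s_v$ and $\sum_{v'}\pi_{v'}\ln\pi_{v'}=-H$. Substituting both gives $\pi_v\mathfrak s_v-\pi_vH=\pi_v(\mathfrak s_v-H)$, which is the claim.

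There is no real obstacle here; the only points needing care are the sign bookkeeping and the fact that the constant term vanishes by normalization. As a sanity check, I would verify that $\sum_v\partial H/\partial z_v=\sum_v\pi_v\mathfrak s_v-H=0$. This must hold, because $H$ is invariant under adding a common constant to all logits.
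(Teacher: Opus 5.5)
Your proof is correct and follows essentially the same route as the paper: differentiate $H=-\sum_{v'}\pi_{v'}\ln\pi_{v'}$, substitute the softmax Jacobian, and use $\sum_{v'}\pi_{v'}\ln\pi_{v'}=-H$. The only difference is cosmetic: you drop the $+1$ term early via $\sum_{v'}\partial\pi_{v'}/\partial z_v=0$, whereas the paper keeps it and lets it cancel in $-\pi_v(\ln\pi_v+1)+\pi_v(1-H)$.
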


Raising $z_v$ increases entropy when token $v$ is rarer than average ($\mathfrak{s}_v>H$) and decreases it otherwise. It relates entropy to individual logits and underlies the token-level entropy dynamics derived below.

\section{Theoretical Analysis}\label{sec:theoretical-analysis}

We develop a theoretical framework for analyzing entropy evolution during GRPO training, proceeding from token-level gradient analysis (Section~\ref{sec:first-order-gradient}) through an advantage--surprisal decomposition (Section~\ref{sec:four-quadrant}) and a batch-level near-criticality result (Section~\ref{sec:batch-level}) to cross-step feedback dynamics (Section~\ref{sec:cross-step}). Complete proofs are in Appendices~\ref{app:basic-derivations}--\ref{app:cross-step}.

\subsection{First-Order Gradient Analysis of Token-Level Policy Entropy}\label{sec:first-order-gradient}

Consider the next-token distribution $\pi(\cdot\mid c)$ with entropy $H$.
Let $a$ denote the sampled token, with probability $p = \pi(a\mid c)$ and surprisal $\mathfrak{s}_a = -\ln p$.
Define
\(S_2 \triangleq \sum_{v \in \mathcal{V}} \pi_v^2 \!\big(\ln \pi_v + H\big)\)
and
\(\Phi(p) \triangleq p\!\big(\ln p + H\big) - S_2\).
We call $\Phi$ the \textbf{entropy sensitivity function}: it measures the signed excess of the sampled token's probability-weighted surprisal deviation over the distributional baseline $S_2$.

\begin{theorem}[Token-level entropy variation]\label{thm:token-entropy-variation}
In the unclipped regime of GRPO, let $\hat{A}$ denote the advantage at this position, and let $\eta$ be the step size along the GRPO policy-gradient direction. Then
\(\left.\frac{dH}{d\eta}\right|_{\eta=0}
=
-\hat{A}\,\Phi(p)\).
\end{theorem}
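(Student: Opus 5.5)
Our plan is a direct chain-rule computation: first-order entropy change along the logit update. Since $\eta$ is infinitesimal and the logits evolve as $z_v(\eta)=z_v+\eta\,\hat A\,(\delta_{va}-\pi_v)$ in the unclipped regime (Appendix~\ref{app:logit-update}), we have
\[
\left.\frac{dH}{d\eta}\right|_{\eta=0}=\sum_{v\in\mathcal V}\frac{\partial H}{\partial z_v}\,\frac{dz_v}{d\eta}
=\hat A\sum_{v\in\mathcal V}\frac{\partial H}{\partial z_v}\,(\delta_{va}-\pi_v).
\]
We then substitute Lemma~\ref{lem:entropy-gradient-logits}, $\partial H/\partial z_v=\pi_v(\mathfrak{s}_v-H)=-\pi_v(\ln\pi_v+H)$, which gives
\[
\left.\frac{dH}{d\eta}\right|_{\eta=0}=-\hat A\Big[\pi_a(\ln\pi_a+H)-\sum_{v}\pi_v^2(\ln\pi_v+H)\Big].
\]

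The second step is to identify the bracket. The first term is $p(\ln p+H)$ with $p=\pi_a$. The second term is exactly $S_2$ by definition. The bracket is therefore $\Phi(p)$, which yields the claim. As a sanity check we also verify that $\sum_v \partial H/\partial z_v=\sum_v\pi_v(\mathfrak{s}_v-H)=H-H=0$. This confirms the softmax shift-invariance and shows that the $-\pi_v$ baseline piece of the update contributes only through the weighted term $\sum_v\pi_v\,\partial H/\partial z_v$, as used above.

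We expect the main obstacle to lie not in the algebra but in justifying the logit-update form. The GRPO gradient acts on $\theta$, not directly on $z$, so the identity $\Delta z_v=\eta\hat A(\delta_{va}-\pi_v)$ implicitly treats the logits at this context as free parameters, i.e.\ a tabular/softmax idealization. In the unclipped regime, and at $\theta=\theta_{\rm old}$ where $\rho=1$, the per-token surrogate gradient is $\hat A\,\nabla\ln\pi(a\mid c)$. In logit coordinates this is $\hat A(\delta_{va}-\pi_v)$ by the softmax Jacobian. We will invoke that appendix result as given, and note that the $1/N$ normalization is absorbed into $\eta$.
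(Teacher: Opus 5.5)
Your proposal is correct and follows the paper's proof essentially step for step. You contract $\partial H/\partial z_v=\pi_v(\mathfrak{s}_v-H)$ from Lemma~\ref{lem:entropy-gradient-logits} with the logit velocity $\hat A(\delta_{va}-\pi_v)$ from Proposition~\ref{prop:app-logit-update}, then identify the $v=a$ term as $p(\ln p+H)$ and the baseline sum as $S_2$; your caveat about treating the logits at this context as free (tabular) parameters is the same idealization the paper's logit-update proposition adopts.
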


The proof (Appendix~\ref{app:token-entropy-variation}) follows by taking the inner product of $\partial H/\partial z_v$ (Lemma~\ref{lem:entropy-gradient-logits}) with the GRPO logit update.
The result decomposes the instantaneous entropy effect into the advantage $\hat{A}$ and $\Phi(p)$, governing whether probability redistribution concentrates or disperses the distribution. And their product determines the sign and magnitude of entropy variation at each position.

\subsection{Advantage--Surprisal Four-Quadrant Decomposition}\label{sec:four-quadrant}

Determining $\operatorname{sign}(dH/d\eta)$ reduces to analyzing $\operatorname{sign}(\Phi(p))$.
Two properties, proved in Appendix~\ref{app:theory-proofs}, underpin the analysis: $S_2 > 0$ for any non-uniform distribution (Lemma~\ref{lem:app-s2-positive}), and $H > S_2$ for any non-degenerate distribution (Lemma~\ref{lem:app-h-greater-s2}).
These imply $\Phi(0^+) = -S_2 < 0$ and $\Phi(1) = H - S_2 > 0$.
Since $\Phi'(p) = \ln p + H + 1$, the function is strictly increasing on $(e^{-(H+1)}, 1]$, yielding:

\begin{proposition}[Uniqueness of the Critical Surprisal Threshold]\label{prop:critical-surprisal-threshold} For any non-uniform, non-degenerate $\pi$, there exists a unique $p^*\in(e^{-H},1)$ with $\mathfrak{s}^*\triangleq-\ln p^*\in(0,H)$ such that $\Phi(p^*)=0$ and when $\Phi(p)>0 \;\Longleftrightarrow\; p>p^* \;\Longleftrightarrow\; \mathfrak{s}_a<\mathfrak{s}^*$. The proof is in Appendix~\ref{app:critical-surprisal-threshold}. \end{proposition}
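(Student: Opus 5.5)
The plan is to treat $\Phi(p)=p(\ln p+H)-S_2$ as a scalar function of $p\in(0,1]$, holding the distribution (hence $H$ and $S_2$) fixed. We then combine its monotonicity profile with the boundary signs already recorded: $\Phi(0^+)=-S_2<0$ by Lemma~\ref{lem:app-s2-positive}, and $\Phi(1)=H-S_2>0$ by Lemma~\ref{lem:app-h-greater-s2}.

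First I locate where $\Phi$ changes monotonicity. From $\Phi'(p)=\ln p+H+1$, $\Phi$ is strictly decreasing on $(0,e^{-(H+1)})$ and strictly increasing on $(e^{-(H+1)},1]$. On the decreasing branch, $\Phi(p)<\Phi(0^+)=-S_2<0$, so no zero lies in $(0,e^{-(H+1)}]$. On the increasing branch, $\Phi$ is continuous and runs from a negative value to $\Phi(1)>0$, so the intermediate value theorem together with strict monotonicity gives exactly one zero $p^*$. Combining the two branches, $\Phi<0$ on $(0,p^*)$ and $\Phi>0$ on $(p^*,1]$, which is the claimed equivalence $\Phi(p)>0\iff p>p^*$. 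Since $\mathfrak{s}=-\ln p$ is strictly decreasing, this is equivalent to $\mathfrak{s}_a<\mathfrak{s}^*$.

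Next I sharpen the location of the root to $p^*\in(e^{-H},1)$. At $p=e^{-H}$ we have $\ln p+H=0$, so $\Phi(e^{-H})=-S_2<0$. Because $e^{-H}>e^{-(H+1)}$ lies on the increasing branch, the root must satisfy $p^*>e^{-H}$, and $p^*<1$ because $\Phi(1)>0$. Taking logarithms gives $\mathfrak{s}^*\in(0,H)$.

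The only delicate point is the set of hypotheses. Non-uniformity is needed for $S_2>0$, since a uniform distribution gives $S_2=0$ and $\Phi(e^{-H})=0$, so the strict inequality $p^*>e^{-H}$ would fail. Non-degeneracy is needed for $H>S_2$, which gives $\Phi(1)>0$. Both facts are supplied by the appendix lemmas, which we take as given, so the remaining argument is elementary calculus. One also has to confirm that the realized value $p=\pi_a$ lies in $(0,1]$, which is automatic.
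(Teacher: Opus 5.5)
Your proof is correct and follows essentially the same route as the paper's: endpoint signs from Lemmas~\ref{lem:app-s2-positive} and~\ref{lem:app-h-greater-s2}, the derivative $\Phi'(p)=\ln p+H+1$, the value $\Phi(e^{-H})=-S_2<0$, and the intermediate value theorem combined with strict monotonicity. Your split at $e^{-(H+1)}$ into a decreasing and an increasing branch is slightly more complete than the paper's, because it explicitly rules out zeros on $(0,e^{-H}]$; the paper proves monotonicity only on $[e^{-H},1]$ and leaves that step implicit when it asserts the global equivalence $\Phi(p)>0\iff p>p^*$.
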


The threshold $\mathfrak{s}^*$ partitions the vocabulary into a \textbf{low-surprisal region} ($\mathfrak{s}_a<\mathfrak{s}^*$) and a \textbf{high-surprisal region} ($\mathfrak{s}_a>\mathfrak{s}^*$).
Substituting into Theorem~\ref{thm:token-entropy-variation} yields the four-quadrant structure.

\begin{corollary}[Four-Quadrant Decomposition]\label{cor:four-quadrant}
The sign of $dH/d\eta$ is determined by $(\operatorname{sign}\hat{A},\;\mathbf{1}[\mathfrak{s}_a<\mathfrak{s}^*])$:
(i)~reinforcing low-surprisal tokens ($\hat{A}>0,\mathfrak{s}_a<\mathfrak{s}^*$) reduces entropy;
(ii)~reinforcing high-surprisal tokens ($\hat{A}>0,\mathfrak{s}_a>\mathfrak{s}^*$) increases it;
(iii)~suppressing low-surprisal tokens ($\hat{A}<0,\mathfrak{s}_a<\mathfrak{s}^*$) increases it;
(iv)~suppressing high-surprisal tokens ($\hat{A}<0,\mathfrak{s}_a>\mathfrak{s}^*$) reduces it.
Each GRPO step thus propagates four token-level entropy signals with opposing signs. The proof is in Appendix~\ref{app:four-quadrant}.
\end{corollary}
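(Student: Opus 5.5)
The proof is a direct sign analysis that combines Theorem~\ref{thm:token-entropy-variation} with Proposition~\ref{prop:critical-surprisal-threshold}. By Theorem~\ref{thm:token-entropy-variation}, in the unclipped regime $\left.\frac{dH}{d\eta}\right|_{\eta=0}=-\hat A\,\Phi(p)$, so
\[
\operatorname{sign}\!\Big(\tfrac{dH}{d\eta}\Big)=-\operatorname{sign}(\hat A)\cdot\operatorname{sign}(\Phi(p)).
\]
Everything therefore reduces to identifying $\operatorname{sign}(\Phi(p))$ in terms of the indicator $\mathbf 1[\mathfrak s_a<\mathfrak s^*]$. We work under the hypotheses of Proposition~\ref{prop:critical-surprisal-threshold}: $\pi$ is non-uniform and non-degenerate, so that $p^*$ and $\mathfrak s^*$ exist and are unique.

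Proposition~\ref{prop:critical-surprisal-threshold} gives $\Phi(p)>0\iff \mathfrak s_a<\mathfrak s^*$. For the opposite region, I will also show $\Phi(p)<0$ whenever $\mathfrak s_a>\mathfrak s^*$, i.e.\ $p<p^*$, not merely $\Phi(p)\le 0$. I would argue this from the shape of $\Phi$. Since $\Phi'(p)=\ln p+H+1$, $\Phi$ is strictly decreasing on $(0,e^{-(H+1)}]$ and strictly increasing on $[e^{-(H+1)},1]$. Its limit at $0^+$ is $-S_2<0$ by Lemma~\ref{lem:app-s2-positive}. Hence on the decreasing branch $\Phi<-S_2<0$. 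On the increasing branch, $\Phi$ vanishes only at $p^*$, so $\Phi<0$ for $e^{-(H+1)}\le p<p^*$. Together these give $\Phi(p)<0$ for every $p\in(0,p^*)$.

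With these signs, the four cases follow by multiplication:
\begin{itemize}
\item $\hat A>0,\ \Phi>0$ gives $dH/d\eta<0$ (case (i));
\item $\hat A>0,\ \Phi<0$ gives $dH/d\eta>0$ (case (ii));
\item $\hat A<0,\ \Phi>0$ gives $dH/d\eta>0$ (case (iii));
\item $\hat A<0,\ \Phi<0$ gives $dH/d\eta<0$ (case (iv)).
\end{itemize}
The boundary cases $\hat A=0$ or $\mathfrak s_a=\mathfrak s^*$ give zero first-order variation; these are excluded by the strict inequalities in the statement. The closing sentence about four opposing signals is then just a reading of the case table: a GRPO batch generically contains sampled positions in all four quadrants, each contributing an additive term $-\hat A_i\Phi(p_{i,t})$ of the indicated sign.

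The main obstacle is the strict negativity of $\Phi$ on the high-surprisal side, which the Proposition's ``iff'' only gives in the weak form $\Phi\le 0$. The monotonicity and limit argument above settles it, relying only on $S_2>0$ and the uniqueness of the root $p^*$.
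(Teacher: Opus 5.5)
Your proposal is correct and follows the paper's own proof: you substitute Theorem~\ref{thm:token-entropy-variation} to get $dH/d\eta=-\hat A\,\Phi(p)$, read off the sign of $\Phi$ from Proposition~\ref{prop:critical-surprisal-threshold}, and multiply signs in the four cases. Your one addition is the explicit argument that $\Phi(p)<0$ strictly for every $p<p^*$. The paper asserts this directly from the Proposition, whose proof treats the region $p\le e^{-H}$ only implicitly, and your monotonicity-plus-limit argument correctly closes that small gap.
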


\textbf{Asymmetric entropy contributions.}
Since rollouts are sampled from $\pi_\theta$, low-surprisal tokens are drawn more frequently than high-surprisal ones at each decoding position.
Within positive-advantage trajectories, entropy-decreasing tokens (low surprisal, $\Phi > 0$) therefore constitute the statistical majority.
Because GRPO assigns a single trajectory-level $\hat{A}_i$ to all tokens, it cannot distinguish these opposing entropy effects:
the reinforced low-surprisal majority systematically drives the distribution toward concentration, while the high-surprisal minority that could preserve diversity contributes limited entropy-increasing effects.
An analogous asymmetry governs the negative-advantage subset, revealing a fundamental gradient-level mechanism underlying entropy collapse in GRPO.

\begin{figure}[t]
    \centering
    \includegraphics[width=1.0\textwidth]{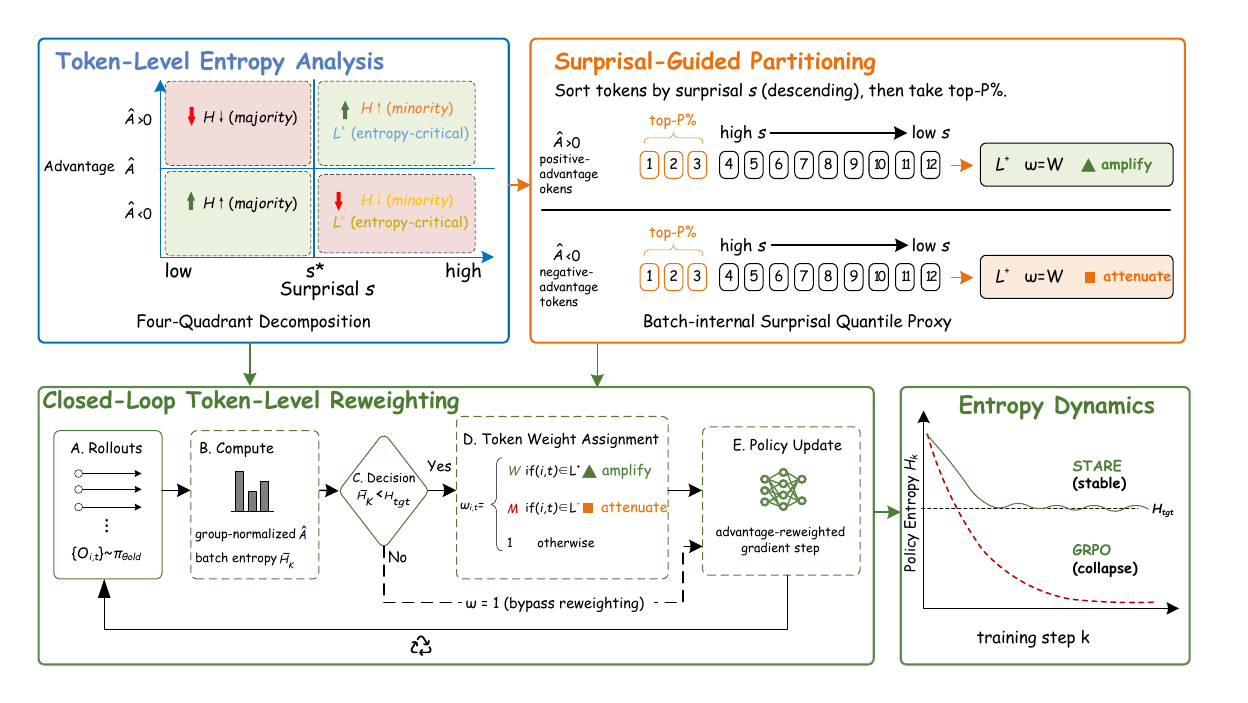}
    \caption{Overview of STARE. Guided by a four-quadrant decomposition of token-level entropy dynamics (top-left) and a batch-internal surprisal-quantile proxy that identifies entropy-critical tokens (top-right), STARE applies target-entropy-gated advantage reweighting in GRPO (bottom-left), stabilizing policy entropy where vanilla GRPO collapses (bottom-right).}
    \label{fig:STARE_figures}
\end{figure}

\subsection{Batch-Level Entropy Decomposition and Near-Criticality}\label{sec:batch-level}

The preceding analysis establishes that the entropy-contribution asymmetry persists within both advantage subsets.
A natural quantitative question arises: what reweighting of the entropy-increasing minority suffices to reverse the batch-level net entropy gradient?

\begin{theorem}[Entropy Neutrality Identity]\label{thm:entropy-neutrality}
For any conditional distribution $\pi$,
$\;\mathbb{E}_{a \sim \pi}\!\big[\Phi(a)\big]
=
\sum_{v \in \mathcal{V}} \pi_v\,\Phi(\pi_v)
=
0$. The proof is in Appendix~\ref{app:entropy-neutrality}.
\end{theorem}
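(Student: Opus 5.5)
The identity should follow by direct substitution of the definitions of $\Phi$ and $S_2$; no earlier lemma beyond these definitions is needed. Writing $\Phi(a)$ for $\Phi(\pi_a)$, the first equality is simply the definition of expectation under $a\sim\pi$: the sampled token $a$ takes value $v$ with probability $\pi_v$, and $\Phi(a)=\pi_v(\ln\pi_v+H)-S_2$ at $a=v$. Note that $S_2$ and $H$ depend only on the full distribution $\pi$, not on the sampled token, so they act as constants inside the expectation.

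The key step is to expand
\[
\sum_{v\in\mathcal V}\pi_v\,\Phi(\pi_v)
=\sum_{v}\pi_v\cdot\pi_v(\ln\pi_v+H)\;-\;S_2\sum_v\pi_v .
\]
The first sum is exactly $\sum_v\pi_v^2(\ln\pi_v+H)=S_2$ by definition. The second term is $S_2\cdot 1$, since $\pi$ is a probability distribution. Hence the expression equals $S_2-S_2=0$.

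There is no real obstacle; the only points requiring care are the following.
\begin{enumerate}
\item The sum must be interpreted over the support of $\pi$, or with the convention $0\ln 0=0$, so that all terms are finite. Under the softmax parameterization every $\pi_v>0$, so this is automatic.
\item The identity holds for every distribution, including uniform or degenerate ones, because neither Lemma~\ref{lem:app-s2-positive} nor Lemma~\ref{lem:app-h-greater-s2} is used.
\end{enumerate}

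A useful sanity check, which I would include as a remark, is that the identity is consistent with Theorem~\ref{thm:token-entropy-variation} and Lemma~\ref{lem:entropy-gradient-logits}. Averaging the logit update over $a\sim\pi$ gives $\mathbb E_a[\delta_{va}-\pi_v]=0$, so the expected logit change, and hence the expected first-order entropy change for a fixed $\hat A$, vanishes. This recovers $\mathbb E_a[\Phi]=0$ from the other direction. It also explains why $S_2$ plays the role of the "distributional baseline" in the definition of $\Phi$.
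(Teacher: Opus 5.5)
Your proposal is correct and matches the paper's proof essentially line for line. Expand $\sum_v \pi_v\Phi(\pi_v)$, recognize the first sum as $S_2$ by definition and the second as $S_2\sum_v\pi_v=S_2$ by normalization, and conclude $S_2-S_2=0$. Your closing sanity check is also a valid independent derivation, though not needed: $\mathbb{E}_{a\sim\pi}[\delta_{va}-\pi_v]=0$ forces the expected first-order entropy change $-\hat A\,\mathbb{E}_{a}[\Phi]$ to vanish.
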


\textbf{Token-level advantage reweighting.}
Define $\mathcal{L}^{+}=\{(i,t):\hat{A}_i>0,\;\mathfrak{s}_{i,t}>\mathfrak{s}^*_{i,t}\}$.
Scaling the effective advantage of each token in $\mathcal{L}^+$ by a multiplicative factor $W\ge 1$, while retaining unit weight at all remaining positions, yields:

\begin{proposition}[Entropy Gradient under Token-Level Reweighting]\label{prop:entropy-gradient-reweighting}
\(\left.\frac{d\bar{H}}{d\eta}\right|_{W}
=
-\frac{1}{N}\big[\Lambda - (W{-}1)\Gamma\big]\),
where
\(\Lambda\triangleq\sum_{i,t}\hat{A}_i\,\Phi_{i,t}\),
\(\Gamma\triangleq\sum_{(i,t)\in\mathcal{L}^{+}}\hat{A}_i\,|\Phi_{i,t}|>0\),
and the critical weight is \(W^*=1+\Lambda/\Gamma\).
\end{proposition}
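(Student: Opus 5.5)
The plan is to treat the batch-averaged entropy $\bar H=N^{-1}\sum_{i,t}H_{i,t}$ position by position. Under the first-order, unclipped model of Theorem~\ref{thm:token-entropy-variation}, each position's entropy responds only to its own logit update, so cross-position coupling through shared parameters is ignored. This is the same modelling convention the theorem already uses.

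At a position $(i,t)$ the logit update becomes $\Delta z_v=\eta\,w_{i,t}\hat A_i(\delta_{va}-\pi_v)$ with weight $w_{i,t}\in\{1,W\}$. Because this update is linear in the advantage, replacing $\hat A_i$ by $w_{i,t}\hat A_i$ in Theorem~\ref{thm:token-entropy-variation} gives
\[
\left.\frac{dH_{i,t}}{d\eta}\right|_{\eta=0}=-w_{i,t}\hat A_i\,\Phi_{i,t}.
\]
Averaging over the batch then yields
\[
\frac{d\bar H}{d\eta}=-\frac1N\sum_{i,t}w_{i,t}\hat A_i\Phi_{i,t}.
\]

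Next I split the weights as $w_{i,t}=1+(W-1)\mathbf 1[(i,t)\in\mathcal L^+]$. The unit part of the sum is exactly $\Lambda$. The extra part is $(W-1)\sum_{\mathcal L^+}\hat A_i\Phi_{i,t}$.

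On $\mathcal L^+$ we have $\mathfrak s_{i,t}>\mathfrak s^*_{i,t}$. By Proposition~\ref{prop:critical-surprisal-threshold}, applied to each position's own non-uniform, non-degenerate distribution, this means $p<p^*$ and hence $\Phi_{i,t}<0$, so $\Phi_{i,t}=-|\Phi_{i,t}|$. Therefore
\[
\sum_{\mathcal L^+}\hat A_i\Phi_{i,t}=-\Gamma,
\qquad
\frac{d\bar H}{d\eta}=-\frac1N\bigl[\Lambda-(W-1)\Gamma\bigr].
\]

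Positivity of $\Gamma$ follows because every summand has $\hat A_i>0$ and $|\Phi_{i,t}|>0$. This needs $\mathcal L^+\neq\emptyset$, which I would state as a standing assumption: some positive-advantage rollout contains at least one high-surprisal token. Tokens landing exactly on $p^*$ give $\Phi=0$ and are excluded by the strict inequality anyway.

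For the critical weight, the expression above is affine and strictly increasing in $W$, with slope $\Gamma/N>0$. It vanishes exactly at $W^*=1+\Lambda/\Gamma$, so $d\bar H/d\eta>0$ if and only if $W>W^*$. In the collapse regime $\Lambda>0$ this gives $W^*>1$, which is consistent with the constraint $W\ge1$. If instead $\Lambda\le 0$, entropy already rises at $W=1$, and $W^*\le1$ is a vacuous threshold.

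I expect the main obstacle to be justifying the position-wise reduction rather than the algebra. Strictly, a parameter step couples all positions, so I will state explicitly that the result is made in the same per-position logit-update model as Theorem~\ref{thm:token-entropy-variation}, with $\bar H$ differentiated along the combined direction. I would also check that the weighted objective stays in the unclipped regime, since scaling the advantage does not change the ratio at $\eta=0$.
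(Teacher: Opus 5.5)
Your proposal is correct and takes essentially the same route as the paper's proof. Both arguments use linearity of Theorem~\ref{thm:token-entropy-variation} to scale each position's contribution $-\hat A_i\Phi_{i,t}$ by its weight, split off the $(W-1)$ correction on $\mathcal{L}^+$, apply Proposition~\ref{prop:critical-surprisal-threshold} to get $\Phi_{i,t}<0$ there, and read off $W^*=1+\Lambda/\Gamma$ from the affine dependence on $W$; your added caveats (the per-position logit model, $\mathcal{L}^+\neq\emptyset$ for $\Gamma>0$, and the $\Lambda\le 0$ case) match or slightly sharpen what the paper states.
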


The proof is in Appendix~\ref{app:entropy-gradient-reweighting}.
Two structural properties jointly ensure that $W^*$ remains near unity (\textbf{near-critical regime}).
First, the entropy neutrality identity guarantees that $\Lambda$ is a \emph{weak residual} arising solely from the statistical dependence between trajectory-level advantages and token-level entropy sensitivities; under the assumptions formalized in Appendix~\ref{app:near-criticality-assumptions},
$|\Lambda|/\Sigma_{\mathrm{abs}} = O(T^{-1})$
where $\Sigma_{\mathrm{abs}} = \sum_{i,t}|\hat{A}_i|\,|\Phi_{i,t}|$.
Second, \emph{high-surprisal tokens carry amplified entropy sensitivity} (Appendix~\ref{app:auxiliary-bounds}):
$\mathfrak{s}\ge H \Rightarrow |\Phi(p)|\ge S_2$,
whereas
$\mathfrak{s}<\mathfrak{s}^* \Rightarrow |\Phi(p)|\le H-S_2$.
Although $\mathcal{L}^+$ is a sampling minority, the amplified per-token $|\Phi|$ values ensure that $\Gamma$ remains appreciable.

\begin{corollary}[Near-Criticality]\label{thm:near-criticality}
When the sequence length $T$ and the batch size are both sufficiently large,
$\;W^* - 1 = \Lambda/\Gamma = O(T^{-1})$ (Appendix~\ref{app:near-criticality}). So beyond the critical threshold, the specific value of $W$ principally controls the magnitude rather than the sign of the per-step entropy shift.
\end{corollary}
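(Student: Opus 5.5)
The plan is to bound numerator and denominator of $W^*-1=\Lambda/\Gamma$ separately against the common normalizer $\Sigma_{\mathrm{abs}}=\sum_{i,t}|\hat A_i|\,|\Phi_{i,t}|$. The identity $W^*-1=\Lambda/\Gamma$ is immediate from Proposition~\ref{prop:entropy-gradient-reweighting}: set the bracket $\Lambda-(W-1)\Gamma$ to zero, using $\Gamma>0$. The claim that $W$ controls only the magnitude beyond threshold also follows directly from that proposition. The bracket is affine in $W$ with slope $-\Gamma<0$, so for $W>W^*$ the sign of $d\bar H/d\eta$ is fixed (positive). Its size is $(W-W^*)\Gamma/N$, which grows linearly in $W$. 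So everything reduces to showing $|\Lambda|/\Gamma=O(T^{-1})$.

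\textbf{Numerator.} I would invoke the near-criticality assumptions of Appendix~\ref{app:near-criticality-assumptions} together with the Entropy Neutrality Identity (Theorem~\ref{thm:entropy-neutrality}). Condition on each context $c_{i,t}$: the sampled token has $\mathbb{E}_{a\sim\pi}[\Phi]=0$. Hence $\Lambda=\sum_i \hat A_i\sum_t\Phi_{i,t}$ has mean driven only by the dependence between $\hat A_i$ and the per-trajectory sum $\sum_t\Phi_{i,t}$. Under the stated assumption (weak dependence, e.g.\ the covariance of $\hat A_i$ with a single token's $\Phi$ being $O(1/T)$ of the typical $|\Phi|$ scale), this gives $|\Lambda|\le C_1\,\Sigma_{\mathrm{abs}}/T$. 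The fluctuation part is a martingale-difference sum, which concentrates at $O(\sqrt{N})$ and is negligible relative to $\Sigma_{\mathrm{abs}}=\Theta(N)$ when the batch is large. That is where large batch size enters.

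\textbf{Denominator.} I would show $\Gamma\ge c\,\Sigma_{\mathrm{abs}}$ for a constant $c>0$ independent of $T$. Neutrality gives, per position, $\mathbb{E}_\pi[\Phi^+]=\mathbb{E}_\pi[\Phi^-]=\tfrac12\mathbb{E}_\pi|\Phi|$. So the high-surprisal tokens (exactly those with $\Phi<0$ by Proposition~\ref{prop:critical-surprisal-threshold}) carry half of the expected absolute sensitivity, despite being a sampling minority. The auxiliary bounds ($\mathfrak{s}\ge H\Rightarrow|\Phi|\ge S_2$) help make this concrete. Restricting to positive-advantage trajectories and again using weak dependence between $\hat A_i$ and the $\Phi$'s, together with a lower bound on the fraction of $|\hat A|$-mass on positive trajectories (group normalization makes $\sum_i\hat A_i=0$ per group, so positive and negative masses are equal), I expect $\Gamma\approx\tfrac14\Sigma_{\mathrm{abs}}$ up to $O(1/T)$ and sampling corrections. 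Combining the two bounds yields $|W^*-1|\le 4C_1/T+o(1)$, i.e.\ $O(T^{-1})$.

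\textbf{Main obstacle.} The hard step is the numerator: making precise that the residual $\Lambda$ is $O(\Sigma_{\mathrm{abs}}/T)$ rather than $O(\Sigma_{\mathrm{abs}}/\sqrt{T})$. The rewards, and hence $\hat A_i$, genuinely depend on the sampled tokens. My first attempt is to use the assumption that each token's influence on the trajectory reward is $O(1/T)$ (a bounded-differences, leave-one-token-out argument). I would write $\hat A_i=\hat A_i^{(-t)}+O(1/T)$, where $\hat A_i^{(-t)}$ is independent of token $t$ given its context. Neutrality then kills the main term, leaving the $O(1/T)$ perturbation times $|\Phi_{i,t}|$. Summing over positions gives the claimed bound.
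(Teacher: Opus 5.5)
Your reduction to $|\Lambda|/\Gamma$, your sign-versus-magnitude reading of Proposition~\ref{prop:entropy-gradient-reweighting}, and your numerator argument all match the paper. Lemma~\ref{lem:app-lambda-weak-residual} likewise conditions on $c_{i,t}$ and uses Theorem~\ref{thm:entropy-neutrality} to remove the $\pi$-mean of $g_c(v)=\mathbb{E}[\hat A_i\mid c_{i,t}=c,\,o_{i,t}=v]$. It then bounds the remainder by $(C_g/T)\,\Psi(c)$, with $\Psi(c)=\sum_v\pi_v|\Phi_c(v)|$. One thing you leave implicit: turning $\tfrac{C}{T}\sum_{i,t}\Psi(c_{i,t})$ into $\tfrac{C}{T}\Sigma_{\mathrm{abs}}$ needs a lower bound on the advantage scale. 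The paper states this as Assumption~\ref{assump:advantage-scale}.

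The genuine gap is in the denominator. Your half-split $\mathbb{E}_{\pi}[\Phi^-]=\mathbb{E}_{\pi}[\Phi^+]=\tfrac12\Psi(c)$ is correct. With a dilution hypothesis for $\hat A^+$, it gives $\mathbb{E}[\Gamma]\approx\tfrac12\,\mathbb{E}\bigl[\sum_i\hat A_i^+\Psi_i\bigr]$ and $\mathbb{E}[\Sigma_{\mathrm{abs}}]\approx\mathbb{E}\bigl[\sum_i|\hat A_i|\Psi_i\bigr]$, where $\Psi_i=\sum_{t\le T_i}\Psi(c_{i,t})$. Group normalization, $\sum_{i\in\text{group}}\hat A_i=0$, balances only the \emph{unweighted} positive and negative advantage mass. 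Here, however, each rollout is weighted by $\Psi_i$, which scales with its length and its per-token uncertainty. Both correlate with correctness: failed rollouts tend to be longer and less confident, and successful ones can be nearly deterministic. So nothing you assume keeps the positive share $\kappa=\sum_i\hat A_i^+\Psi_i/\sum_i|\hat A_i|\Psi_i$ near $\tfrac12$, or even bounded away from zero uniformly in $T$. Hence neither $\Gamma\approx\tfrac14\Sigma_{\mathrm{abs}}$ nor $\Gamma\ge c\,\Sigma_{\mathrm{abs}}$ follows.

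This step cannot be extracted from the structure; it has to be assumed. That is exactly the role of Assumptions~\ref{assump:high-surprisal-mass} and~\ref{assump:linear-absolute-scale} in the paper. Combined with $\mathcal H^+\subset\mathcal L^+$ and Lemma~\ref{lem:app-phi-lower-high-surprisal}, they give $\Gamma\ge c_H\rho_H N$ and $\Sigma_{\mathrm{abs}}\le c_\Sigma N$.

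There is a second, smaller issue. The paper's in-expectation dilution (Assumption~\ref{assump:credit-dilution}) controls $\mathbb{E}[\hat A_i\mid c,v]$ but not $\mathbb{E}[\hat A_i^+\mid c,v]$. Your $\hat A^+$ step therefore relies on your much stronger pathwise leave-one-out hypothesis, or on a dilution assumption stated directly for $\hat A^+$.

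With both additions, your route is actually sharper than the paper's. It gives $\Gamma\gtrsim\tfrac{\kappa}{2}\Sigma_{\mathrm{abs}}$ using all of $\mathcal L^+$, rather than only the subset $\mathcal H^+$ where $|\Phi|\ge S_2$.
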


\subsection{Cross-Step Entropy Dynamics}\label{sec:cross-step}

Let $\Delta\bar{H}_k = -N^{-1}[\Lambda_k - (W{-}1)\,\Gamma_k]$ denote the batch-level entropy variation at step $k$.
Under $W=1$, the sampling asymmetry implies $\Lambda_k>0$ in expectation, so $\Delta\bar{H}_k<0$.
The resulting entropy reduction further concentrates $\pi_\theta$, lowering the sampling frequency of high-surprisal tokens in subsequent batches, shrinking $\Gamma_{k+1}$, and raising $W^*_{k+1}$, forming a \textbf{self-reinforcing loop of entropy collapse}.
Conversely, when $W>W^*_k$, entropy increases disperse the distribution, enlarge $\Gamma_{k+1}$, and lower $W^*_{k+1}$, forming a symmetric loop of \textbf{entropy recovery} (formalized in Appendix~\ref{app:cross-step}).
The near-criticality result (corollary~\ref{thm:near-criticality}) establishes that a modest token-level weight adjustment suffices to alter the macroscopic entropy trajectory.
This theoretical insight directly motivates the algorithmic design proposed in the next section: by selectively modulating the effective advantage weights of a targeted subset of tokens within the GRPO policy gradient, one can restore a sustainable dynamic equilibrium between the entropy-increasing and entropy-decreasing gradient forces.

\section{Method}\label{sec:method}

The theoretical analysis in Section~\ref{sec:theoretical-analysis} reveals that GRPO's shared trajectory-level advantages induce a token-level credit assignment mismatch: high-frequency low-surprisal tokens dominate gradient aggregation while sparse high-surprisal tokens with critical entropy effects are under-represented. Motivated by it, we propose \textbf{S}urprisal-Guided \textbf{T}oken-Level \textbf{A}dvantage \textbf{R}eweighting For Policy \textbf{E}ntropy Stability \textbf{(STARE)}: a surprisal-based credit rebalancing mechanism that operates within the clipped surrogate of GRPO, assigns differentiated weights to entropy-critical tokens, and incorporates target-entropy closed-loop feedback for stable entropy regulation on training,  as shown in Figure~\ref{fig:STARE_figures}.

\subsection{Entropy-Critical Token Partitioning via High-Surprisal Quantiles}\label{sec:entropy-critical-partition}

Let $\mathcal{T}^{+}=\{(i,t):\hat{A}_i>0\}$ and $\mathcal{T}^{-}=\{(i,t):\hat{A}_i<0\}$ denote the positive- and negative-advantage token sets.  Computing the exact critical threshold $\mathfrak{s}^*_{i,t}$ (Proposition~\ref{prop:critical-surprisal-threshold}) at every position requires the full conditional distribution, incurring prohibitive overhead. Instead, STARE employs a simple, stable, and theoretically motivated
\textbf{batch-internal surprisal-quantile proxy}. Concretely, within $\mathcal{T}^+$
and $\mathcal{T}^-$ separately, tokens are ranked in descending order of surprisal 
$\mathfrak{s}_{i,t}=-\ln\pi_\theta(o_{i,t}\mid x_i,o_{i,<t})$, and the
top $P\%$ tokens are selected to form the entropy-critical sets below:
\[
\mathcal{L}^{\pm}
=
\bigl\{(i,t)\in\mathcal{T}^{\pm}:\mathfrak{s}_{i,t}\ge Q_{P}\!\big(\{\mathfrak{s}_{j,s}\}_{(j,s)\in\mathcal{T}^{\pm}}\big)\bigr\}.
\]
By Corollary~\ref{cor:four-quadrant}, $\mathcal{L}^+$ approximately denotes the entropy-increasing tokens among positive-advantage responses, and $\mathcal{L}^-$ indicates the entropy-decreasing tokens in negative-advantage responses. The fixed proportion $P$ directly controls the intervention scale, obviating per-position threshold computation.

\subsection{Advantage-Conditioned Token-Level Credit Rebalancing}\label{sec:token-credit-rebalancing}

We augment the GRPO objective with positive token-level weights $\omega_{i,t}>0$:
\[
\mathcal{J}_{\mathrm{STARE}}(\theta)
=
\frac{1}{N}
\sum_{i,t}
\textcolor{red}{\omega_{i,t}}\;
\min\!\Big(
\rho_{i,t}(\theta)\,\hat{A}_i,\;
\operatorname{clip}\!\big(\rho_{i,t}(\theta),\,1{-}\epsilon,\,1{+}\epsilon\big)\,\hat{A}_i
\Big).
\]
Setting $\omega_{i,t}\equiv 1$ recovers STARE to standard GRPO. Because $\omega_{i,t}>0$, STARE preserves all token-level gradient directions: tokens with positive advantage remain reinforced, while those with negative advantage remain suppressed. STARE therefore acts as an advantage-conditioned credit-rebalancing mechanism, selectively rescaling relative magnitudes along the surprisal dimension.

{\bfseries\boldmath Variant~I: One-Sided Entropy Amplification ($\hat{A}_i>0$, High-Surprisal tokens, denoted as O1).}
\[
\omega_{i,t}^{(\mathrm{V1})}
=
\begin{cases}
W, & (i,t)\in\mathcal{L}^{+},\\
1, & \text{otherwise},
\end{cases}
\qquad W>1.
\]
Tokens in $\mathcal{L}^{+}$ simultaneously carry positive advantage and entropy-increasing effect; amplifying their weights directly strengthens the minority that GRPO systematically underweights. The resulting batch-level net entropy shift is $\Lambda_{\mathrm{V1}}=\Lambda-(W{-}1)\,\Gamma^{+}$, where $\Gamma^{+}=\sum_{(i,t)\in\mathcal{L}^{+}}|\hat{A}_i\,\Phi_{i,t}|>0$. And under the near-criticality condition (Corollary~\ref{thm:near-criticality}), a moderate $W>1$ usually suffices to reverse the sign of the batch net entropy shift. We provide the Algorithmic~\ref{alg:stare-o1} pseudocode in the Appendix~\ref{app:Algorithm_stare}.

\textbf{Variant~II: Dual-Sided Entropy Regulation.} Extending Variant I, this variant additionally attenuates the weights of tokens in $\mathcal{L}^{-}$ ($\hat{A}_i<0$, High-Surprisal tokens, denoted as C2):
\[
\omega_{i,t}^{(\mathrm{V2})}
=
\begin{cases}
W, & (i,t)\in\mathcal{L}^{+},\\
M, & (i,t)\in\mathcal{L}^{-},\\
1, & \text{otherwise},
\end{cases}
\qquad W>1,\;0<M<1.
\]
Tokens in $\mathcal{L}^-$ reside in the high-surprisal tail of negative-advantage responses; and large negative-advantage updates on these tokens redistribute mass from the tail toward the peak, exacerbating concentration. Attenuating their weights alleviates this entropy-decreasing pressure. Letting $\Gamma^{-}=\sum_{(i,t)\in\mathcal{L}^{-}}|\hat{A}_i\,\Phi_{i,t}|$, the batch net entropy shift becomes $\Lambda_{\mathrm{V2}}=\Lambda-(W{-}1)\,\Gamma^{+}-(1{-}M)\,\Gamma^{-}$. 

Two-sided regulation simultaneously amplifies the entropy-increasing signal and attenuates the entropy-decreasing signal to adjust policy entropy. Definitions and analysis for all four single-polarity (O1, O2, O3, O4) and four combined operations (C1, C2, C3, C4) are deferred to Appendices~\ref{app:single-polarity} and~\ref{app:combined-operations}.

\subsection{Closed-Loop Regulation via Target-Entropy Gating}\label{sec:target-entropy-gating}
A purely open-loop reweighting strategy risks overshooting from entropy collapse into uncontrolled divergence. To achieve stable regulation, we introduce a batch-level target entropy $H_{\mathrm{tgt}}$ and employ the current batch mean entropy $\bar{H}_k$ as a closed-loop feedback signal via a binary gate $g_k=\mathbf{1}[\bar{H}_k<H_{\mathrm{tgt}}]$ and express the weights in unified form below. When $\bar{H}_k<H_{\mathrm{tgt}}$, the gate activates  and 
\[
\omega_{i,t}
=
\begin{cases}
1+g_k\,(W{-}1), & (i,t)\in\mathcal{L}^{+},\\
1-g_k\,(1{-}M), & (i,t)\in\mathcal{L}^{-}\;\text{(two-sided only)},\\
1, & \text{otherwise}.
\end{cases}
\]
 STARE  strengthens entropy-increasing signals; when $\bar{H}_k\ge H_{\mathrm{tgt}}$, all weights revert to unity, automatically recovering standard GRPO. This drives entropy toward $H_{\mathrm{tgt}}$ via bounded oscillation. Finer-grained sample-level and token-level closed-loop variants are presented in Appendix~\ref{app:closed-loop-granularities}.

\subsection{Static and Adaptive Weighting Schedules}\label{sec:weighting-schedules}

\textbf{Fixed weights.} Near-criticality (Corollary~\ref{thm:near-criticality}) implies that the required reweight perturbation is typically modest: beyond the critical point, the specific value principally controls the magnitude rather than the direction of the per-step entropy shift, reducing sensitivity to hyperparameter choices. Fixed $W$ and $M$ therefore suffice in most settings.

\textbf{Adaptive weights.} As training progresses, intervention strength may vary across phases with distinct distributional. STARE also supports adaptive weight updates driven by the target-entropy signal:
\begin{align*}
W_{k+1} &= \operatorname{clip}\!\Big(W_k + \alpha\,\operatorname{sgn}\big(H_{\mathrm{tgt}} - \bar{H}_k\big),\;[1,\,W_{\max}]\Big),\\
M_{k+1} &= \operatorname{clip}\!\Big(M_k - \alpha\,\operatorname{sgn}\big(H_{\mathrm{tgt}} - \bar{H}_k\big),\;[M_{\min},\,1]\Big).
\end{align*}
When $\bar{H}_k<H_{\mathrm{tgt}}$, $W$ increases and $M$ decreases, intensifying the intervention; otherwise both relax toward GRPO. The constraints $W\ge 1$ and $M\le 1$ ensure graceful degradation. Setting $\alpha=0$ recovers the fixed-weight. Default values $\alpha{=}0.01$, $W_{\max}{=}1.5$, $M_{\min}{=}0.5$ yield robust performance.

\textbf{Default configuration.} All main experiments adopt Variant~I (O1) with batch-level target-entropy gating and fixed weights. This minimal configuration suffices to stabilize entropy and improve performance. Ablations on two-sided regulation and adaptive weights are provided in Appendix~\ref{app:combined-operations}.


\section{Experiments}
\subsection{Experimental Setup}

\textbf{Models and scenarios.} We systematically evaluate STARE across three scenarios. In the Short CoT scenario, we use Qwen2.5-Math-7B-Base with a maximum decoding length of 4k, Qwen2.5-14B-Instruct with 8k, and Qwen2.5-32B-Base with 8k\citep{qwen25math,qwen2025qwen25technicalreport}. In the Long CoT scenario, we employ DeepSeek-R1-Distill-Qwen-1.5B and Qwen3-8B-Base with 16k to elicit deep reasoning and self-reflection\citep{deepseek_r1,qwen3technicalreport}. In the tool-use scenario, we first perform cold-start SFT on Qwen2.5-7B-Base using Retool 2K data\citep{retool}, then conduct RL training with 8k length.

\textbf{Training.} We use a learning rate of $1\times 10^{-6}$ and a batch size of $64$ samples with $8$ rollouts per sample, and on-policy updates with a single gradient step per batch. Decoding adopts Top-$p\!=\!1.0$ and temperature $T\!=\!1.0$. The STARE hyperparameters are set to $W\!=\!1.1$, $M\!=\!0.9$, $H_{\text{tgt}}\!=\!0.3$, and $P\%\!=\!10\%$, with batch-level gating and fixed weights as the default configuration. The training corpus consists of 100k samples deduplicated and sampled from open-source RL datasets including DeepScaler, Skywork-o1, Polaris, and DAPO\citep{deepscaler2025,skywork-or1,Polaris2025,DAPO}. We construct an 
\begin{wraptable}[43]{r}{0.54\textwidth}
\centering
\caption{Performance comparison of STARE and competitive RL algorithms on six math benchmarks across 1.5B–32B scales and three scenarios (Acc avg@N); $^{\dagger}$ denotes results from prior works.}
\label{tab:main_results}
\scalebox{0.50}{
\begin{tabular}{lccccccc}
\toprule
\textbf{Method} & \textbf{AIME24} & \textbf{AIME25} & \textbf{AMC} & \textbf{MATH} & \textbf{Minerva} & \textbf{Olympiad} & \textbf{Avg.} \\
\midrule
\rowcolor{lightblue}
\multicolumn{8}{c}{\textbf{In the Short CoT Scenario}} \\
\midrule
\multicolumn{8}{c}{\textbf{\textit{RL from the Qwen2.5-Math-7B-Base}}} \\
\hdashline
W-REINF$^\dagger$\citep{W_Reinforce_Negative_Zhu2025TheSE} & 31.2 & 10.1 & 58.1 & 76.2 & 34.2 & 38.9 & 41.6 \\
GRPO$^\dagger$\citep{shao2024deepseekmathpushinglimitsmathematical} & 34.0 & 9.3 & 58.6 & 79.9 & 38.1 & 42.8 & 43.8 \\
EntroReg$^\dagger$\citep{EntropyAdv} & 34.6 & 12.6 & 61.1 & 82.7 & 43.1 & 42.8 & 46.2 \\
DAPO$^\dagger$\citep{DAPO} & 35.7 & 17.1 & 61.0 & 82.6 & 42.8 & 43.9 & 47.2 \\
80/20 Rule$^\dagger$\citep{80_20_Rule} & 35.8 & 15.1 & 63.7 & 84.1 & 43.0 & 45.5 & 47.9 \\
EntroAdv$^\dagger$\citep{EntropyAdv} & 36.8 & 16.3 & 63.8 & 83.5 & 42.7 & 44.5 & 47.9 \\
STEER$^\dagger$\citep{STEER} & 36.9 & 16.2 & 72.2 & 82.4 & 41.7 & 43.3 & 49.1 \\
GRPO-ds & 37.1 & 17.7 & 75.3 & 82.7 & 39.4 & 42.6 & 49.1 \\
KL-Cov$^\dagger$\citep{Clip_cov} & 38.9 & 13.8 & 59.2 & 81.5 & 40.9 & 44.2 & 46.4 \\
EAPO$^\dagger$\citep{EAPO} & 39.8 & 17.2 & 62.1 & 83.7 & 43.9 & 45.1 & 48.6 \\
\rowcolor{lightgray}
STARE-O1 & 44.2 & 23.8 & 83.4 & 86.1 & 44.2 & 44.7 & 54.4 \\
\rowcolor{lightgray}
STARE-C2 & 42.9 & 24.2 & 84.1 & 85.8 & 44.7 & 45.3 & 54.5 \\
\midrule
\multicolumn{8}{c}{\textbf{\textit{RL from the Qwen2.5-14B-Instruct}}} \\
\hdashline
Base-14B$^\dagger$\citep{qwen2025qwen25technicalreport} & 12.1 & 11.7 & - & - & - & - & - \\
GRPO$^\dagger$\citep{shao2024deepseekmathpushinglimitsmathematical} & 22.5 & 17.6 & - & - & - & - & - \\
GRPO-$\mathrm{Clip}_{\nu}$$^\dagger$\citep{Clip_beta_entropy_dynamic} & 23.4 & 21.4 & - &  & - & - & - \\
GRPO-ds & 24.2 & 21.9 & 69.1 & 80.6 & 37.2 & 43.4 & 46.1 \\
\rowcolor{lightgray}
STARE-O1 & 30.8 & 27.1 & 77.5 & 85.4 & 40.5 & 50.9 & 52.0 \\
\rowcolor{lightgray}
STARE-C2 & 31.5 & 28.3 & 76.3 & 86.1 & 40.2 & 51.4 & 52.3 \\
\midrule
\multicolumn{8}{c}{\textbf{\textit{RL from the Qwen2.5-32B-Base}}} \\
\hdashline
GRPO$^\dagger$\citep{shao2024deepseekmathpushinglimitsmathematical} & 28.5 & 22.5 & - & 86.6 & 44.9 & 60.3 & - \\
80/20 Rule$^\dagger$\citep{80_20_Rule} & 32.5 & 28.5 & - & 89.4 & 45.6 & 57.6 & - \\
GSPO$^\dagger$\citep{gspo} & 33.3 & 22.3 & - & 87.6 & 48.5 & 55.6 & - \\
Lp-Reg$^\dagger$\citep{Lp_Reg} & 38.1 & 27.1 & - & 90.0 & 46.32 & 61.2 & - \\
DAPO$^\dagger$\citep{DAPO} & 38.3 & 29.8 & - & 87.6 & 48.5 & 55.6 & - \\
GRPO-ds & 38.5 & 28.8 & 85.3 & 85.6 & 44.6 & 54.0 & 56.1 \\
\rowcolor{lightgray}
STARE-O1 & 43.3 & 34.1 & 87.3 & 90.4 & 48.8 & 60.1 & 60.7 \\
\rowcolor{lightgray}
STARE-C2 & 42.9 & 35.7 & 88.8 & 90.6 & 49.3 & 60.9 & 61.4 \\
\midrule
\rowcolor{lightblue}
\multicolumn{8}{c}{\textbf{In the Long CoT Scenario}} \\
\midrule
\multicolumn{8}{c}{\textbf{\textit{RL from the DeepSeek-R1-Distill-Qwen-1.5B}}} \\
\hdashline
CE-GPPO$^\dagger$\citep{GPPO} & 29.6 & 23.5 & 73.5 & 76.3 & 26.6 & 43.9 & 45.6 \\
Base-1.5B$^\dagger$\citep{deepseek_r1} & 32.5 & 24.3 & 69.4 & 85.7 & 37.5 & 55.2 & 50.7 \\
EntroReg$^\dagger$\citep{EntropyAdv} & 34.6 & 26.4 & 70.2 & 86.8 & 37.6 & 56.2 & 51.9 \\
CISPO$^\dagger$\citep{CISPO} & 34.8 & 25.8 & 76.9 & 76.8 & 26.5 & 45.8 & 48.4 \\
W-REINF$^\dagger$\citep{W_Reinforce_Negative_Zhu2025TheSE} & 35.0 & 25.0 & 69.8 & 87.5 & 37.8 & 55.9 & 51.8 \\
CE-GPPO$^\dagger$\citep{GPPO} & 35.1 & 27.7 & 82.5 & 76.7 & 27.8 & 45.6 & 49.2 \\
ASPO$^\dagger$\citep{ASPO} & 36.4 & 28.3 & 83.1 & 74.6 & 26.0 & 44.9 & 48.9 \\
GRPO$^\dagger$\citep{shao2024deepseekmathpushinglimitsmathematical} & 38.1 & 27.1 & 71.4 & 88.9 & 39.4 & 58.7 & 53.9 \\
DAPO$^\dagger$\citep{DAPO} & 40.9 & 28.6 & 73.6 & 89.9 & 39.1 & 58.4 & 55.1 \\
DGPO$^\dagger$\citep{DGPO} & 43.3 & 32.8 & 86.0 & 77.9 & 28.2 & 48.0 & 52.7 \\
KL-Cov$^\dagger$\citep{Clip_cov} & 43.9 & 30.1 & 75.0 & 90.0 & 40.5 & 59.9 & 56.5 \\
EntroAdv$^\dagger$\citep{EntropyAdv} & 44.0 & 30.4 & 73.9 & 90.2 & 40.4 & 59.1 & 56.3 \\
EAPO$^\dagger$\citep{EAPO} & 45.1 & 30.1 & 75.5 & 91.1 & 39.7 & 60.8 & 57.0 \\
GRPO-ds & 50.4 & 37.4 & 85.9 & 88.3 & 49.2 & 64.1 & 62.5 \\
JustRL$^\dagger$\citep{he2025justrlscaling15bllm} & 52.6 & 38.8 & 91.0 & 91.7 & 51.5 & 68.0 & 65.6 \\
\rowcolor{lightgray}
STARE-O1 & 53.8 & 41.5 & 89.5 & 90.4 & 52.6 & 67.8 & 65.9 \\
\rowcolor{lightgray}
STARE-C2 & 53.1 & 40.5 & 91.3 & 92.1 & 51.8 & 68.8 & 66.3 \\
\midrule
\multicolumn{8}{c}{\textbf{\textit{RL from the Qwen3-8B-Base}}} \\
\hdashline
GRPO$^\dagger$\citep{shao2024deepseekmathpushinglimitsmathematical} & 31.3 & 24.7 & 75.2 & 88.9 & 55.9 & 61.5 & 56.2 \\
80/20 Rule$^\dagger$\citep{80_20_Rule} & 31.3 & 27.5 & 79.9 & 89.9 & 54.8 & 62.5 & 57.6 \\
STAPO$^\dagger$\citep{STAPO} & 33.4 & 28.7 & 79.9 & 90.4 & 57.2 & 63.0 & 58.8 \\
DAPO$^\dagger$\citep{DAPO} & 34.2 & 26.1 & - & 84.5 & - & - & - \\
Lp-Reg$^\dagger$\citep{Lp_Reg} & 35.9 & 25.8 & - & 87.4 & - & - & - \\
A3PO$^\dagger$\citep{A3PO} & 37.8 & 30.4 & - & 91.3 & - & - & - \\
GRPO-ds & 39.5 & 30.8 & 80.8 & 88.6 & 52.3 & 59.0 & 58.5 \\
\rowcolor{lightgray}
STARE-O1 & 43.9 & 34.7 & 85.3 & 90.6 & 55.6 & 61.8 & 62.0 \\
\rowcolor{lightgray}
STARE-C2 & 44.3 & 32.6 & 86.1 & 91.2 & 56.9 & 62.3 & 62.2 \\
\midrule
\rowcolor{lightblue}
\multicolumn{8}{c}{\textbf{In the Tool-Use Agent Scenario}} \\
\midrule
\multicolumn{8}{c}{\textbf{\textit{RL from the Qwen2.5-7B-Base}}} \\
\hdashline
Base-7B-TIR$^\dagger$\citep{qwen2025qwen25technicalreport} & 1.7 & 0.6 & 10.8 & 18.0 & - & 6.2 & - \\
ToRL$^\dagger$\citep{wang2023torl} & 40.2 & 27.9 & 75.0 & 82.2 & - & 49.9 & - \\
Effective TIR$^\dagger$\citep{effective_tir} & 42.3 & 29.2 & 74.2 & 86.4 & - & - & - \\
ZeroTIR$^\dagger$\citep{zero_tir} & 46.7 & 30.0 & - & 85.2 &  & - & - \\
GRPO-ds & 46.8 & 32.4 & 75.9 & 81.4 & 38.3 & 48.8 & 53.9 \\
SimpleTIR$^\dagger$\citep{xue2025simpletir} & 50.5 & 30.9 & 79.1 & 88.4 & - & 54.8 & - \\
\rowcolor{lightgray}
STARE-O1 & 53.2 & 37.5 & 84.9 & 86.8 & 41.9 & 52.3 & 59.4 \\
\rowcolor{lightgray}
STARE-C2 & 52.8 & 38.1 & 86.9 & 87.2 & 43.7 & 53.6 & 60.4 \\

\bottomrule
\end{tabular}
}
\end{wraptable}
enhanced GRPO baseline, denoted GRPO-ds, which removes the KL penalty and incorporates dynamic sampling with token-level loss\citep{DAPO}. We report two variants: STARE-O1 amplifies only $\mathcal{L}_q^{+}$, whereas STARE-C2 additionally attenuates $\mathcal{L}_q^{-}$ over O1. Ablation studies adopt STARE-O1 as the default configuration. We append the instruction ``Please reason step by step, and put your final answer within \texttt{\textbackslash boxed\{\}}'' to each question, and then extract the content enclosed in \texttt{\textbackslash boxed\{\}} as the final answer for correctness evaluation.

\textbf{Evaluation.} We evaluate on six mathematical benchmarks: AIME24, AIME25, AMC23, MATH-500, Minerva Math, and OlympiadBench\citep{yang2023aimo,MATH500_bench,Minerva_bench,OlympiadBench}, with Top-$p\!=\!0.95$ and $T\!=\!0.7$ at inference. AIME24/25 and AMC23 are evaluated $32$ times and the other benchmarks $4$ times, and all results report average accuracy.

\subsection{Main Results}

Table~\ref{tab:main_results} presents the full performance comparison, where results marked with $\dagger$ are cited from prior work such as EAPO, STEER, GRPO-$\mathrm{Clip}_{\nu}$, Lp-Reg, DGPO, JustRL, STAPO, A3PO, and SimpleTIR\citep{EAPO,STEER,Clip_beta_entropy_dynamic,Lp_Reg,DGPO,he2025justrlscaling15bllm,STAPO,A3PO,xue2025simpletir}. STARE consistently delivers substantial gains across the six math benchmarks at scales ranging from 1.5B to 32B and three scenarios.

\textbf{Short-CoT Scenario.} At the 7B scale, STARE-O1 attains an average accuracy of $54.4\%$, outperforming STEER ($49.1\%$, $+5.3\%$) and GRPO-ds ($49.1\%$, $+5.3\%$), while reaching $44.2\%$ and $23.8\%$ on AIME24 and AIME25, corresponding to improvements of roughly $10\%$ and $7\%$ over DAPO. At the 14B scale, STARE-O1 ($52.0\%$) surpasses GRPO-ds ($46.1\%$) by $5.9\%$. At the 32B scale, STARE-O1 ($60.7\%$) exceeds GRPO-ds ($56.1\%$) by $4.6\%$. 

\textbf{Long-CoT Scenario.} At the 1.5B scale, STARE-O1 reaches an average of $65.9\%$, substantially exceeding EAPO ($57.0\%$, $+8.9\%$) and DAPO ($55.1\%$, $+10.8\%$), while also outperforming JustRL on AIME24 and AIME25. At the 8B scale, STARE-O1 ($62.0\%$) surpasses STAPO ($58.8\%$) and GRPO-ds ($59.0\%$), and STARE-C2 further raises the average to $62.2\%$.

\textbf{Tool-Use Scenario.} STARE-O1 achieves an average of $59.4\%$, improving over GRPO-ds ($53.9\%$) by $5.5\%$ and outperforms SimpleTIR, while reaching $53.2\%$ and $37.5\%$ on AIME24 and AIME25. STARE-C2 further lifts the average to $60.4\%$.

\textbf{Key findings.} Three principal observations emerge. \textbf{(i)} On the challenging AIME24 and AIME25 benchmarks, STARE improves over GRPO-ds by $4\%$--$8\%$ in average accuracy, with gains of $3\%$--$6\%$ across all six benchmarks. \textbf{(ii)} Across different thinking scenarios and model scales from 1.5B to 32B, STARE consistently outperforms the majority of competitive RL improvement methods, confirming its effectiveness and robustness. \textbf{(iii)} The additional gains of STARE-C2 further indicate that dual-sided regulation, which simultaneously strengthens entropy-increasing signals and attenuates entropy-decreasing ones, can yield an even more favorable exploration-exploitation balance.

\begin{figure}[ht]
    \centering

    \begin{subfigure}[t]{0.24\textwidth}
        \centering
        \includegraphics[width=\linewidth]{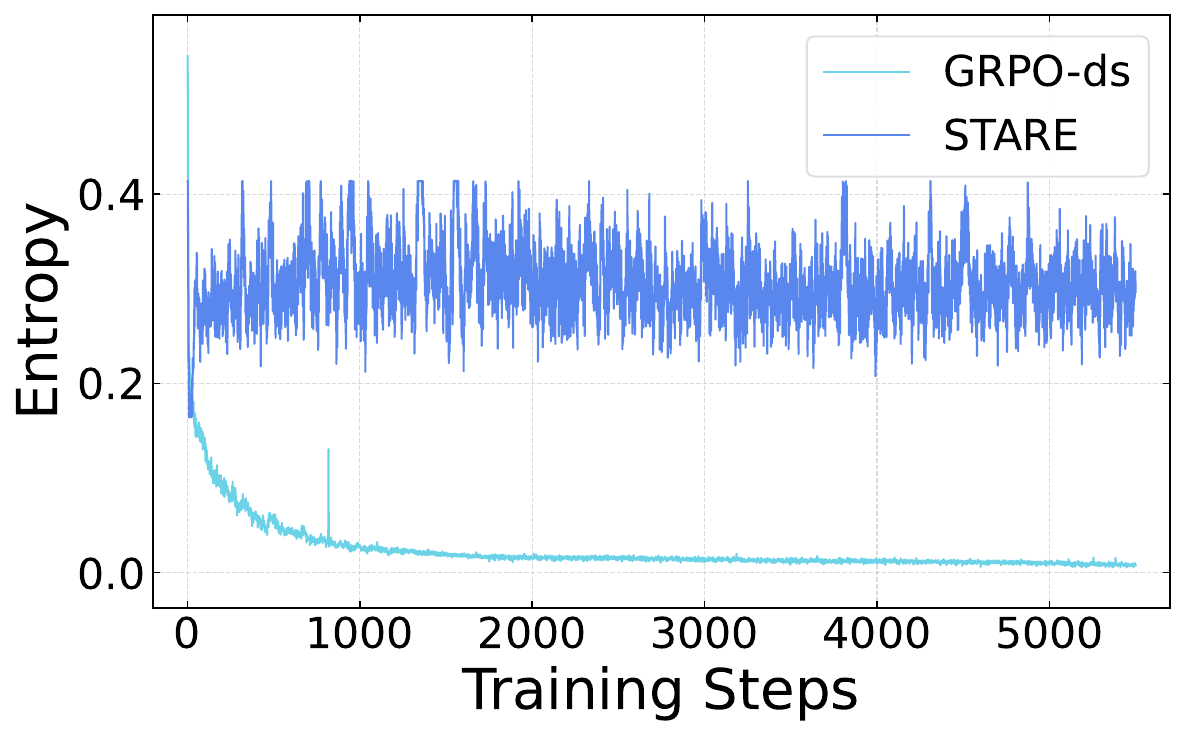}
        \caption{Training Entropy}
        \label{fig:qwen7b_Entropy}
    \end{subfigure}
    \hfill
    \begin{subfigure}[t]{0.24\textwidth}
        \centering
        \includegraphics[width=\linewidth]{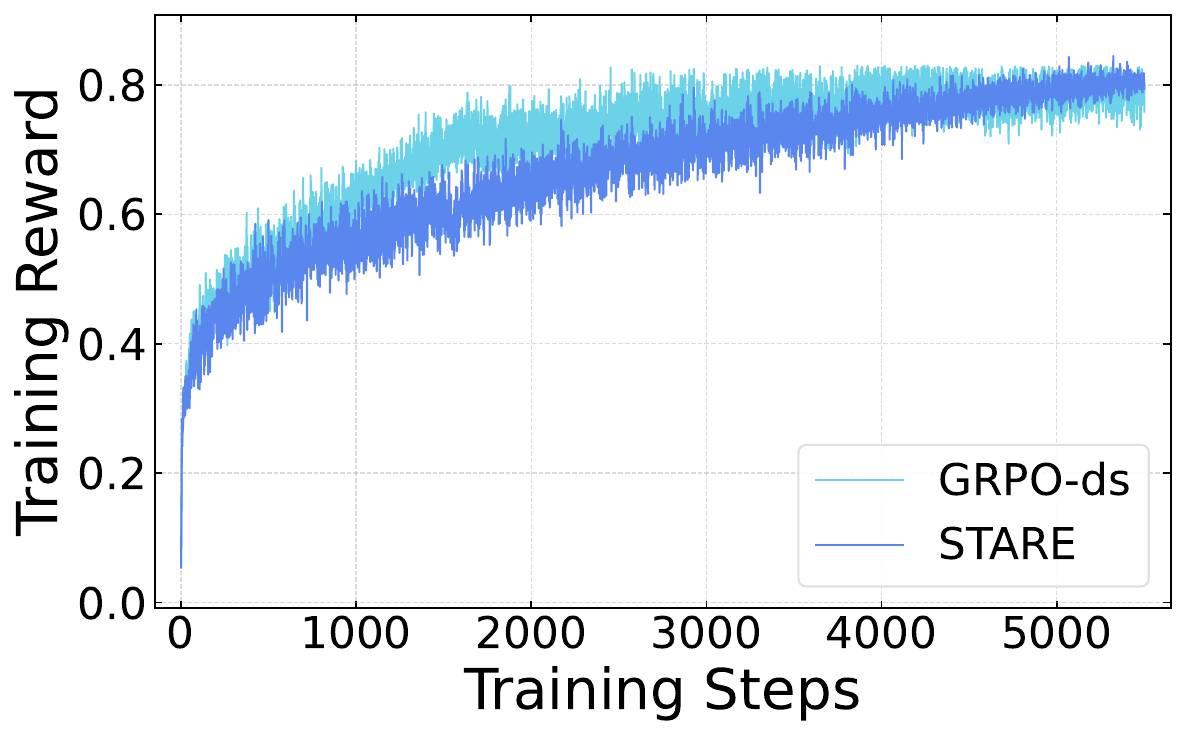}
        \caption{Training Reward}
        \label{fig:qwen7b_Reward}
    \end{subfigure}
    \hfill
    \begin{subfigure}[t]{0.24\textwidth}
        \centering
        \includegraphics[width=\linewidth]{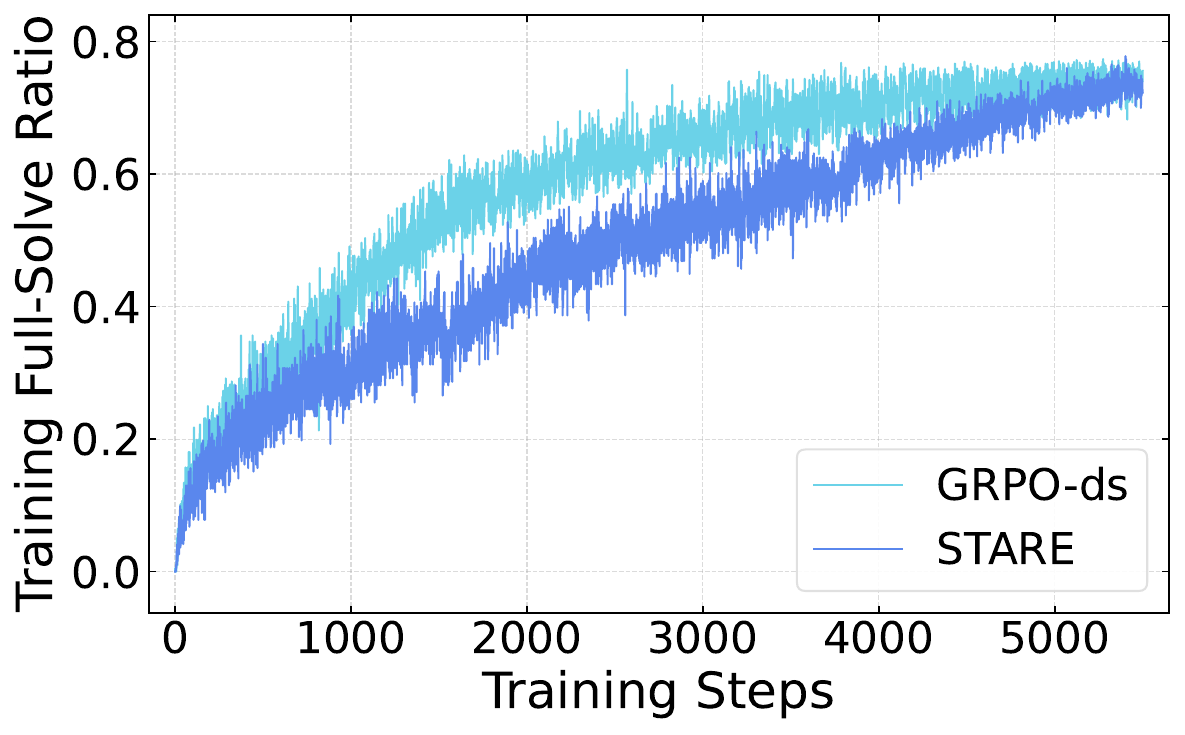}
        \caption{Train Full-solve Ratio}
        \label{fig:qwen7b_Full_solve_Ratio}
    \end{subfigure}
    \hfill
    \begin{subfigure}[t]{0.24\textwidth}
        \centering
        \includegraphics[width=\linewidth]{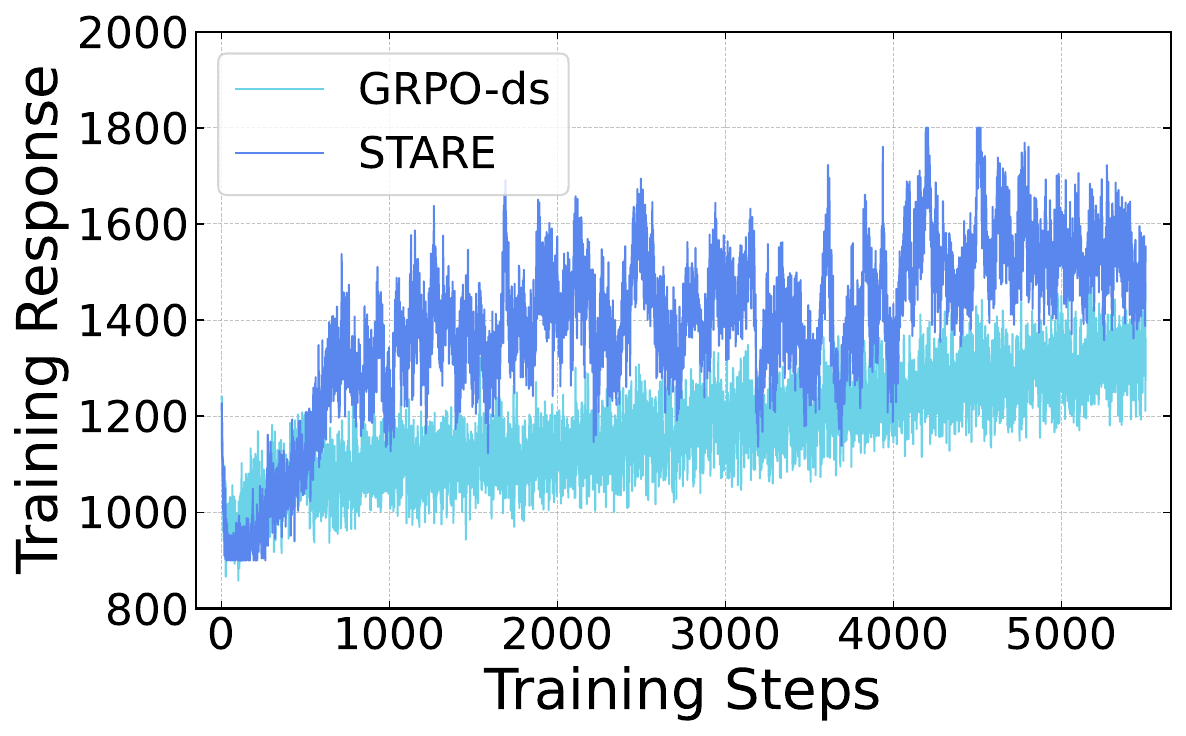}
        \caption{Train Response Length}
        \label{fig:qwen7b_Response}
    \end{subfigure}

    \vspace{0.4em}

    \begin{subfigure}[t]{0.24\textwidth}
        \centering
        \includegraphics[width=\linewidth]{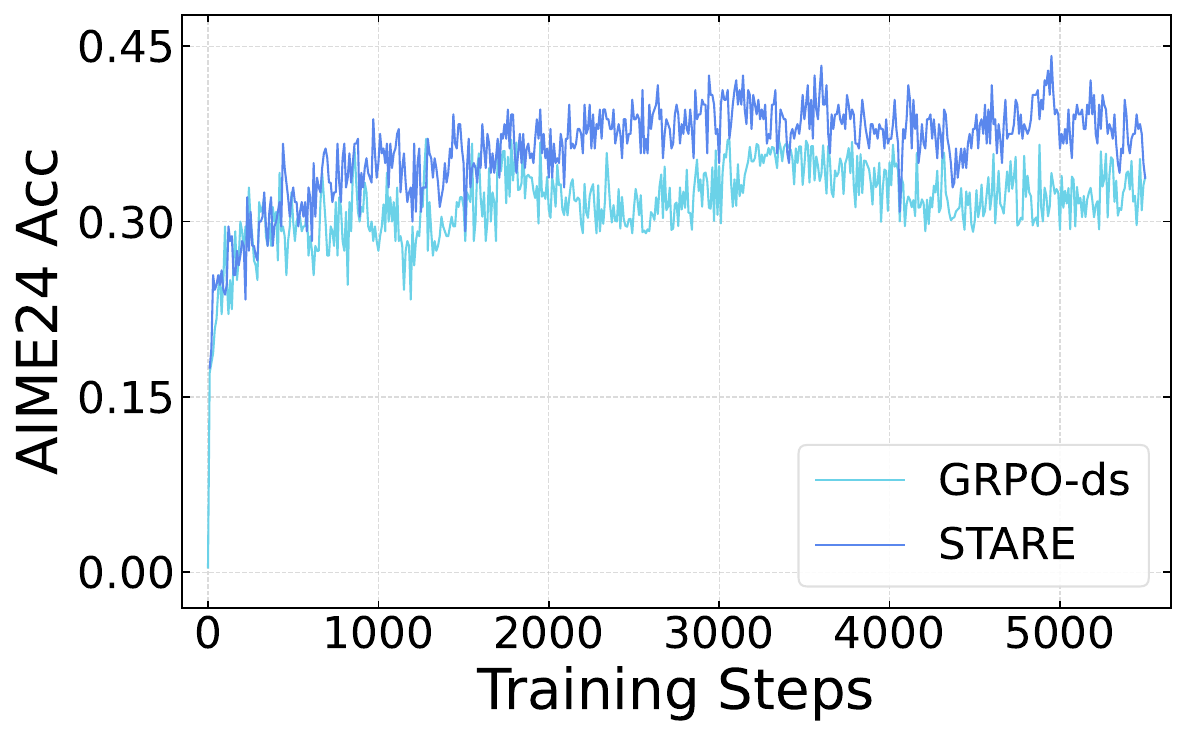}
        \caption{AIME24 Acc}
        \label{fig:qwen7b_AIME24_Acc}
    \end{subfigure}
    \hfill
    \begin{subfigure}[t]{0.24\textwidth}
        \centering
        \includegraphics[width=\linewidth]{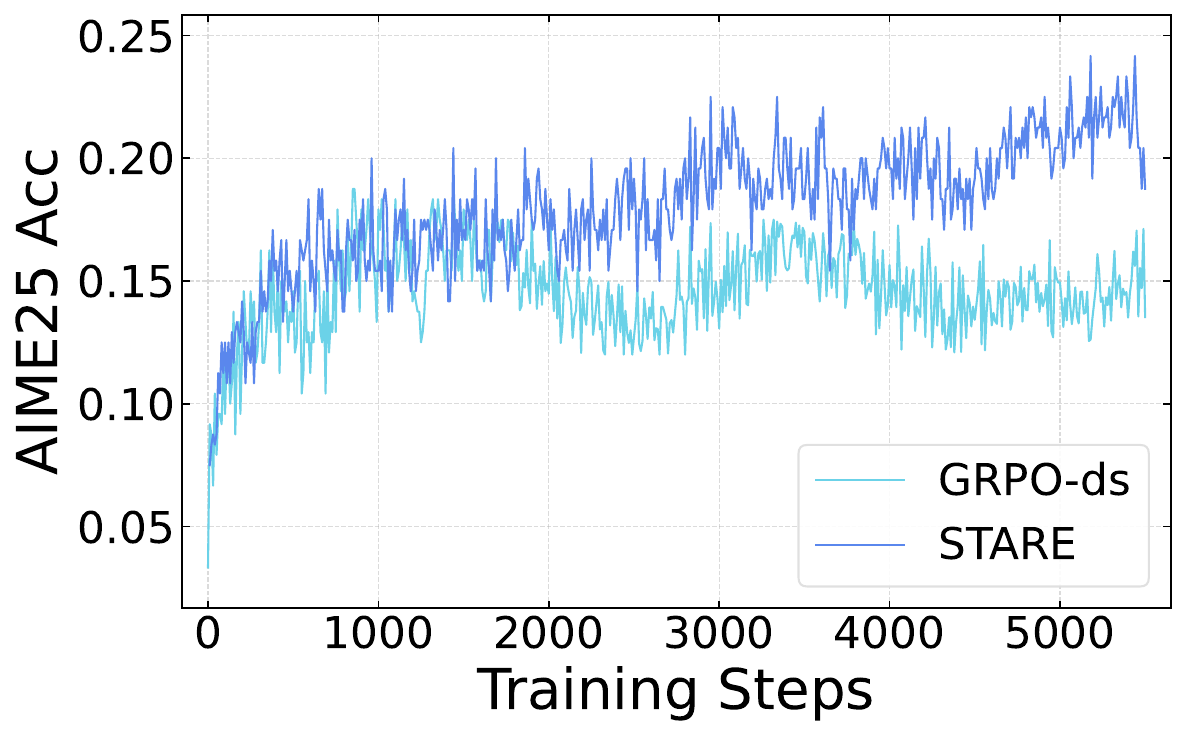}
        \caption{AIME25 Acc}
        \label{fig:qwen7b_AIME25_Acc}
    \end{subfigure}
    \hfill
    \begin{subfigure}[t]{0.24\textwidth}
        \centering
        \includegraphics[width=\linewidth]{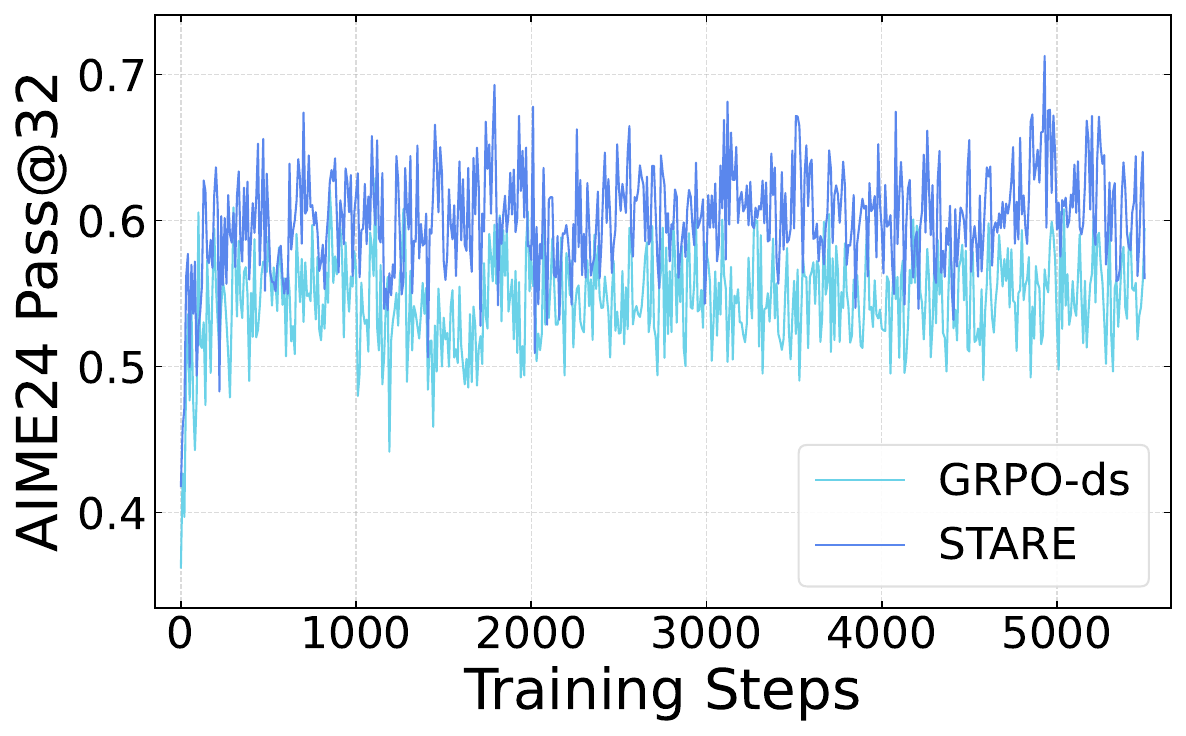}
        \caption{AIME24 Pass@32}
        \label{fig:qwen7b_AIME24_PassN}
    \end{subfigure}
    \hfill
    \begin{subfigure}[t]{0.24\textwidth}
        \centering
        \includegraphics[width=\linewidth]{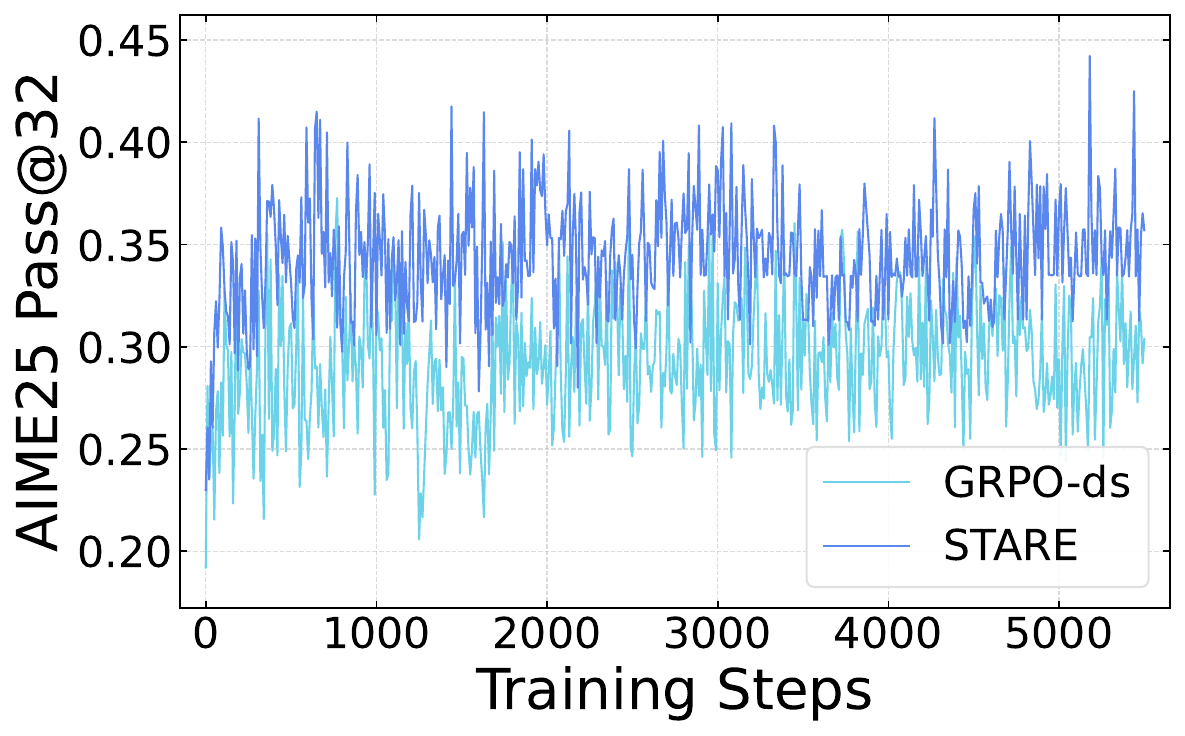}
        \caption{AIME25 Pass@32}
        \label{fig:qwen7b_AIME25_PassN}
    \end{subfigure}

    \caption{Comparison of key training metrics between STARE and GRPO-ds on Qwen2.5-Math-7B-Base model in the Short CoT scenario over 5k RL steps.}
    \label{fig:qwen7b_key_metrics}
\end{figure}

\begin{figure}[ht]
    \centering
    \begin{subfigure}[t]{0.32\textwidth}\centering
        \includegraphics[width=\linewidth]{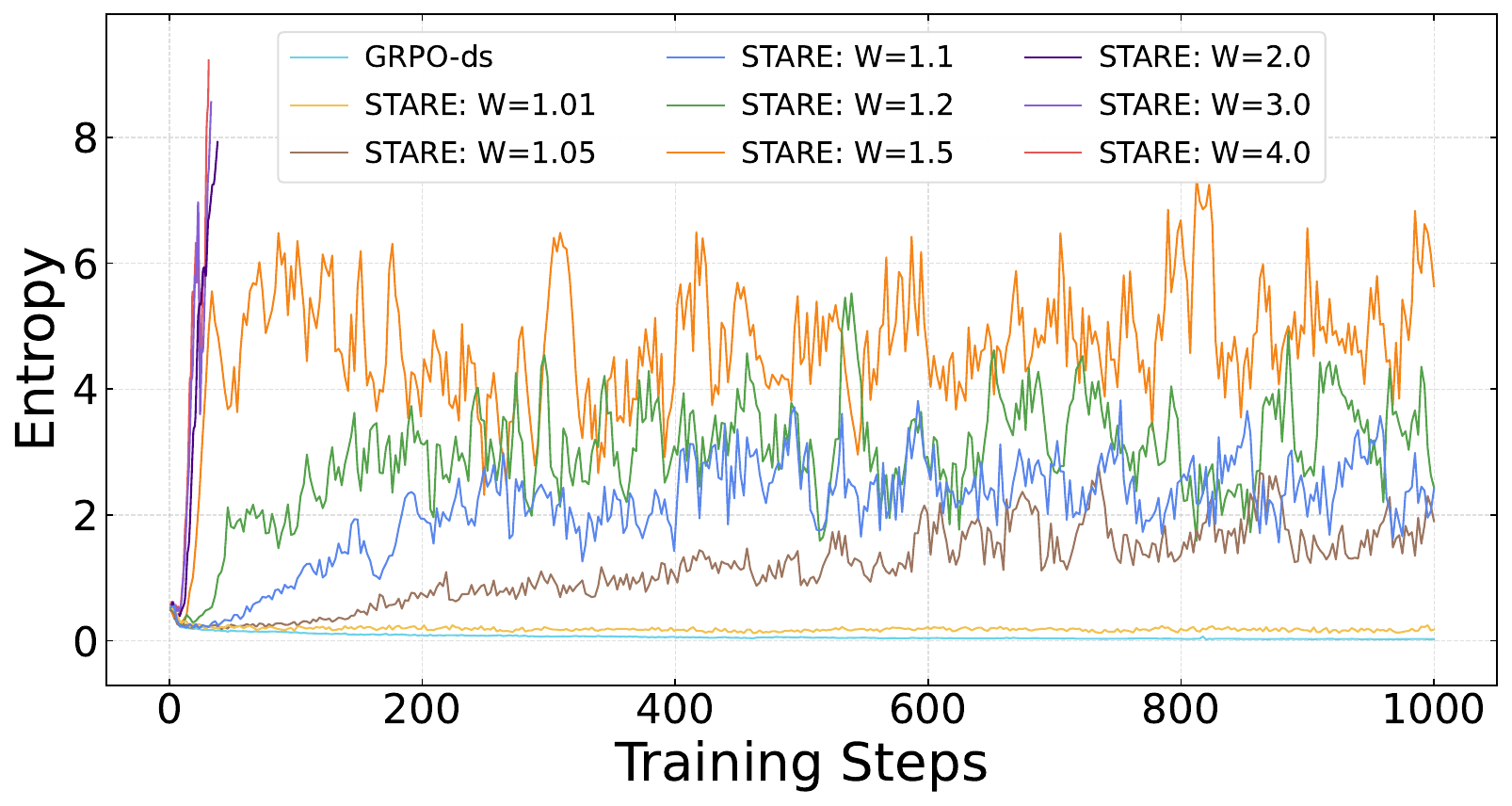}\caption{ Entropy evolution under varying $W$ without target-entropy gating}\label{fig:7b_ablation_W_values}
    \end{subfigure}\hfill
    \begin{subfigure}[t]{0.32\textwidth}\centering
        \includegraphics[width=\linewidth]{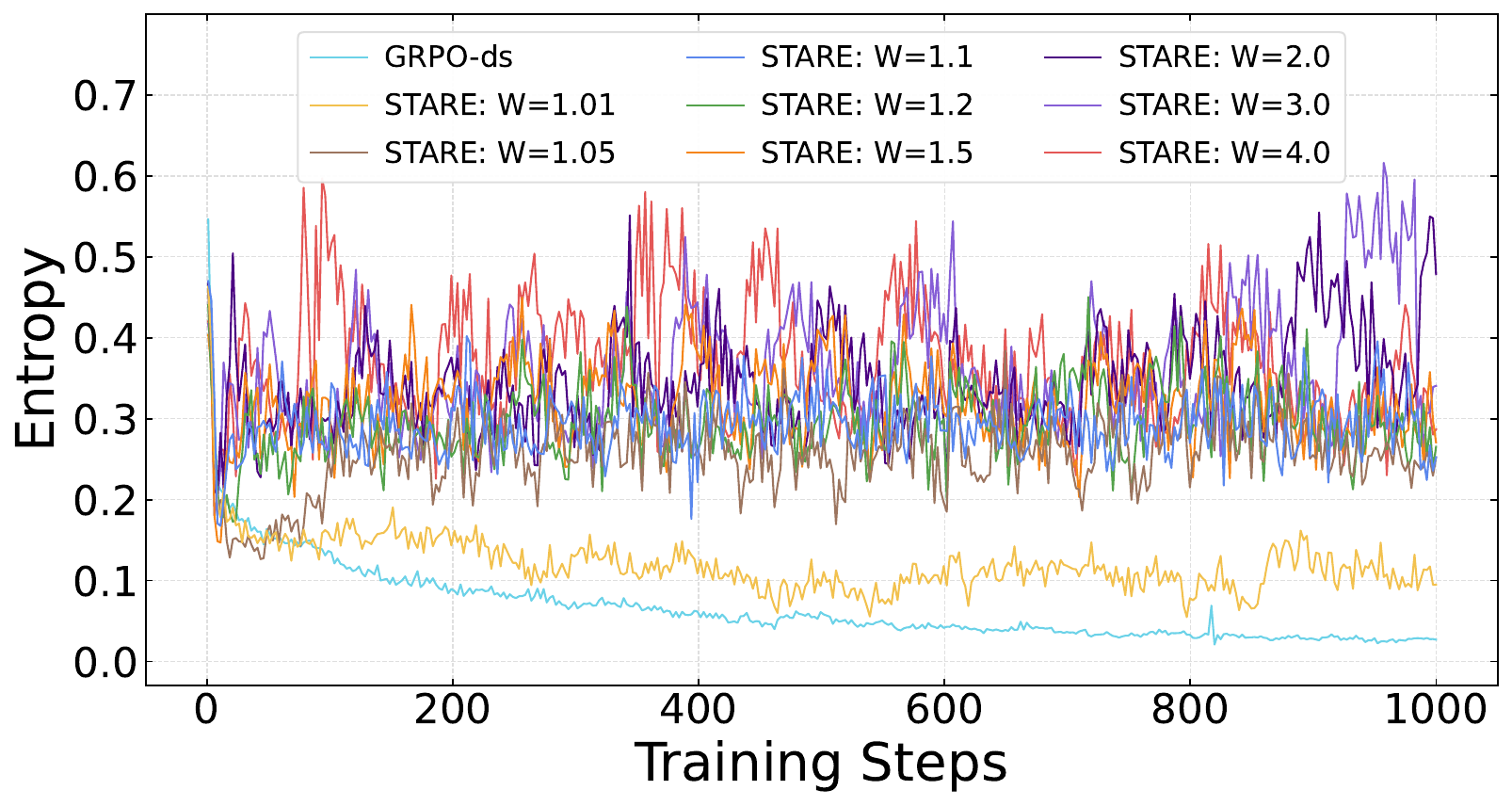}\caption{Entropy evolution under varying $W$ with target-entropy gating}\label{fig:7b_ablation_W_with_target_entropy}
    \end{subfigure}\hfill
    \begin{subfigure}[t]{0.32\textwidth}\centering
        \includegraphics[width=\linewidth]{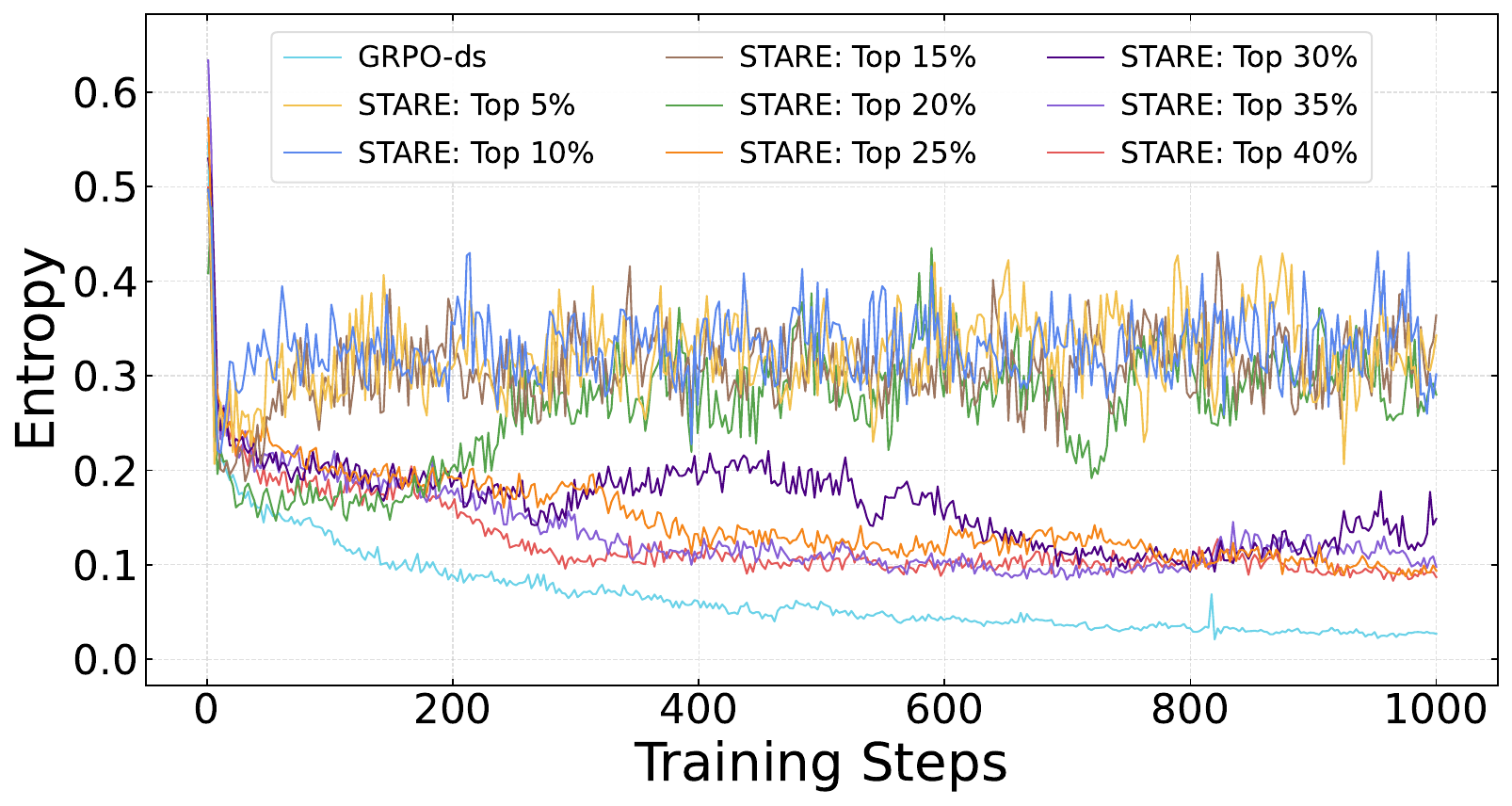}\caption{Entropy evolution under varying $\mathrm{Top}\text{-}P$ ratio ($W:1.1$, $H_{\text{tgt}}:0.3$) }\label{fig:7b_ablation_top_ratio}
    \end{subfigure}
    \caption{STARE policy entropy evolution under ablation on the varying reweighting factor $W$, target-entropy gating, and the high-surprisal selection ratio $P$ on Qwen2.5-Math-7B-Base.}\label{fig:qwen7b_ablation}
\end{figure}

\subsection{Cognitive Analysis}

\textbf{STARE vs. GRPO-ds: Training Dynamics across Scales and Scenarios.} To validate STARE in long-horizon RL, we run $5000$ training steps on Qwen2.5-Math-7B-Base under the Short CoT scenario. Figure~\ref{fig:qwen7b_key_metrics} compares STARE with GRPO-ds on key metrics, and Figures~\ref{fig:qwen14b}--\ref{fig:qwen7b_agent_first} further verify its effectiveness across $1.5$B--$32$B scales and diverse task scenarios. \textbf{Entropy stability and performance evolution.} GRPO-ds exhibits entropy collapse over steps $0$--$1000$, with policy entropy approaching zero (Figure~\ref{fig:qwen7b_key_metrics}(\subref{fig:qwen7b_Entropy})), consistent with Section~\ref{sec:theoretical-analysis}; correspondingly, its AIME24/25 accuracy peaks around step $1000$ and saturates thereafter (Figure~\ref{fig:qwen7b_key_metrics}(\subref{fig:qwen7b_AIME24_Acc})-(\subref{fig:qwen7b_AIME25_Acc})), indicating premature convergence. In contrast, STARE stabilizes entropy near $H_{\text{tgt}}=0.3$ via token-level reweighting and closed-loop gating, with accuracy continuing to rise beyond step $1000$ and peaking at $5000$, thereby unlocking long-horizon RL potential. \textbf{Exploration-exploitation balance.} STARE's Pass@32 consistently exceeds GRPO-ds throughout training (Figure~\ref{fig:qwen7b_key_metrics}(\subref{fig:qwen7b_AIME24_PassN})-(\subref{fig:qwen7b_AIME25_PassN})), preserving output diversity and mitigating mode-seeking; whereas GRPO-ds's reward and Full-Solve Ratio plateau early (Figure~\ref{fig:qwen7b_key_metrics}(\subref{fig:qwen7b_Reward})-(\subref{fig:qwen7b_Full_solve_Ratio})), STARE keeps growing, and its sustained response-length increase (Figure~\ref{fig:qwen7b_key_metrics}(\subref{fig:qwen7b_Response})) reflects deeper reasoning. \textbf{Cross-scale and cross-scenario generalization.} Figures~\ref{fig:qwen14b}-\ref{fig:qwen7b_agent_first} reveal a consistent pattern across Short CoT (14B, 32B), Long CoT (R1-Distill-Qwen-1.5B, Qwen3-8B-Base), and Tool-Use (7B): GRPO-ds suffers collapse with performance saturation, while STARE maintains stable entropy and continuous accuracy gains. Notably, on DeepSeek-R1-Distill-Qwen-1.5B, GRPO-ds's entropy decays below $0.2$ by step $5000$, whereas STARE rapidly restores entropy to the target band starting around step $3500$, with AIME24/25 accuracy improving in tandem (Figure~\ref{fig:qwen1.5b}), showing its intervention-recovery capability and robustness across scales and scenarios. Further details are provided in Appendix ~\ref{app:STARE_GRPO_Scenarios_Scales}

\textbf{Ablation on Key Hyperparameters and Target-Entropy Gating.}
We ablate $W$, $P$, and the target-entropy gate on Qwen2.5-Math-7B-Base (Figure~\ref{fig:qwen7b_ablation}). Under open-loop reweighting ( without the target-entropy gate; Figure~\ref{fig:qwen7b_ablation}(\subref{fig:7b_ablation_W_values})), $W=1.01$ already mitigates the entropy decay of GRPO-ds, $W\geq 1.05$ yields steady growth, and $W\geq 2.0$ triggers divergenc, corroborating the near-criticality property (Corollary~\ref{thm:near-criticality}) that beyond the critical threshold $W$ controls magnitude rather than direction. Open-loop reweighting, however, stabilizes entropy at an excessively high level, inducing over-exploration that hampers overall training(Appendix~\ref{app:STARE_Target_Entropy_Closed_Loop}). With the closed-loop gate ($H_{\text{tgt}}=0.3$), Figure~\ref{fig:qwen7b_ablation}(\subref{fig:7b_ablation_W_with_target_entropy}) shows that all $W\in[1.05, 1.5]$ steer entropy into the target band with bounded oscillation, confirming closed-loop stability and 
\clearpage
\begingroup

\begin{figure}[H]
    \centering
    \begin{subfigure}[t]{0.32\textwidth}\centering
        \includegraphics[width=\linewidth]{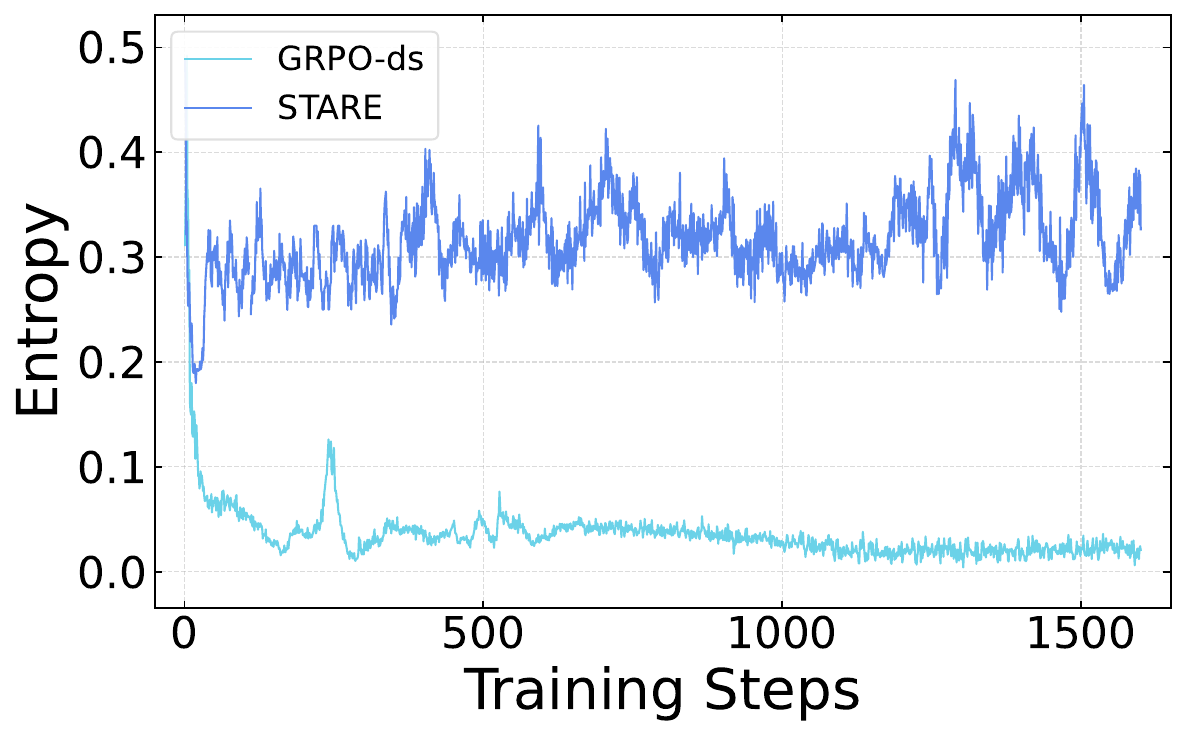}\caption{Training Entropy}\label{fig:14b_entropy}
    \end{subfigure}\hfill
    \begin{subfigure}[t]{0.32\textwidth}\centering
        \includegraphics[width=\linewidth]{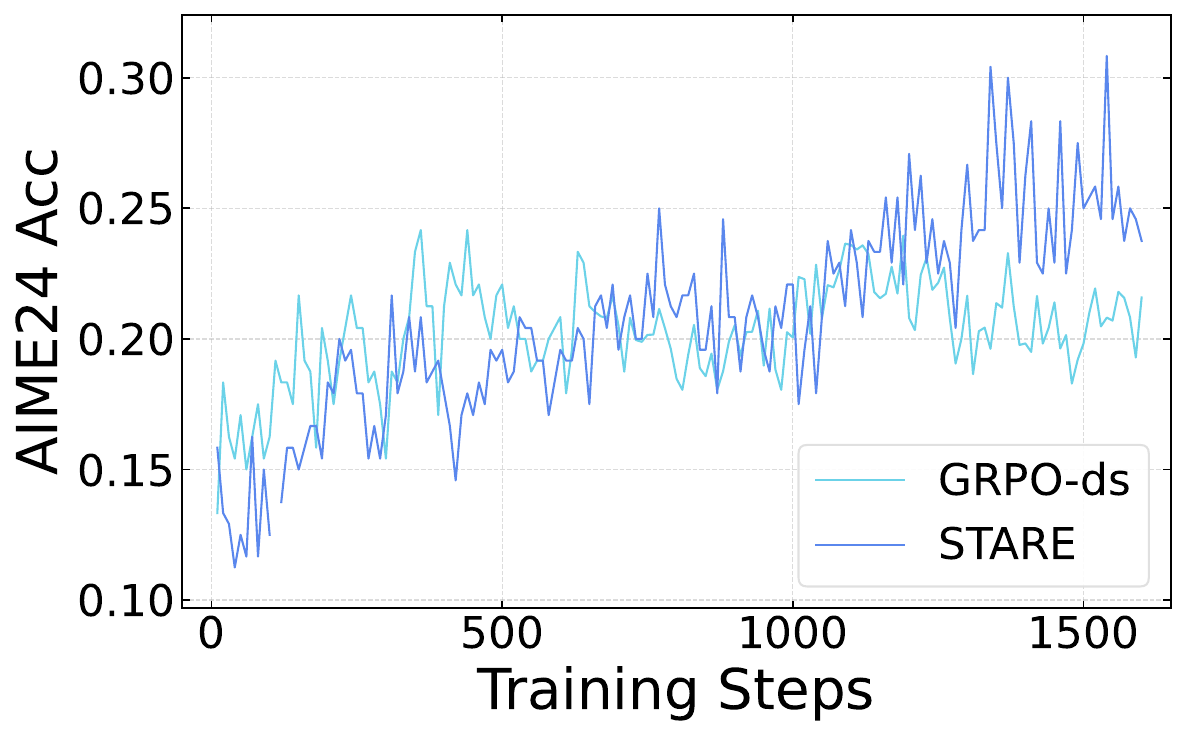}\caption{AIME24 Acc}\label{fig:14b_aime24}
    \end{subfigure}\hfill
    \begin{subfigure}[t]{0.32\textwidth}\centering
        \includegraphics[width=\linewidth]{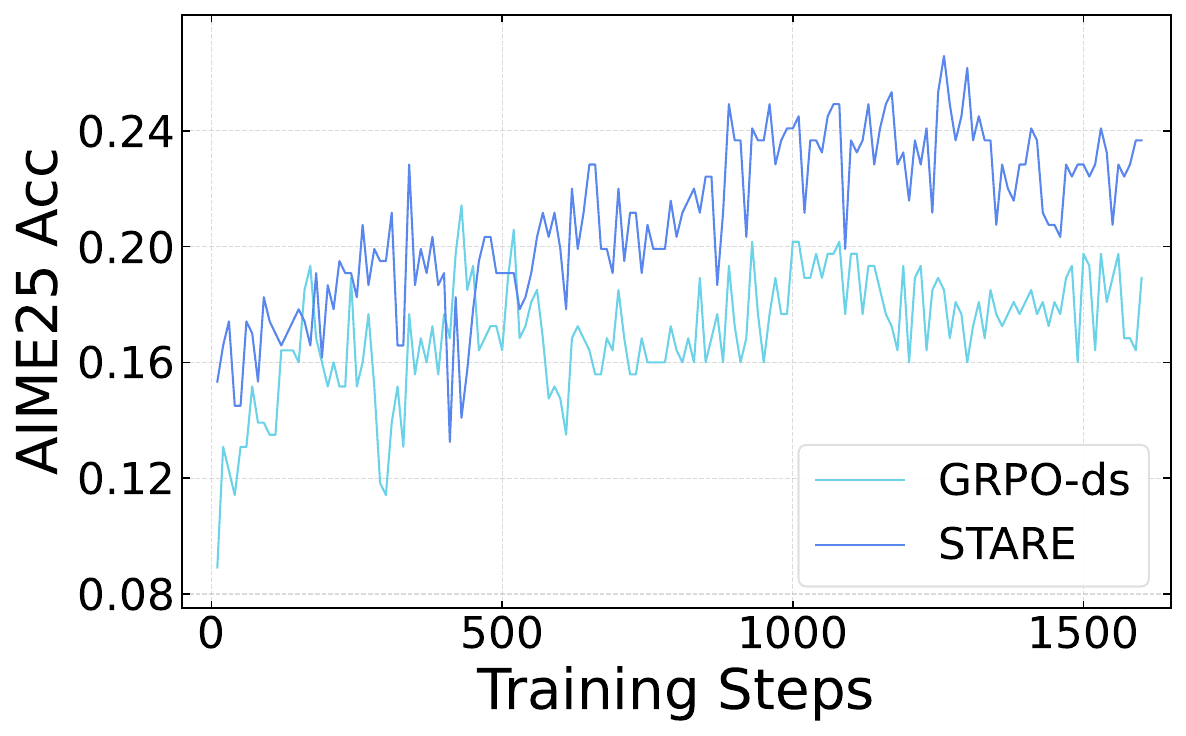}\caption{AIME25 Acc}\label{fig:14b_aime25}
    \end{subfigure}
    \caption{Training dynamics of STARE vs. GRPO-ds on Qwen2.5-14B-Instruct in the Short CoT scenario: policy entropy, AIME24 accuracy, and  AIME25 accuracy.}\label{fig:qwen14b}
\end{figure}

   \vspace{-8pt}

\begin{figure}[H]
    \centering
    \begin{subfigure}[t]{0.32\textwidth}\centering
        \includegraphics[width=\linewidth]{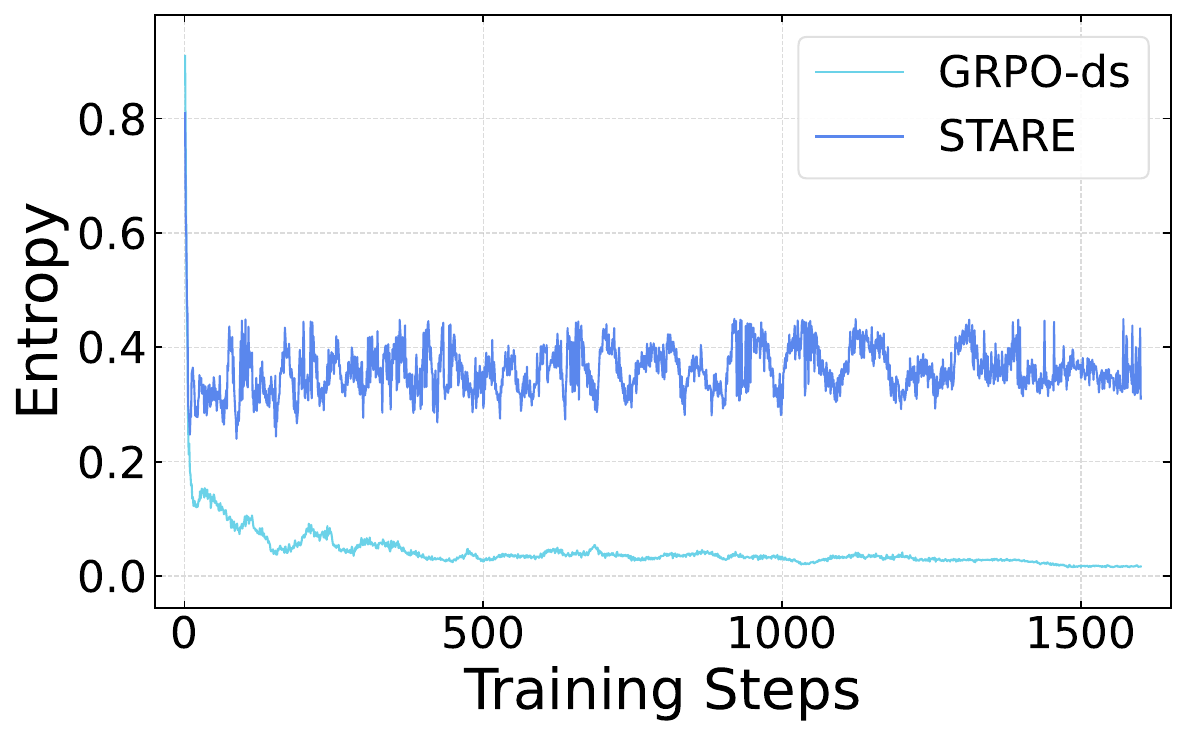}\caption{Training Entropy}\label{fig:32b_sub1}
    \end{subfigure}\hfill
    \begin{subfigure}[t]{0.32\textwidth}\centering
        \includegraphics[width=\linewidth]{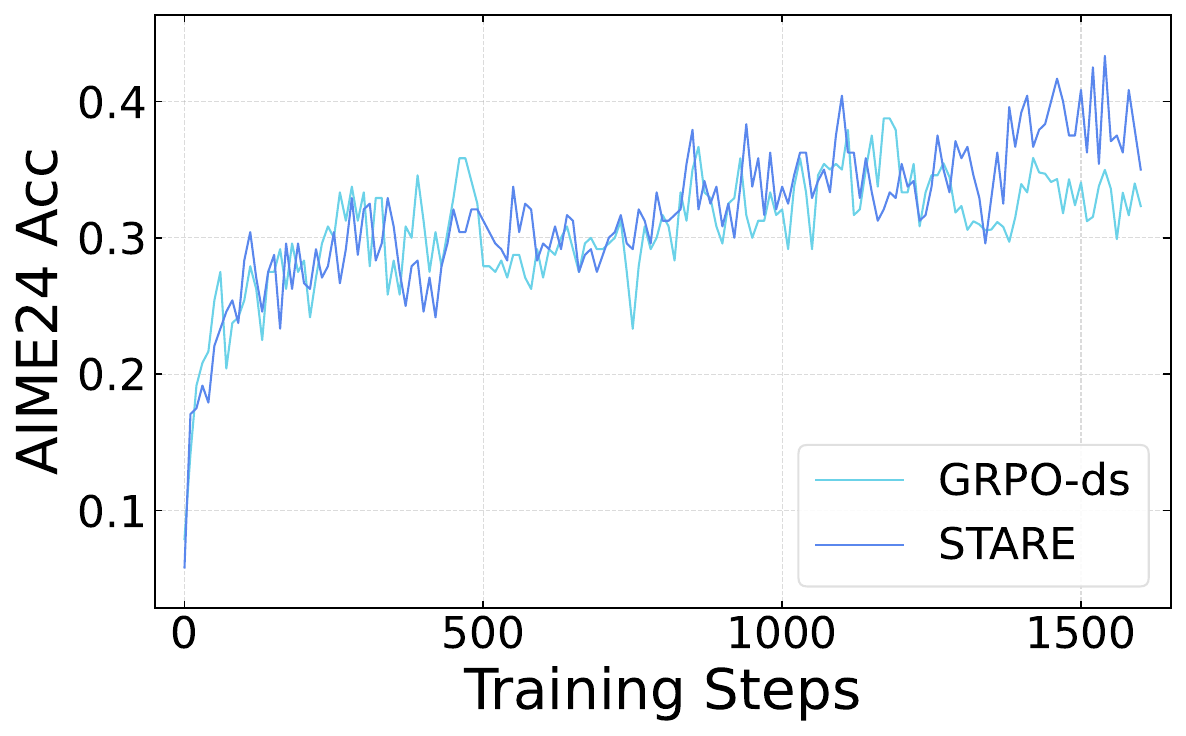}\caption{AIME24 Acc}\label{fig:32b_sub2}
    \end{subfigure}\hfill
    \begin{subfigure}[t]{0.32\textwidth}\centering
        \includegraphics[width=\linewidth]{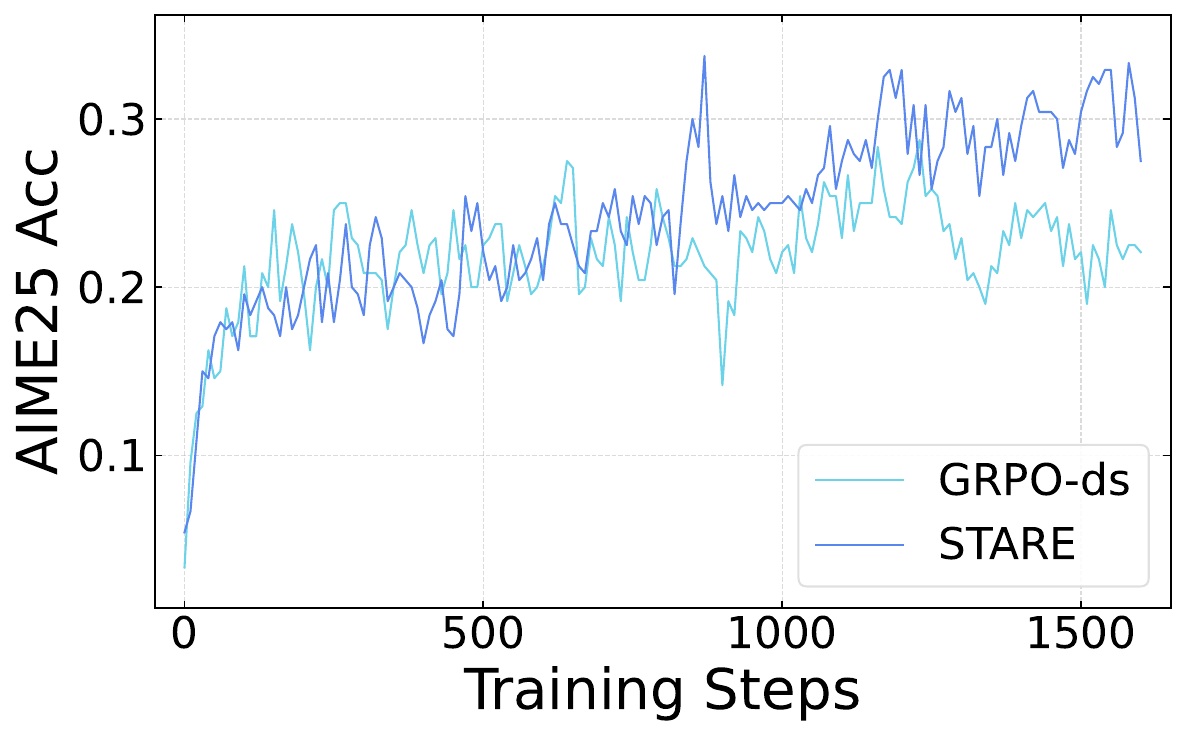}\caption{AIME25 Acc}\label{fig:32b_sub3}
    \end{subfigure}
    \caption{Training dynamics of STARE vs. GRPO-ds on Qwen2.5-32B-Base in the Short CoT scenario: policy entropy, AIME24 accuracy, and AIME25 accuracy.}\label{fig:qwen32b}
\end{figure}

   \vspace{-8pt}

\begin{figure}[H]
    \centering
    \begin{subfigure}[t]{0.32\textwidth}\centering
        \includegraphics[width=\linewidth]{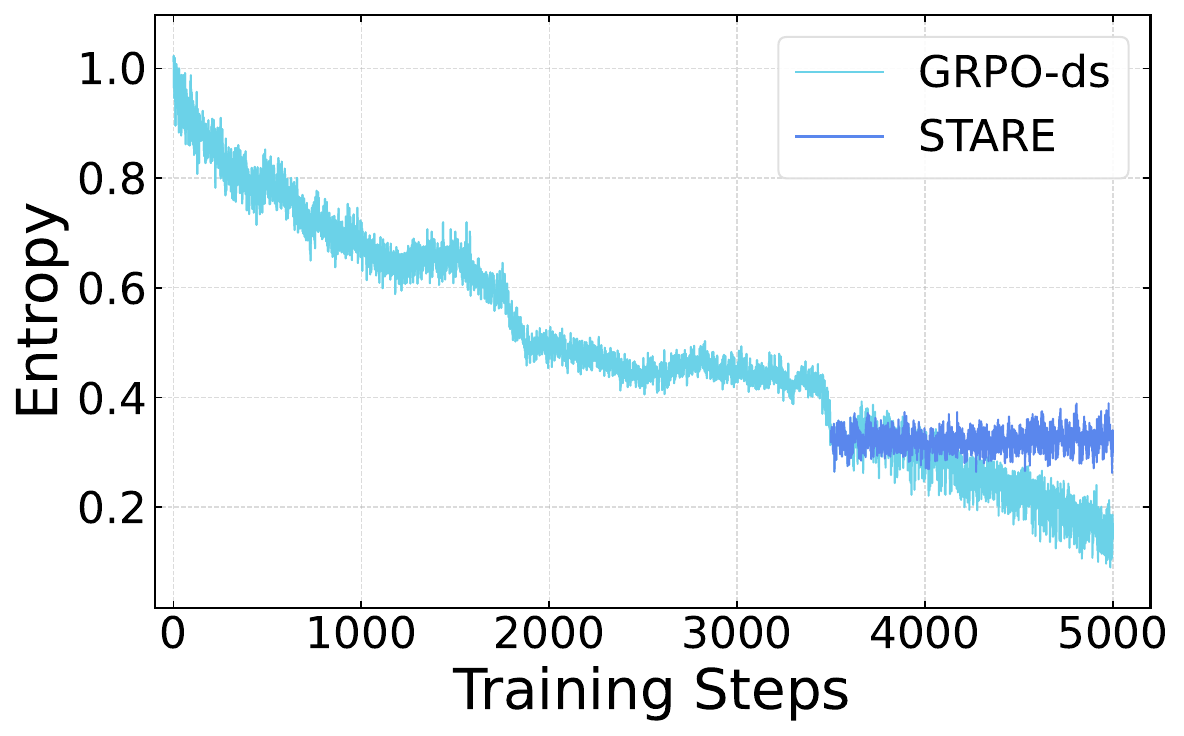}\caption{Training Entropy}\label{fig:1.5b_sub1}
    \end{subfigure}\hfill
    \begin{subfigure}[t]{0.32\textwidth}\centering
        \includegraphics[width=\linewidth]{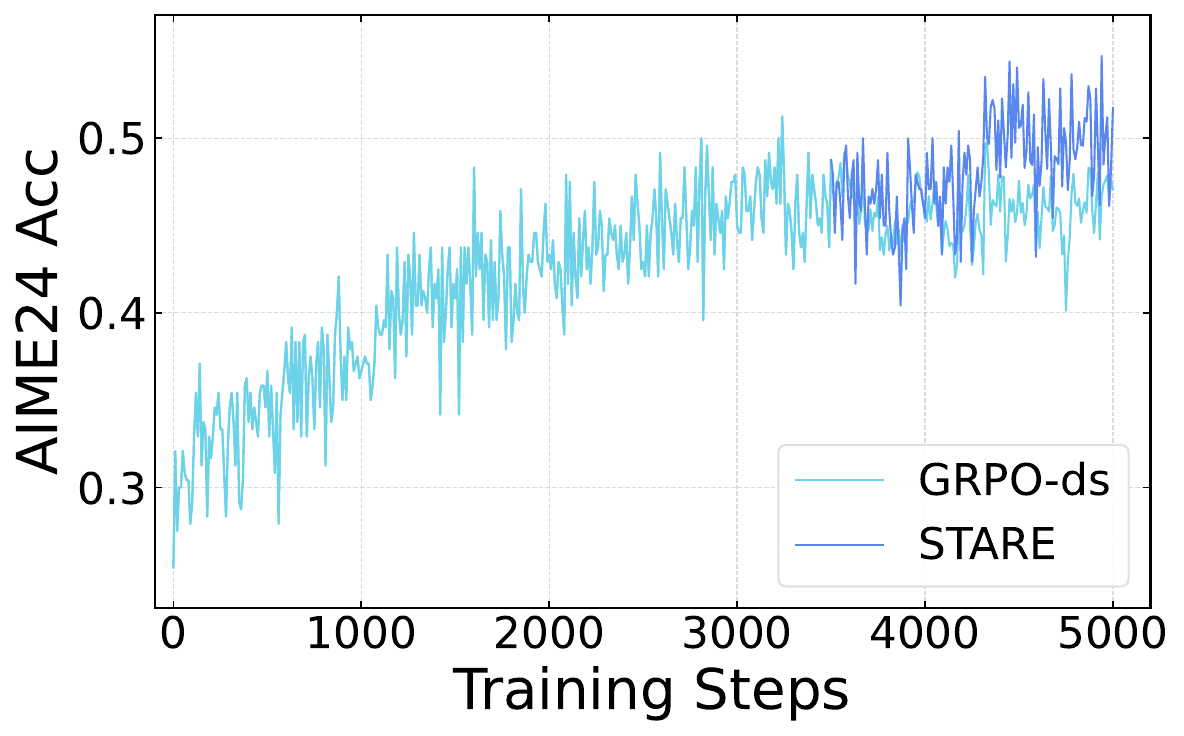}\caption{AIME24 Acc}\label{fig:1.5b_sub2}
    \end{subfigure}\hfill
    \begin{subfigure}[t]{0.32\textwidth}\centering
        \includegraphics[width=\linewidth]{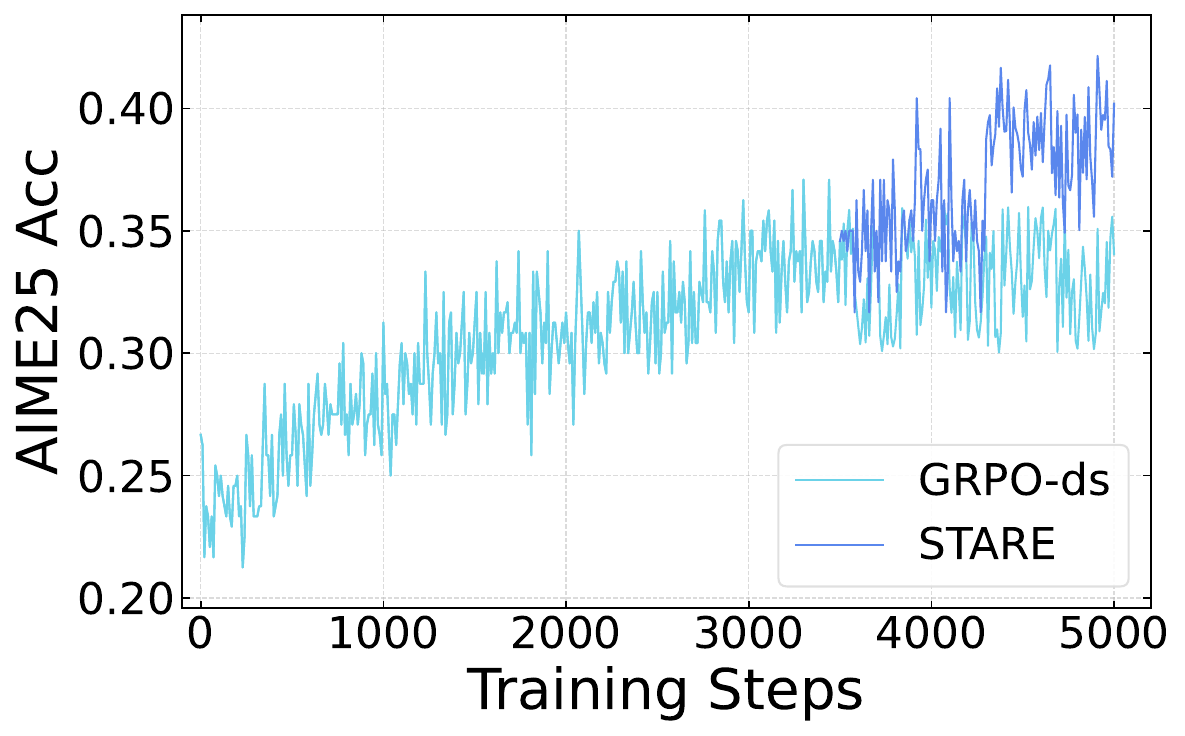}\caption{AIME25 Acc}\label{fig:1.5b_sub3}
    \end{subfigure}
    \caption{Training dynamics of STARE vs. GRPO-ds on DeepSeek-R1-Distill-Qwen-1.5B in the Long CoT scenario:  policy entropy, AIME24 accuracy, and AIME25 accuracy.}\label{fig:qwen1.5b}
\end{figure}

   \vspace{-8pt}

\begin{figure}[H]
    \centering
    \begin{subfigure}[t]{0.32\textwidth}\centering
        \includegraphics[width=\linewidth]{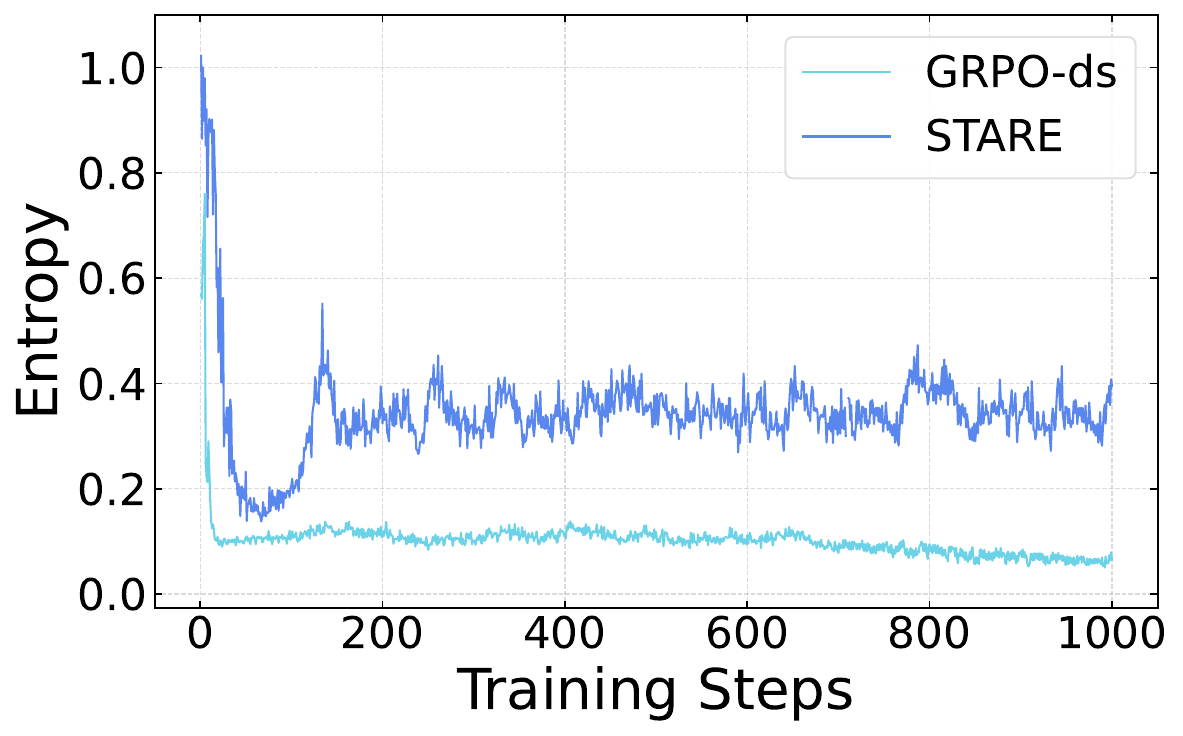}\caption{Training Entropy}\label{fig:3_8b_sub1}
    \end{subfigure}\hfill
    \begin{subfigure}[t]{0.32\textwidth}\centering
        \includegraphics[width=\linewidth]{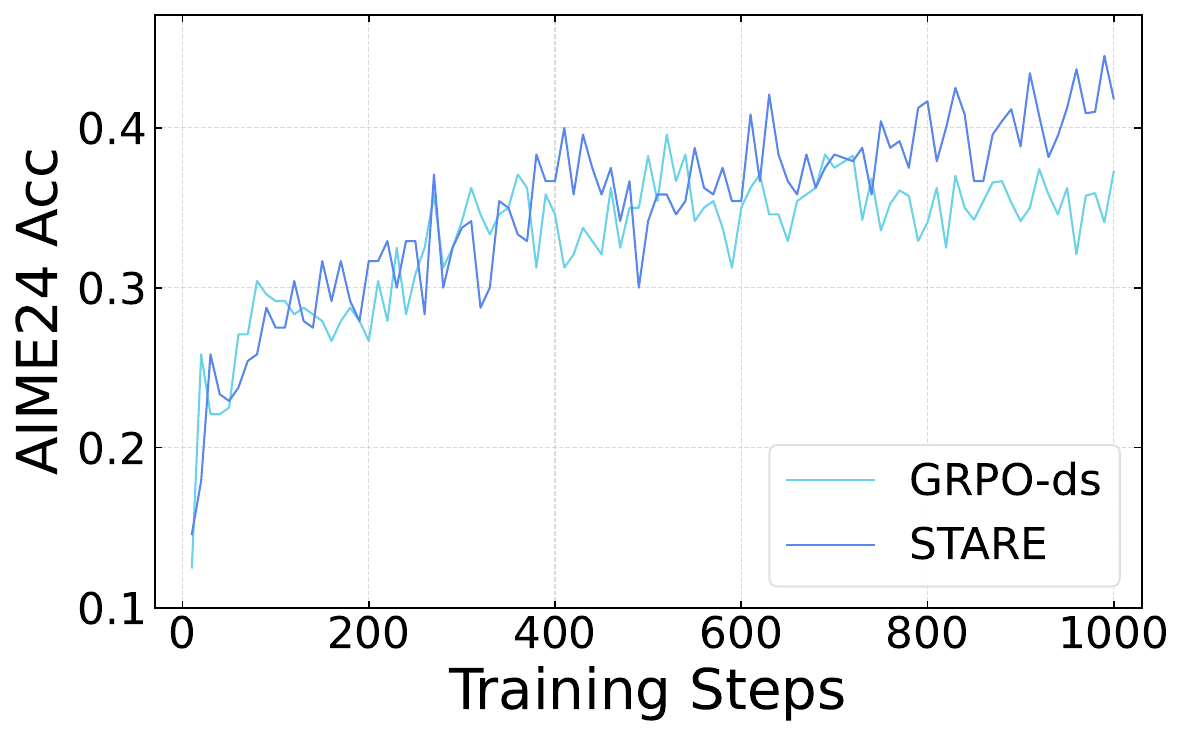}\caption{AIME24 Acc}\label{fig:3_8b_sub2}
    \end{subfigure}\hfill
    \begin{subfigure}[t]{0.32\textwidth}\centering
        \includegraphics[width=\linewidth]{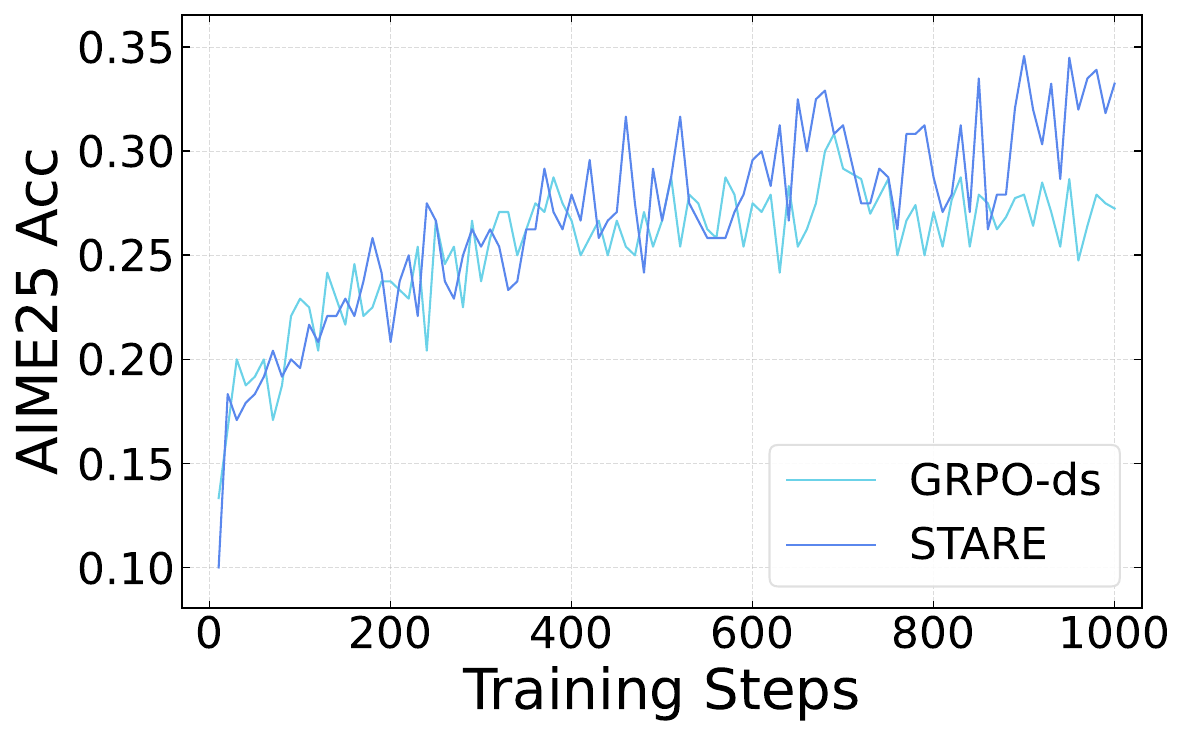}\caption{AIME25 Acc}\label{fig:3_8b_sub3}
    \end{subfigure}
    \caption{Training dynamics of STARE vs. GRPO-ds on Qwen3-8B-Base in the Long-CoT scenario: policy entropy, AIME24 accuracy, and AIME25 accuracy.}\label{fig:qwen3_8b}
\end{figure}


substantially reducing sensitivity to $W$. Fixing $W=1.1$ and $H_{\text{tgt}}=0.3$, Figure~\ref{fig:qwen7b_ablation}     (\subref{fig:7b_ablation_top_ratio}) further shows that entropy stays within the target band for $P\in[5\%, 20\%]$ and remains confined to $[0.1, 0.2]$ even at $P=40\%$, effectively preventing collapse and indicating a broad operating range. Overall, STARE is robust to $W$ and $P$, and the target-entropy gate stably constrains the policy entropy. More detailed entropy evolution and performance comparisons are reported in Appendix~\ref{app:STARE_Target_Entropy_Closed_Loop} ( Table~\ref{tab:stare_target_entropy_gate_ablation_aime}, Figure~\ref{fig:STARE_target_entropy}), with additional analysis provided in Appendix~\ref{app:STARE_Key_Hyperparameters_ablation}.

\endgroup
\clearpage

\begin{figure}[t]
    \centering
    \begin{subfigure}[t]{0.48\textwidth}\centering
    \includegraphics[width=\linewidth]{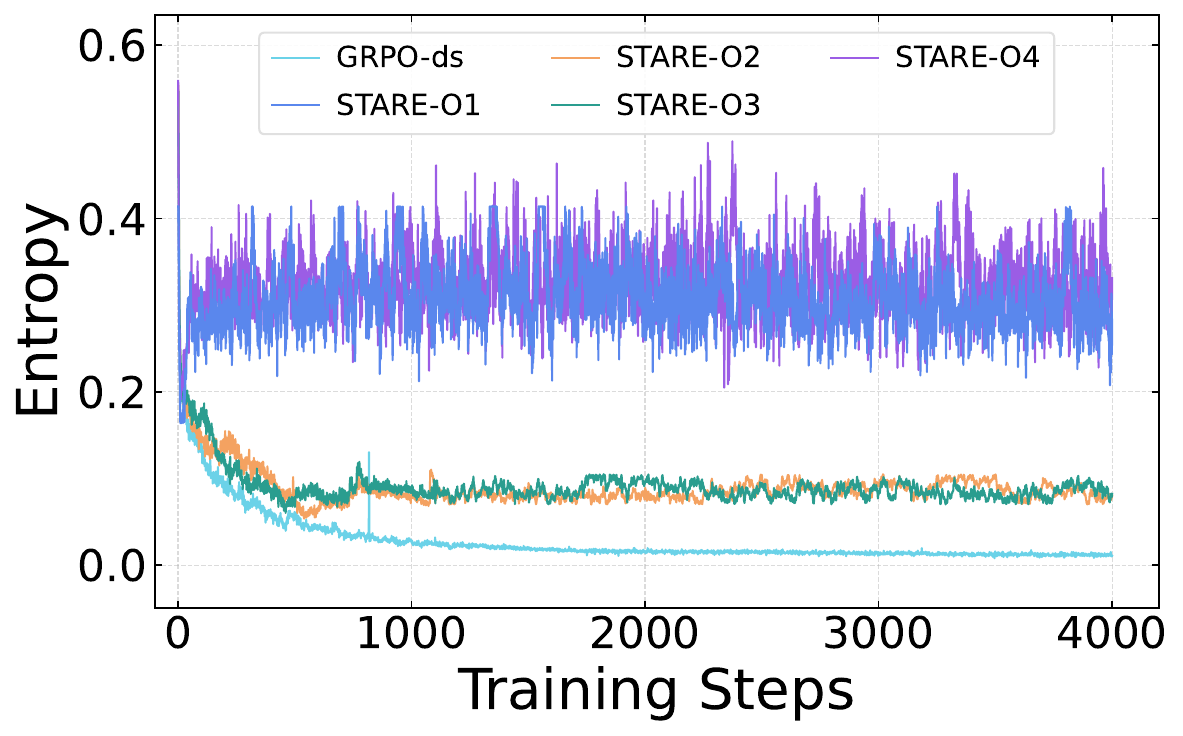}\caption{Entropy evolution under single-polarity operations O1-O4, each targeting one 
entropy-critical quadrant.}\label{fig:qwen7b_text_four_single_quant_entropy_0_4000}
    \end{subfigure}\hfill
    \begin{subfigure}[t]{0.48\textwidth}\centering
        \includegraphics[width=\linewidth]{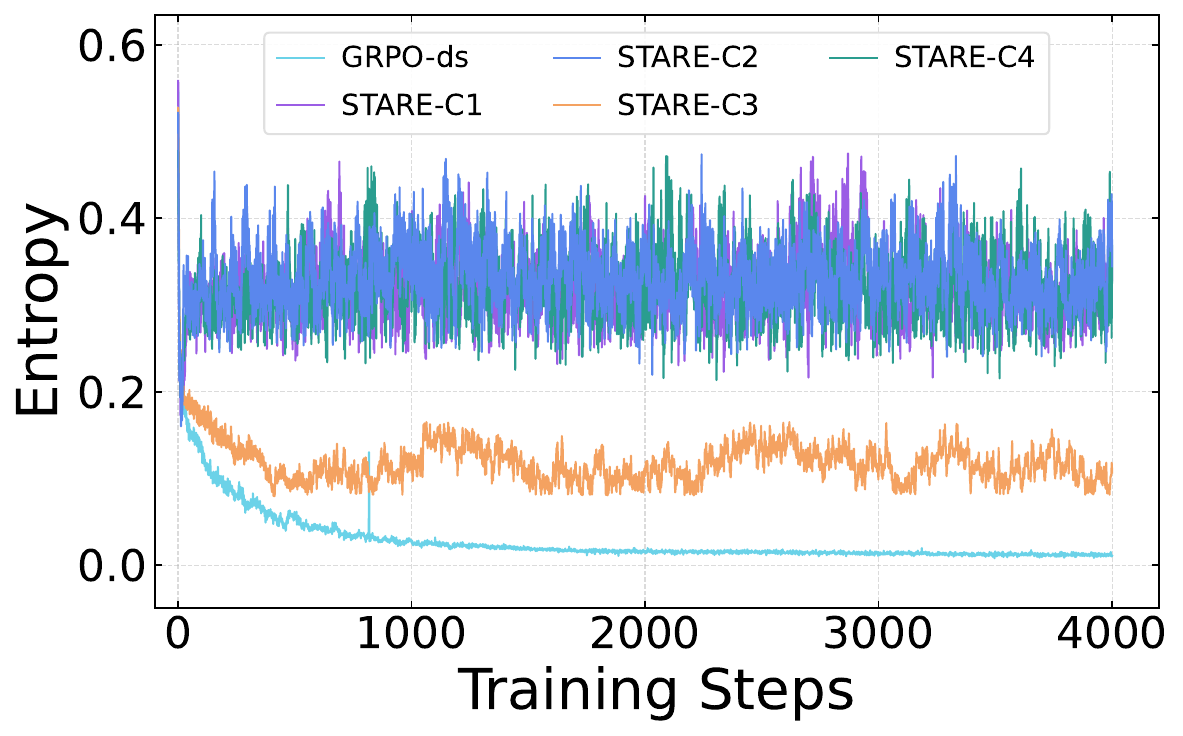}\caption{Entropy curves under combined operations C1-C4, each jointly regulating two 
entropy-critical quadrants.}\label{fig:qwen7b_text_combine_four_quant_entropy_0_4000}
    \end{subfigure}
    \caption{Policy entropy trajectories of all eight STARE variants versus GRPO-ds on 
Qwen2.5-Math-7B-Base over 4000 RL training steps, covering four single-polarity 
operations (O1-O4) and four combined operations (C1-C4) derived from the 
advantage-surprisal four-quadrant decomposition.}\label{fig:stare_single_and_combine_four_quant}
\end{figure}

\textbf{Validation of the High-Surprisal Quantile Proxy.}
Figure~\ref{fig:qwen7b_Surprisal_net_entropy} validates the batch-internal top-$P\%$ surprisal-quantile proxy. Panel~(\subref{fig:qwen7b_surprisal_entropy_below_tgt_s_ratio_0_1000}) shows that the fraction of selected tokens falling within the theoretical entropy-increasing region ($\mathfrak{s}_{i,t} > \mathfrak{s}^*_{i,t}$) rises steadily from $\sim$$60\%$ to $\sim$$95\%$ during training, indicating strong consistency with the critical threshold $\mathfrak{s}^*$ (Proposition~\ref{prop:critical-surprisal-threshold}); panel~(\subref{fig:qwen7b_cumulative_net_entropy_change_0_1000}) further confirms that the cumulative net entropy contribution of $\mathcal{L}_q^+$ remains positive and monotonically increasing, consistent with Corollary~\ref{cor:four-quadrant}. Thus the proxy reliably identifies entropy-critical tokens while obviating per-position solutions of $\Phi(p^*) = 0$; see Appendix~\ref{app:proxy-validation} for further details.

\begin{wraptable}[14]{r}{0.38\textwidth}
\vspace{-0.5cm}
\centering
\caption{The results of single-polarity and combined STARE operations on AIME24 and AIME25.}
\label{tab:stare_polarity_operation_ablation_aime}
\scalebox{0.9}{
\begin{tabular}{lcc}
\toprule
\textbf{Model}  & \textbf{AIME24} & \textbf{AIME25} \\
\midrule
GRPO-ds               & 37.1 & 17.7 \\
\midrule
STARE-O1 & 44.2 & 23.8 \\
STARE-O2  & 40.5 & 20.3 \\
STARE-O3  & 39.6 & 21.6 \\
STARE-O4 & 42.1 & 19.9 \\
\midrule
STARE-C1        & 43.1 & 23.5 \\
STARE-C2         & 42.5 & 24.2 \\
STARE-C3         & 39.9 & 20.8 \\
STARE-C4        & 41.7 & 22.6 \\
\bottomrule
\end{tabular}
}
\end{wraptable}

\textbf{Effects of Single-Polarity and Combined Operations in STARE.}
To validate the token-level reweighting mechanism under the four-quadrant decomposition, we ablate four single-polarity (O1--O4) and four combined (C1--C4) operations, where O1/O3 amplify entropy-increasing signals, O2/O4 attenuate entropy-decreasing ones, and C1--C4 jointly intervene on two quadrants. As shown in Table~\ref{tab:stare_polarity_operation_ablation_aime} and Figure~\ref{fig:stare_single_and_combine_four_quant}, while GRPO-ds suffers rapid entropy decay, all eight variants effectively mitigate this collapse and substantially outperform GRPO-ds AIME24/25 across all four entropy-critical quadrants. Among them, O1 (amplifying $\mathcal{L}_q^+$) and C2 (amplifying $\mathcal{L}_q^+$ while attenuating $\mathcal{L}_q^-$) perform best at $44.2\%/23.8\%$ and $42.5\%/24.2\%$, so we adopt STARE-O1 as the default configuration and STARE-C2 as the dual-sided variant. Further details are provided in Appendix~\ref{app:stare_polarity_ablation}.

\begin{wrapfigure}[14]{r}{0.4\textwidth}
\vspace{-0.45cm}
    \centering
    \includegraphics[width=\linewidth]{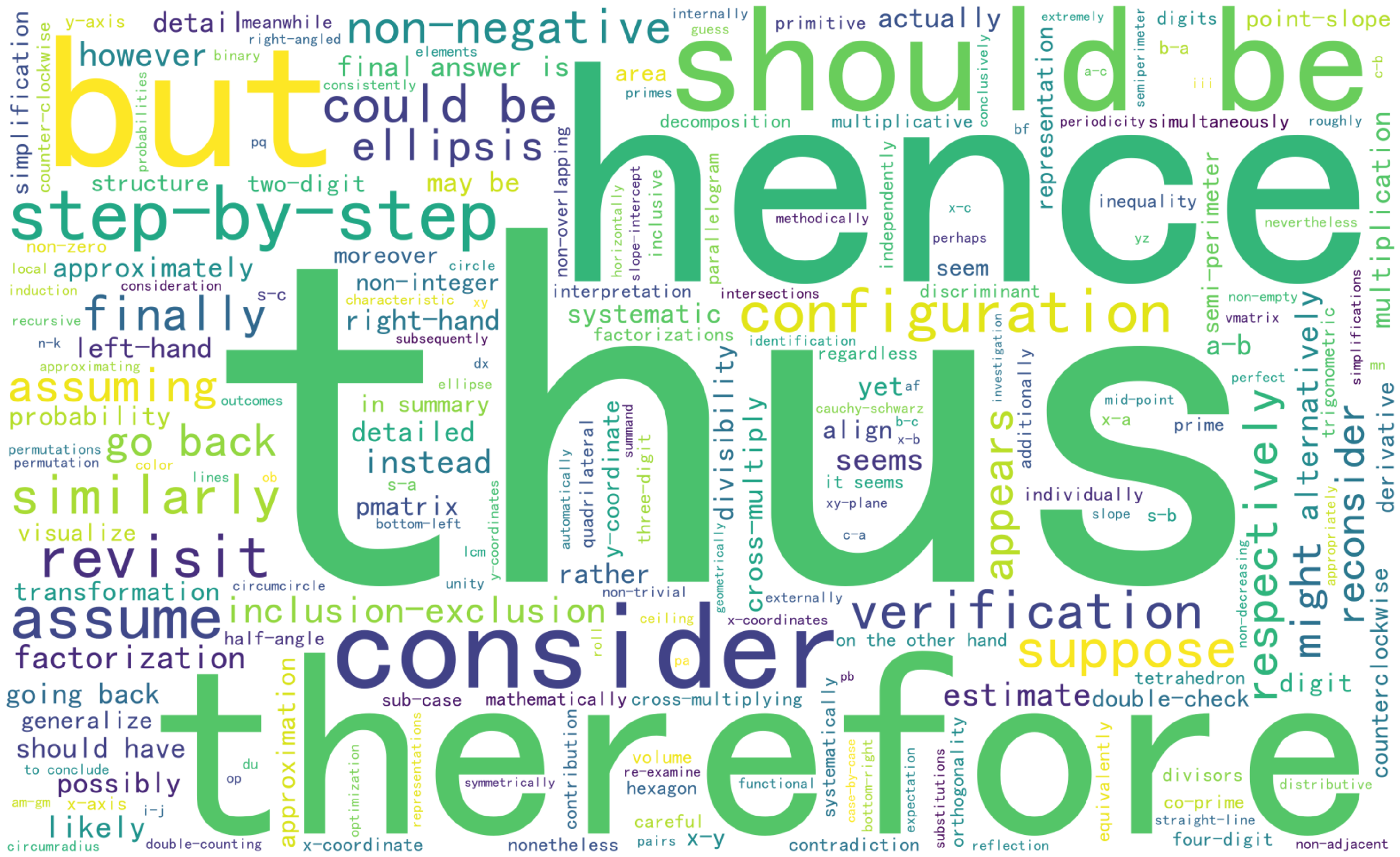}
    \caption{Word cloud of tokens selected by STARE for advantage reweighting.}
    \label{fig:merged_token_frequency_wordcloud}
\end{wrapfigure}
\textbf{Emergent Reflection Behaviors.} To examine how STARE elicits deep reasoning, we inspect the tokens selected for advantage amplification during RL training. As exemplified on Qwen2.5-32B-Base, the word cloud in Figure~\ref{fig:merged_token_frequency_wordcloud} shows reweighted tokens concentrating on uncertainty and self-correction vocabulary, such as \textit{should be}, \textit{but}, \textit{instead}, and \textit{verification}, confirming that the batch-internal surprisal-quantile proxy effectively identifies rare forking tokens~\citep{80_20_Rule} with exploratory semantics. A complementary count across six reflection categories (Figure~\ref{fig:reflection_count_bar_chart_optimized}) further shows STARE markedly surpasses GRPO-ds with the largest margins on reflection and self-correction, jointly demonstrating that STARE activates deep exploration and delivers consistent gains through token-level credit rebalancing. The complete analysis is deferred to Appendix~\ref{app:Emergent_Reflection}.  

Further ablations are deferred to the appendices: fixed vs. adaptive weighting(~\ref{app:fix_adaptive_weights}), target-entropy threshold(~\ref{app:Target_Entropy_Threshold}), gating granularity(~\ref{app:Target_Entropy_Threshold_Gate_Granularity}), off-policy training(~\ref{app:STARE_off_policy}), and fixed-threshold reweighting(~\ref{app:Fixed_Threshold_logprob}).

\section{Conclusion}
We present STARE, a surprisal-guided token-level advantage reweighting mechanism that mitigates policy entropy collapse in GRPO-style RLVR. A first-order analysis of entropy dynamics reveals an advantage–surprisal four-quadrant structure and a near-criticality property. Guided by this insight, STARE reweights entropy-critical tokens via batch-internal surprisal quantiles, coupled with a target-entropy closed-loop gate for stable, minimally intrusive regulation. Across 1.5B–32B models and three task scenarios, STARE sustains thousands of stable RL steps and outperforms DAPO and other competitive baselines by 4\%–8\% on AIME24/25, offering a principled foundation for entropy-aware credit assignment in long-horizon RL post-training of LLMs.

\bibliography{STARE}
\bibliographystyle{colm2024_conference}
\clearpage

\enableappendixcontents
\appendix
\startcontents[appendices]

\section*{Appendix}

\begingroup
\setcounter{tocdepth}{2}
\printcontents[appendices]{}{1}[2]{%
  \section*{Appendix Contents}%
}
\endgroup

\clearpage

\section*{Appendix Overview}

The appendices are organized as follows.
Appendix~\ref{app:related_work} reviews related work on reinforcement learning with verifiable rewards, existing entropy-collapse mitigation strategies, and emerging token-level perspectives, situating STARE within these literatures and clarifying its conceptual and methodological distinctions from prior approaches.
Appendix~\ref{app:additional-experiments} reports additional empirical results that complement the main paper, including a detailed comparison between STARE and GRPO-ds across diverse scenarios and model scales from 1.5B to 32B (Appendix~\ref{app:STARE_GRPO_Scenarios_Scales}), an empirical validation of the batch-internal high-surprisal quantile proxy against the theoretical critical threshold (Appendix~\ref{app:proxy-validation}), ablations of all four single-polarity and four combined token-level reweighting operations (Appendix~\ref{app:stare_polarity_ablation}), ablations on the key hyperparameters $W$ and $P$ together with the target-entropy gate (Appendices~\ref{app:STARE_Key_Hyperparameters_ablation} and~\ref{app:STARE_Target_Entropy_Closed_Loop}), an analysis of emergent reflection behaviors elicited by STARE (Appendix~\ref{app:Emergent_Reflection}), a comparison between fixed and adaptive reweighting schedules (Appendix~\ref{app:fix_adaptive_weights}), ablations on the target-entropy threshold and on the gate granularity (Appendices~\ref{app:Target_Entropy_Threshold} and~\ref{app:Target_Entropy_Threshold_Gate_Granularity}), a validation of STARE under off-policy training (Appendix~\ref{app:STARE_off_policy}), and a comparison with fixed-threshold low-probability token reweighting (Appendix~\ref{app:Fixed_Threshold_logprob}).
Appendix~\ref{app:Algorithm_stare} presents the complete pseudocode of the default STARE-O1 procedure, integrating surprisal-guided entropy-critical token selection, fixed-weight token-level advantage reweighting, and batch-level closed-loop target-entropy gating into a unified algorithm.
Appendix~\ref{app:basic-derivations} establishes the basic differential identities used in Sections~\ref{sec:preliminaries} and~\ref{sec:first-order-gradient} and proves Theorem~\ref{thm:token-entropy-variation}.
Appendix~\ref{app:theory-proofs} proves Proposition~\ref{prop:critical-surprisal-threshold}, Corollary~\ref{cor:four-quadrant}, Theorem~\ref{thm:entropy-neutrality}, Proposition~\ref{prop:entropy-gradient-reweighting}, and Theorem~\ref{thm:near-criticality}, provides the detailed asymmetric entropy contribution analysis, and states explicitly the statistical assumptions underlying the near-criticality result.
Appendix~\ref{app:cross-step} formalizes the cross-step entropy dynamics discussed in Section~\ref{sec:cross-step}.
Appendix~\ref{app:single-polarity} presents all single-polarity operations referenced in the main text, together with sample-level and token-level closed-loop extensions.
Appendix~\ref{app:combined-operations} presents all combined operations and the adaptive weighting scheme.
Finally, Appendix~\ref{app:limitations} discusses the limitations of our study and its broader societal impacts.

To keep the notation unambiguous, Appendices~\ref{app:basic-derivations}--\ref{app:cross-step} use the exact theoretical sets from Section~\ref{sec:theoretical-analysis},
\[
\widetilde{\mathcal L}^{+}
\triangleq
\{(i,t):\hat A_i>0,\ \mathfrak s_{i,t}>\mathfrak s_{i,t}^{*}\},
\qquad
\widetilde{\mathcal L}^{-}
\triangleq
\{(i,t):\hat A_i<0,\ \mathfrak s_{i,t}>\mathfrak s_{i,t}^{*}\},
\]
namely, the sets of token positions whose trajectories have positive (resp.\ negative) advantage and whose token surprisal exceeds the position-dependent critical threshold. By contrast, Appendices~\ref{app:single-polarity} and~\ref{app:combined-operations} use the quantile-based proxy sets employed in the practical implementation of Section~\ref{sec:method}, namely \(\mathcal L_q^{+}\) and \(\mathcal L_q^{-}\).

\section{Related Work}\label{app:related_work}

\textbf{Reinforcement Learning with Verifiable Rewards(RLVR).} In recent years, large language models (LLMs) have advanced rapidly\citep{openai2024o1,claude_example,deepseek_r1,team2025hunyuan,xu2024wizardlm,luo2024wizardcoder}. RLVR has emerged as the dominant post-training paradigm for enhancing the reasoning capability of LLMs, as it leverages verifiable signals to provide precise outcome-level rewards while avoiding the overfitting risks inherent in learned reward models\citep{PPO,VinePPO,Reward_Overoptimization}. GRPO removes the value network and estimates the advantage through group-normalized rewards, demonstrating strong effectiveness in mathematical reasoning, code generation, and tool-use tasks, and further eliciting emergent behaviors such as long chain-of-thought reasoning and self-reflection\citep{deepseek_r1,shao2024deepseekmathpushinglimitsmathematical,luo2023wizardmath,team2025hunyuan,wei2022chain,luo2024arenalearning,Long_cot_Demystifying,luo2024wizardarena,shen2025aienhanced,luo2025agentmath}. Subsequent studies extend the GRPO framework along several dimensions, including advantage estimation, loss aggregation, and sampling strategies, among which DAPO has become a representative baseline through a combination of asymmetric clipping, dynamic sampling, and token-level loss normalization\citep{DAPO,yue2025vapo,EntropyAdv,skywork-or1,chen2026flexible_DID}. As training proceeds over more optimization steps, however, GRPO-style algorithms commonly suffer from policy entropy collapse, in which the entropy decays rapidly, the output diversity vanishes, within-group rollouts become homogeneous, and the number of trainable steps is ultimately capped\citep{semantic_entropy,revisiting_entropy_Prog_Adv_Reweight_jin2025,chen2025overthinking,Leap_Does_rl}. Existing studies have confirmed the prevalence of this phenomenon and have established empirical correlations between policy entropy and downstream performance; nevertheless, a fine-grained theoretical characterization of the underlying token-level gradient causes of entropy collapse remains absent.

\textbf{Entropy Collapse Mitigation Methods}.Existing mitigation strategies fall into three categories. The first category protects low-probability tokens by adjusting the clipping thresholds of the importance-sampling ratio, as exemplified by the clip-higher mechanism in DAPO and by subsequent variants such as differentiated clipping and smooth gating; the influence of these mechanisms on entropy is asymmetric and difficult to control precisely, and in the on-policy regime where the sampling ratio remains close to one, clipping is rarely activated, leaving the actual regulatory capacity limited\citep{DAPO,yue2025vapo,off_policy_actor,chen2026flexible_DID,BAPO,zhou2026offseeker}. The second category applies trajectory-level differentiated weighting between positive and negative samples, including asymmetric importance sampling and the upweighting of rare correct rollouts; since these methods operate at the trajectory granularity, they cannot distinguish the opposing entropy effects of different tokens within the same trajectory\citep{W_Reinforce_Negative_Zhu2025TheSE,A3PO,80_20_Rule,NegativeGradientGRPO2025,EntropyPIC}. The third category couples token-level entropy information into the advantage or loss through entropy-induced advantages\citep{EntropyAdv,Lp_Reg,skywork-or1,Clip_cov}, explicit entropy regularization, or token filtering based on entropy variation; entropy rewards tend to overamplify the signal of high-entropy tokens and induce oscillations, regularization remains highly sensitive to the choice of its coefficient, and entropy-variation-based methods either rely on hard-to-estimate information about unsampled tokens or impose oversimplified binary partitions. In addition, raising the sampling temperature only delays rather than prevents entropy collapse. Overall, existing methods either operate at an excessively coarse granularity or lack a principled understanding of the underlying collapse mechanism.

\textbf{Token-Level Perspectives and STARE.} Recent studies have begun to focus on the differentiated contributions of individual tokens\citep{80_20_Rule,A3PO,Clip_beta_entropy_dynamic,chen2026flexible_DID,W_Reinforce_Negative_Zhu2025TheSE}. Some works demonstrate that a small subset of high-entropy tokens dominates the effective learning signal in RLVR and that performing gradient updates only on this subset already yields efficient performance gains\citep{80_20_Rule}; another line of work identifies critical tokens at decision branching points along reasoning chains and encourages exploration at these positions\citep{chen2026flexible_DID,A3PO,EntropyAdv,EntropyPIC}. These findings echo the analysis presented in this paper. Most existing methods, however, remain heuristic in nature and either fail to keep the policy entropy stable and controllable or lack comprehensive validation across multiple scenarios, model scales, and long-horizon training settings. Starting from a first-order analysis of token-level entropy dynamics, STARE establishes an advantage-surprisal four-quadrant decomposition that exposes the credit assignment mismatch under shared trajectory-level advantages, together with a near-criticality property; building on this analysis, STARE identifies entropy-critical tokens through a batch-internal surprisal-quantile proxy, selectively reweights their effective advantages, and incorporates a target-entropy closed-loop gate, thereby achieving principled token-level regulation of the policy entropy through a minimally invasive modification to the GRPO objective. STARE is comprehensively validated across model scales ranging from 1.5B to 32B and across three task families covering Short-CoT, Long-CoT, and tool-use scenarios, where it consistently delivers stable performance improvements and further unlocks the optimization potential of long-horizon RL training.

\section{Additional Experiments and Analysis on STARE}\label{app:additional-experiments}

\subsection{STARE vs. GRPO-ds: Detailed Comparison across Diverse Scenarios and Model Scales in RL Training}\label{app:STARE_GRPO_Scenarios_Scales}

To validate the effectiveness of STARE in long-horizon RL training, we conduct RL training of $5000$ steps on Qwen2.5-Math-7B-Base under the Short CoT scenario. Figure~\ref{fig:qwen7b_key_metrics} compares STARE against GRPO-ds on key metrics, while Figures~\ref{fig:qwen14b}--\ref{fig:qwen7b_agent_first} further verify the effectiveness of STARE across model scales ranging from $1.5$B to $32$B and across diverse task scenarios.

\textbf{Entropy stability and performance evolution.}
In Figure~\ref{fig:qwen7b_key_metrics}(\subref{fig:qwen7b_Entropy}), GRPO-ds exhibits entropy collapse during the early training phase ($0$--$1000$ steps), in which the policy entropy decreases sharply and approaches zero, which is consistent with the theoretical analysis presented in Section~\ref{sec:theoretical-analysis}. Meanwhile, the accuracy of GRPO-ds on AIME24 and AIME25 peaks around step $1000$ and subsequently saturates, fluctuating without further improvement (Figure~\ref{fig:qwen7b_key_metrics}(\subref{fig:qwen7b_AIME24_Acc})-(\subref{fig:qwen7b_AIME25_Acc})), thereby indicating premature convergence of the policy distribution. In contrast, STARE stabilizes the policy entropy near $H_{\text{tgt}}=0.3$ through token-level advantage reweighting and closed-loop entropy gating, and the accuracy of STARE continues to improve beyond step $1000$, reaching the optimum at step $5000$, which extends the number of trainable steps and unlocks the optimization potential of long-horizon RL training.

\textbf{Exploration-exploitation balance.}
In Figure~\ref{fig:qwen7b_key_metrics}(\subref{fig:qwen7b_AIME24_PassN})-(\subref{fig:qwen7b_AIME25_PassN}), the Pass@32 of STARE consistently surpasses that of GRPO-ds throughout training, which indicates that the policy retains sufficient output diversity and effectively mitigates mode-seeking behavior. In Figure~\ref{fig:qwen7b_key_metrics}(\subref{fig:qwen7b_Reward})-(\subref{fig:qwen7b_Full_solve_Ratio}), the training reward and Full-Solve Ratio of GRPO-ds rise rapidly during the early phase and subsequently plateau, whereas STARE maintains a steady upward trajectory throughout training. Meanwhile, the sustained growth of response length under STARE (Figure~\ref{fig:qwen7b_key_metrics}(\subref{fig:qwen7b_Response})) suggests that the model addresses complex problems by extending the reasoning depth.

\textbf{Cross-scale and cross-scenario generalization.}
Figures~\ref{fig:qwen14b}-\ref{fig:qwen7b_agent_first} systematically present the behavior of STARE under broader configurations. Across the Short CoT scenario(14B, 32B), the Long CoT scenario(R1-Distill-Qwen-1.5B, Qwen3-8B-Base), and the tool-use scenario(7B), the results exhibit a consistent pattern: GRPO-ds suffers from entropy collapse accompanied by performance saturation, whereas STARE maintains entropy stability and delivers continuous accuracy improvements. Notably, on DeepSeek-R1-Distill-Qwen-1.5B, GRPO-ds undergoes gradual entropy decay that drops below $0.2$ by step $5000$, whereas under STARE the policy entropy rapidly recovers to the target band and remains stable starting around step $3500$, with the accuracy on AIME24 and AIME25 improving in tandem (Figure~\ref{fig:qwen1.5b}), which validates the intervention-recovery capability of the proposed method.

In summary, STARE maintains stable policy entropy and delivers consistent performance gains across model scales ranging from $1.5$B to $32$B and across the three task scenarios, which confirms the effectiveness and robustness of the proposed method.

\begin{figure}[t]
    \centering
    \begin{subfigure}[t]{0.48\textwidth}\centering
    \includegraphics[width=\linewidth]{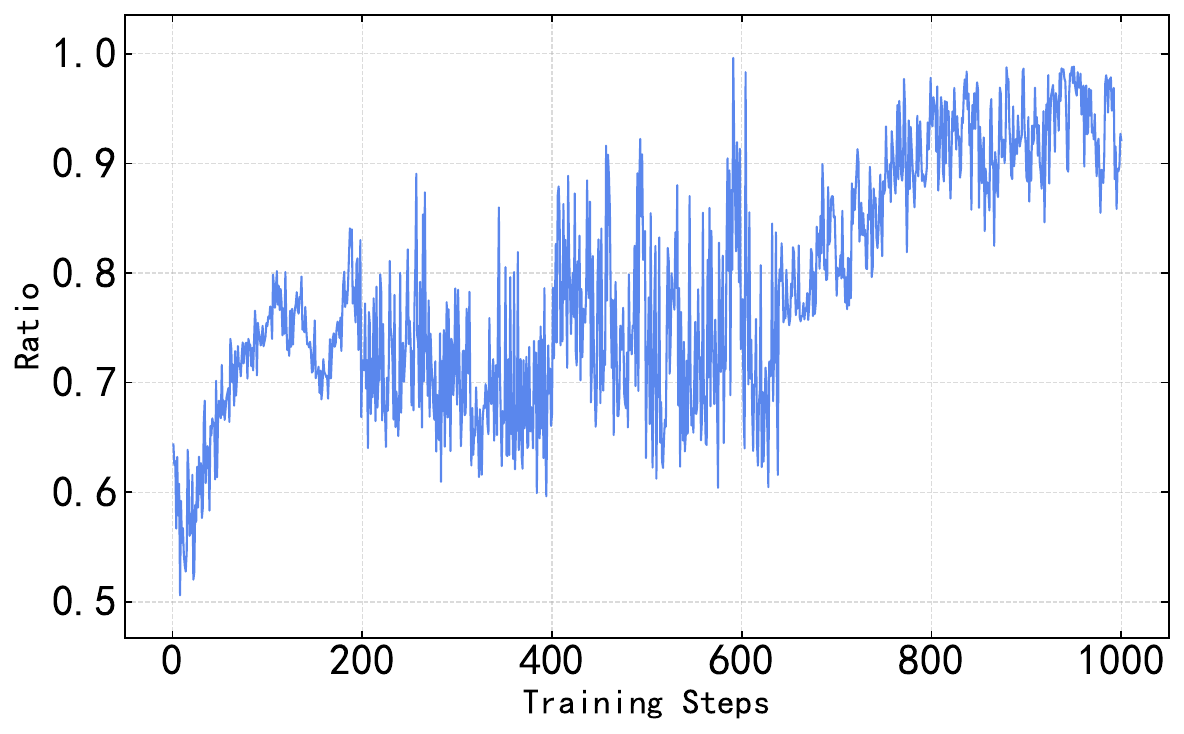}\caption{Fraction of top-$P\%$ high-surprisal tokens in $\mathcal{L}^+_q$ that fall within the theoretical entropy-increasing region ($s_{i,t} > s^*_{i,t}$) across training.}\label{fig:qwen7b_surprisal_entropy_below_tgt_s_ratio_0_1000}
    \end{subfigure}\hfill
    \begin{subfigure}[t]{0.48\textwidth}\centering
        \includegraphics[width=\linewidth]{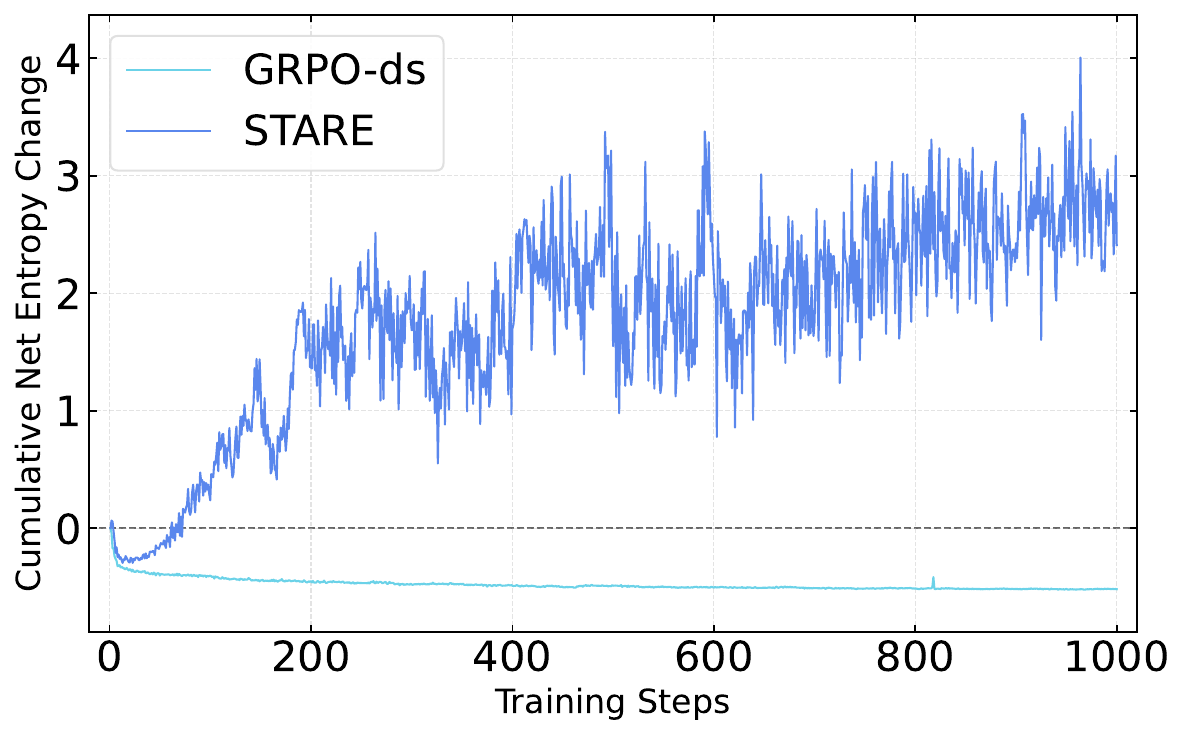}\caption{Cumulative net entropy contribution of $\mathcal{L}^+_q$ under STARE versus GRPO-ds.}\label{fig:qwen7b_cumulative_net_entropy_change_0_1000}
    \end{subfigure}
    \caption{Validation of the batch-internal surprisal-quantile proxy on Qwen2.5-Math-7B-Base 
over 1000 RL training steps, examining its alignment with the theoretical critical threshold and the cumulative net entropy contribution of the selected token subset.}\label{fig:qwen7b_Surprisal_net_entropy}
\end{figure}

\subsection{Validation of the High-Surprisal Quantile Proxy.}\label{app:proxy-validation}
To verify the effectiveness of the batch-internal top-$P\%$ surprisal-quantile proxy in STARE, Figure~\ref{fig:qwen7b_Surprisal_net_entropy} evaluates this proxy from two complementary perspectives. Figure~\ref{fig:qwen7b_Surprisal_net_entropy}(\subref{fig:qwen7b_surprisal_entropy_below_tgt_s_ratio_0_1000}) reports the fraction of tokens selected by the top-$P\%$ high-surprisal criterion that fall within the theoretical entropy-increasing region ($\mathfrak{s}_{i,t} > \mathfrak{s}^*_{i,t}$). This fraction rises steadily from approximately $60\%$ in the early training phase to around $95\%$, indicating strong consistency between the surprisal-quantile proxy and the theoretical critical threshold $\mathfrak{s}^*$ (Proposition~\ref{prop:critical-surprisal-threshold}). Figure~\ref{fig:qwen7b_Surprisal_net_entropy}(\subref{fig:qwen7b_cumulative_net_entropy_change_0_1000}) further shows that the cumulative net entropy contribution of this subset remains positive and monotonically increasing throughout training, thereby confirming that $\mathcal{L}_q^+$ consistently produces a net entropy-increasing effect, as predicted by Corollary~\ref{cor:four-quadrant}. These results demonstrate that the batch-internal surprisal-quantile mechanism effectively approximates the theoretical threshold $\mathfrak{s}^*$ and accurately identifies entropy-critical tokens while eliminating the need to solve $\Phi(p^*) = 0$ at each position, thereby validating both the feasibility and the effectiveness of the proposed proxy strategy.

\subsection{Detailed Effects of Single-Polarity and Combined Operations in STARE.}\label{app:stare_polarity_ablation}
To validate the effectiveness of the token-level reweighting mechanism under the four-quadrant decomposition framework, we ablate all four single-polarity operations (O1, O2, O3, O4) and all four combined operations (C1, C2, C3, C4). Table~\ref{tab:stare_polarity_operation_ablation_aime} reports the accuracy of each operation on AIME24 and AIME25, and Figure~\ref{fig:stare_single_and_combine_four_quant} presents the corresponding policy entropy evolution curves. As shown in Figure~\ref{fig:stare_single_and_combine_four_quant}, GRPO-ds suffers from rapid entropy decay toward zero in the early training phase, whereas all eight STARE variants effectively mitigate entropy collapse. Specifically, O1 and O3 strengthen entropy-increasing signals through weight amplification, O2 and O4 suppress entropy-decreasing signals through weight attenuation, and C1--C4 combine interventions from two quadrants simultaneously. Table~\ref{tab:stare_polarity_operation_ablation_aime} further shows that all STARE variants substantially outperform GRPO-ds, confirming that token-level reweighting is effective across all four entropy-critical quadrants. Among all variants, O1 (amplifying $\mathcal{L}_q^+$) and C2 (amplifying $\mathcal{L}_q^+$ while attenuating $\mathcal{L}_q^-$) achieve the strongest performance, reaching $44.2\%/23.8\%$ and $42.5\%/24.2\%$ on AIME24/AIME25 respectively. We therefore adopt STARE-O1 as the default configuration and STARE-C2 as the dual-sided variant in our work.

\subsection{Detailed Ablation on Key Hyperparameters and Target-Entropy Gating}\label{app:STARE_Key_Hyperparameters_ablation}
To validate the mechanisms underlying the key hyperparameters in STARE ($W$ and $P$) and the target-entropy gating, we conduct ablation experiments on Qwen2.5-Math-7B-Base (Figure~\ref{fig:qwen7b_ablation}). Figure~\ref{fig:qwen7b_ablation}(\subref{fig:7b_ablation_W_values}) examines the reweighting factor $W$ under an open-loop setting (without the target-entropy gate). A minimal perturbation of $W=1.01$ is sufficient to mitigate the entropy decay observed in GRPO-ds; $W \geq 1.05$ yields steady entropy growth, whereas $W \geq 2.0$ drives the entropy upward too aggressively and triggers divergence. This behavior corroborates the near-criticality property (Corollary~\ref{thm:near-criticality}): beyond the critical threshold, $W$ governs the magnitude rather than the direction of the entropy shift. The open-loop regime, however, tends to stabilize the entropy at an excessively high level and thereby induces over-exploration. Once the target-entropy closed-loop gate is introduced with $H_{\text{tgt}}=0.3$, Figure~\ref{fig:qwen7b_ablation}(\subref{fig:7b_ablation_W_with_target_entropy}) shows that values of $W$ across the range $[1.05, 1.5]$ steer the policy entropy into the target band and maintain bounded oscillation around it, which confirms the stability of the closed-loop mechanism and substantially reduces the sensitivity to $W$. Figure~\ref{fig:qwen7b_ablation}(\subref{fig:7b_ablation_top_ratio}) further investigates the high-surprisal token selection ratio $P$. With $W=1.1$ and $H_{\text{tgt}}=0.3$ held fixed, the policy entropy remains within the target band for $P \in [5\%, 20\%]$, and even at $P=40\%$ it is confined to $[0.1, 0.2]$, effectively preventing collapse and indicating that the surprisal-quantile selection admits a broad operating range. Overall, STARE exhibits strong robustness to the configurations of $W$ and $P$, and the target-entropy gate stably constrains the policy entropy. More detailed entropy evolution and performance comparisons are reported in Appendix~\ref{app:STARE_Target_Entropy_Closed_Loop}, Table~\ref{tab:stare_target_entropy_gate_ablation_aime}, and Figure~\ref{fig:STARE_target_entropy}

\begin{figure}[t]
    \centering
    \includegraphics[width=0.75\textwidth]{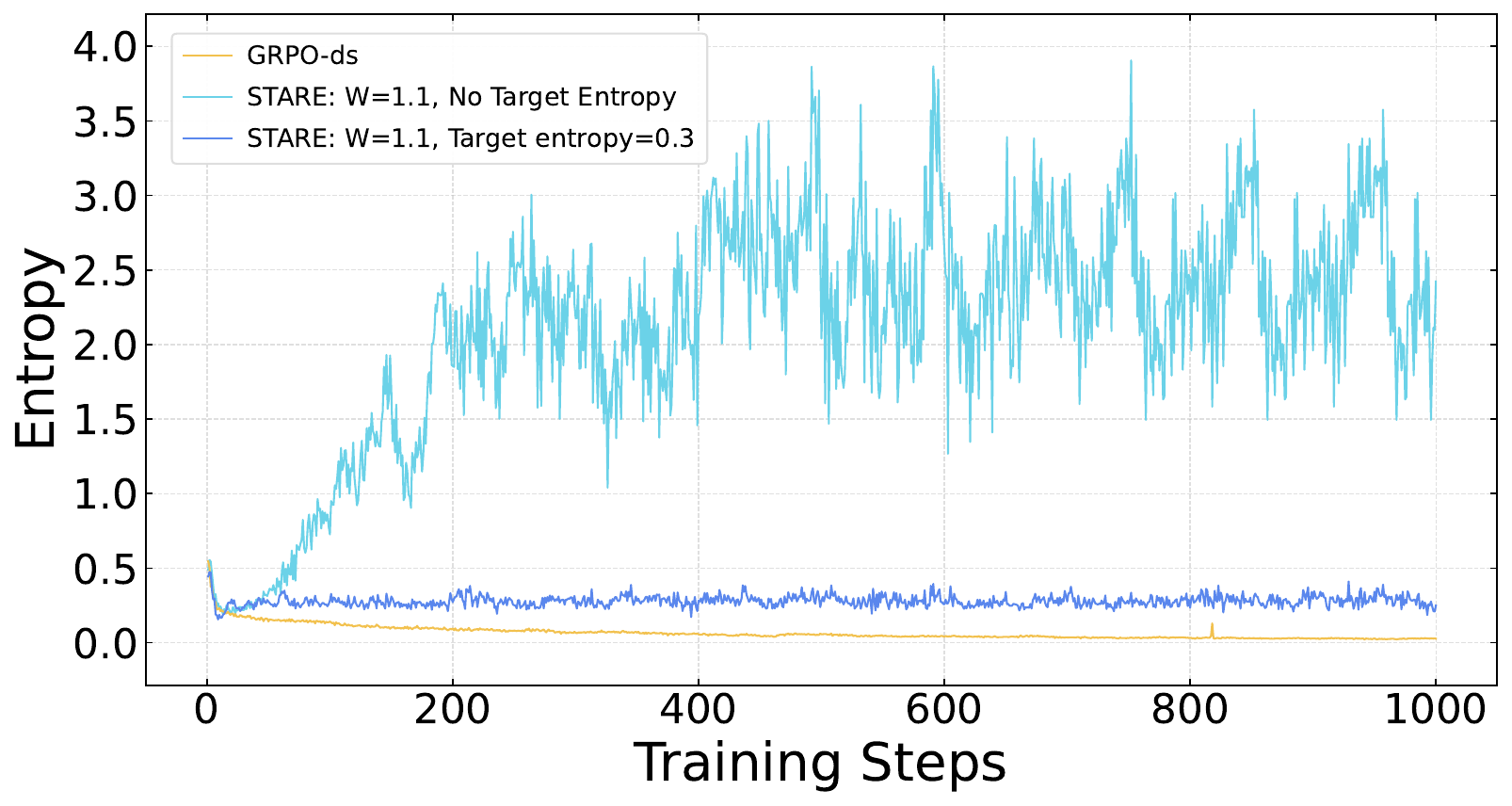}
    \caption{Effect of the target-entropy closed-loop gate on policy entropy regulation for 
STARE ($W{=}1.1$, $P{=}10\%$) on Qwen2.5-Math-7B-Base over 1000 RL training steps. 
The closed-loop gate effectively confines the policy entropy to bounded oscillation around 
the target level $H_{\mathrm{tgt}}{=}0.3$, preventing both entropy collapse and 
over-exploration observed under open-loop reweighting and standard GRPO-ds, respectively.}
    \label{fig:STARE_target_entropy}
\end{figure}

\begin{table}[ht]
\centering
\caption{Ablation of the target-entropy gate for STARE with $W=1.1$ and P=10 on AIME24 and AIME25.}
\label{tab:stare_target_entropy_gate_ablation_aime}
\begin{tabular}{lcc}
\toprule
\textbf{Model} & \textbf{AIME24} & \textbf{AIME25} \\
\midrule
GRPO-ds & 35.2 & 17.3 \\
\midrule
STARE without target-entropy gate & 36.7 & 18.9 \\
STARE with target-entropy gate    & 38.0 & 20.0 \\
\bottomrule
\end{tabular}
\end{table}

\subsection{Effectiveness of Target-Entropy Closed-Loop Gating}\label{app:STARE_Target_Entropy_Closed_Loop}
To validate the necessity of the closed-loop target-entropy gate, we fix $W=1.1$ and $P=10\%$ on Qwen2.5-Math-7B-Base, and compare three configurations over $1000$ RL training steps: GRPO-ds, open-loop STARE, and closed-loop STARE with $H_{\text{tgt}}=0.3$. As shown in Figure~\ref{fig:STARE_target_entropy}, open-loop STARE alleviates entropy collapse but drives the policy entropy upward to an excessively high level, which induces over-exploration; in contrast, closed-loop STARE confines the policy entropy to bounded oscillation around the target band. Table~\ref{tab:stare_target_entropy_gate_ablation_aime} further reports that open-loop STARE attains $36.7\%/18.9\%$ on AIME24/AIME25, already surpassing GRPO-ds, while closed-loop STARE further improves the accuracy to $38.0\%/20.0\%$, with both variants substantially outperforming the GRPO-ds baseline at $35.2\%/17.3\%$. These results confirm that token-level advantage reweighting in STARE mitigates entropy collapse and delivers consistent gains, and that the closed-loop gate stabilizes the policy entropy within a reasonable band to balance exploration and exploitation, thereby yielding additional performance improvements.

\begin{figure}[t]
    \centering
    \includegraphics[width=0.75\textwidth]{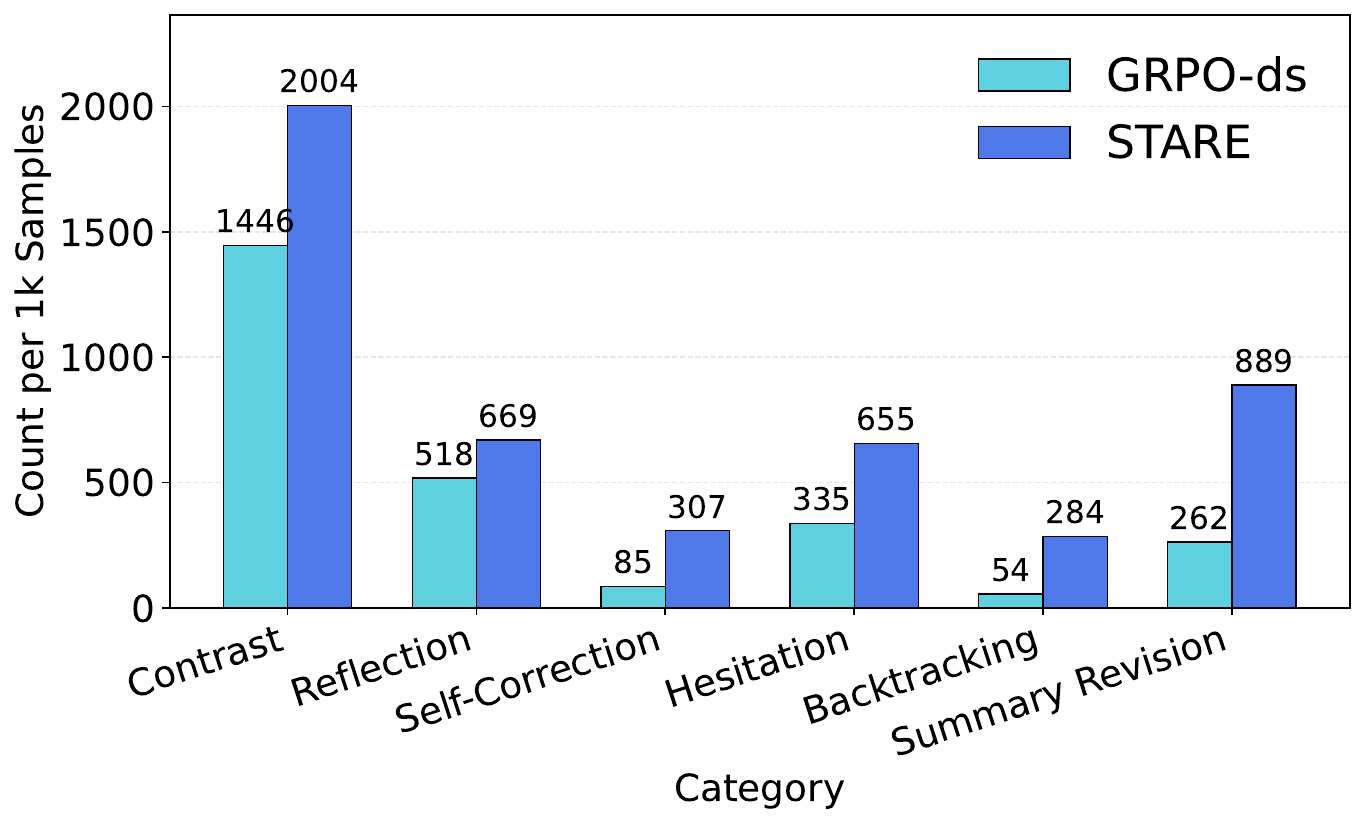}
    \caption{Reflection-related token counts per 1k samples for STARE vs. GRPO-ds on Qwen2.5-32B-Base.}
    \label{fig:reflection_count_bar_chart_optimized}
\end{figure}

\subsection{Details about Emergent Reflection Behaviors.}\label{app:Emergent_Reflection} 
To investigate how STARE elicits deep reasoning, we analyze the reflection behaviors that emerge during RL training. Taking Qwen2.5-32B-Base as a representative example, we randomly sample 50 training steps shared by both STARE and GRPO-ds. This analysis is intended as a coarse, qualitative diagnostic rather than a principled measurement of reasoning depth: we apply heuristic regular-expression matching based on manually-observed patterns to obtain an approximate count of reflection-related tokens, grouped into six categories: contrast (\textit{but, however}), reflection (\textit{wait, reconsider}), self-correction (\textit{made a mistake, correct}), hesitation (\textit{perhaps, possibly, maybe}), backtracking (\textit{recalculate, redo, go back}), and summary revision (\textit{summary, ultimately}). Figure~\ref{fig:reflection_count_bar_chart_optimized} reports reflection-related token counts per 1k samples; STARE substantially exceeds GRPO-ds across all six categories, with markedly larger margins on reflection and self-correction, indicating that the resulting policy retains stronger exploration and greater output diversity. Figure~\ref{fig:merged_token_frequency_wordcloud} further reports the frequency distribution of tokens selected by STARE for advantage amplification; the word cloud reveals that the reweighted tokens concentrate on vocabulary expressing uncertainty and self-correction, such as \textit{should be}, \textit{but}, \textit{instead}, and \textit{verification}, confirming that the batch-internal surprisal-quantile selection effectively identifies rare forking tokens carrying exploratory semantics. Taken together, the emergent growth of reflection tokens and the semantic bias of the reweighted token distribution demonstrate that STARE activates the deep exploration capability of the model and delivers consistent performance gains through token-level credit rebalancing.

\begin{table}[ht]
\centering
\caption{Ablation of fixed and adaptive reweighting coefficients for STARE in 1000 RL training steps on AIME24 and AIME25.}
\label{tab:stare_adaptive_w_ablation_aime}
\begin{tabular}{lcc}
\toprule
\textbf{Model} & \textbf{AIME24} & \textbf{AIME25} \\
\midrule
GRPO-ds & 35.2 & 17.3 \\
STARE: Fixed $W=1.1$ & 38.0 & 20.0 \\
STARE: Adaptive $W$, $W_{\max}=1.5$, $\alpha=0.01$ & 37.7 & 19.5 \\
STARE: Adaptive $W$, $W_{\max}=1.5$, $\alpha=0.02$ & 37.4 & 19.1 \\
STARE: Adaptive $W$, $W_{\max}=2.0$, $\alpha=0.01$ & 36.9 & 18.6 \\
STARE: Adaptive $W$, $W_{\max}=2.0$, $\alpha=0.02$ & 37.1 & 18.1 \\
\bottomrule
\end{tabular}
\end{table}

\subsection{Fixed vs. Adaptive Weights}\label{app:fix_adaptive_weights} 
Table~\ref{tab:stare_adaptive_w_ablation_aime} compares fixed and adaptive weighting schemes on Qwen2.5-Math-7B-Base after 1000 RL training steps. All STARE configurations substantially outperform the GRPO-ds baseline, confirming the robustness of the token-level reweighting mechanism. The fixed weight $W=1.1$ yields the strongest performance, reaching 38.0\% on AIME24 and 20.0\% on AIME25. This outcome aligns with the near-criticality property (Corollary~\ref{thm:near-criticality}): once the reweighting factor exceeds the critical threshold, its specific value primarily controls the magnitude rather than the direction of the per-step entropy shift, so that a moderate fixed value suffices for stable regulation. Among the adaptive variants, the configuration with $W_{\max}=1.5$ and $\alpha=0.01$ performs best and nearly matches the fixed-weight result, whereas enlarging either $W_{\max}$ or $\alpha$ leads to a slight degradation. We therefore adopt the fixed $W=1.1$ as the default configuration, with $W_{\max}=1.5$ and $\alpha=0.01$ recommended when an adaptive schedule is preferred.

\begin{table}[ht]
\centering
\caption{Ablation of target entropy threshlod for STARE in 4000 RL training steps on AIME24 and AIME25.}
\label{tab:stare_target_entropy_ablation_aime}
\begin{tabular}{lcc}
\toprule
\textbf{Model} & \textbf{AIME24} & \textbf{AIME25} \\
\midrule
GRPO-ds & 37.1 & 17.7 \\
STARE: $H_{\mathrm{target}}=0.1$ & 40.4 & 20.5 \\
STARE: $H_{\mathrm{target}}=0.2$ & 43.2 & 23.1 \\
STARE: $H_{\mathrm{target}}=0.3$ & 44.2 & 23.8 \\
STARE: $H_{\mathrm{target}}=0.4$ & 42.8 & 21.6 \\
\bottomrule
\end{tabular}
\end{table}

\begin{figure}[t]
    \centering
    \begin{subfigure}[t]{0.48\textwidth}\centering
    \includegraphics[width=\linewidth]{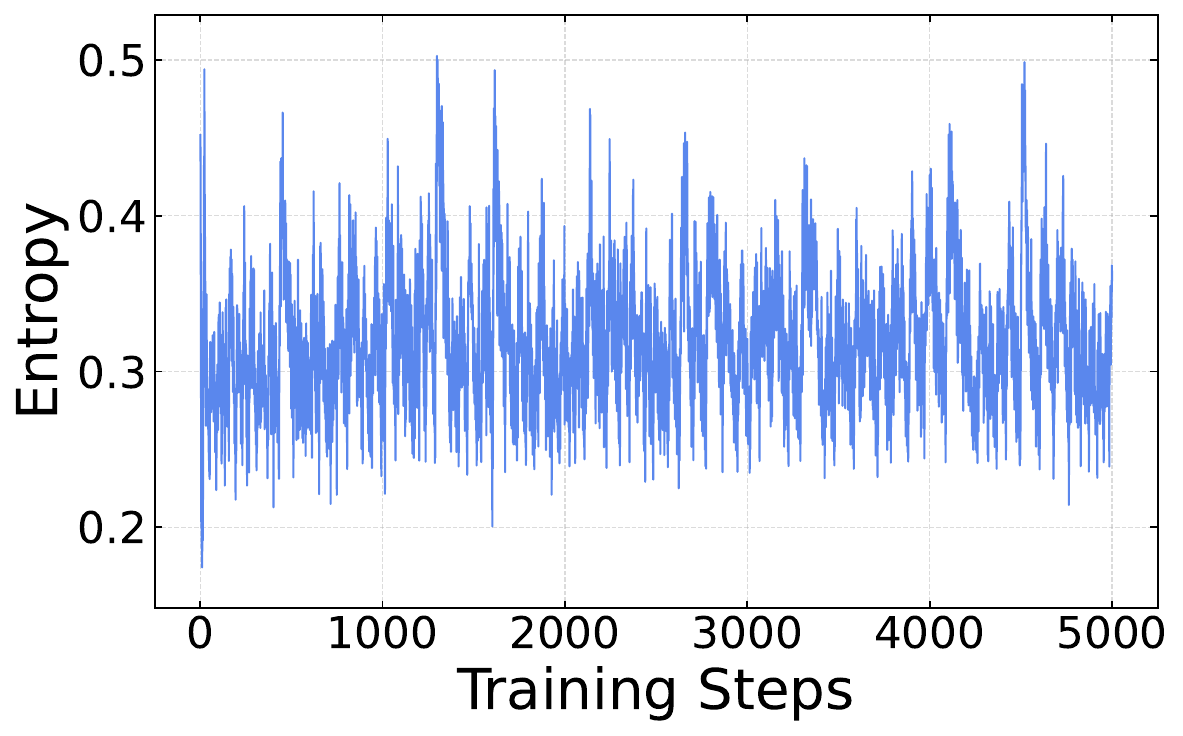}\caption{Policy entropy evolution of STARE under off-policy training (four gradient updates per batch).}\label{fig:qwen7b_text_entropy_off_policy_0_5000}
    \end{subfigure}\hfill
    \begin{subfigure}[t]{0.48\textwidth}\centering
        \includegraphics[width=\linewidth]{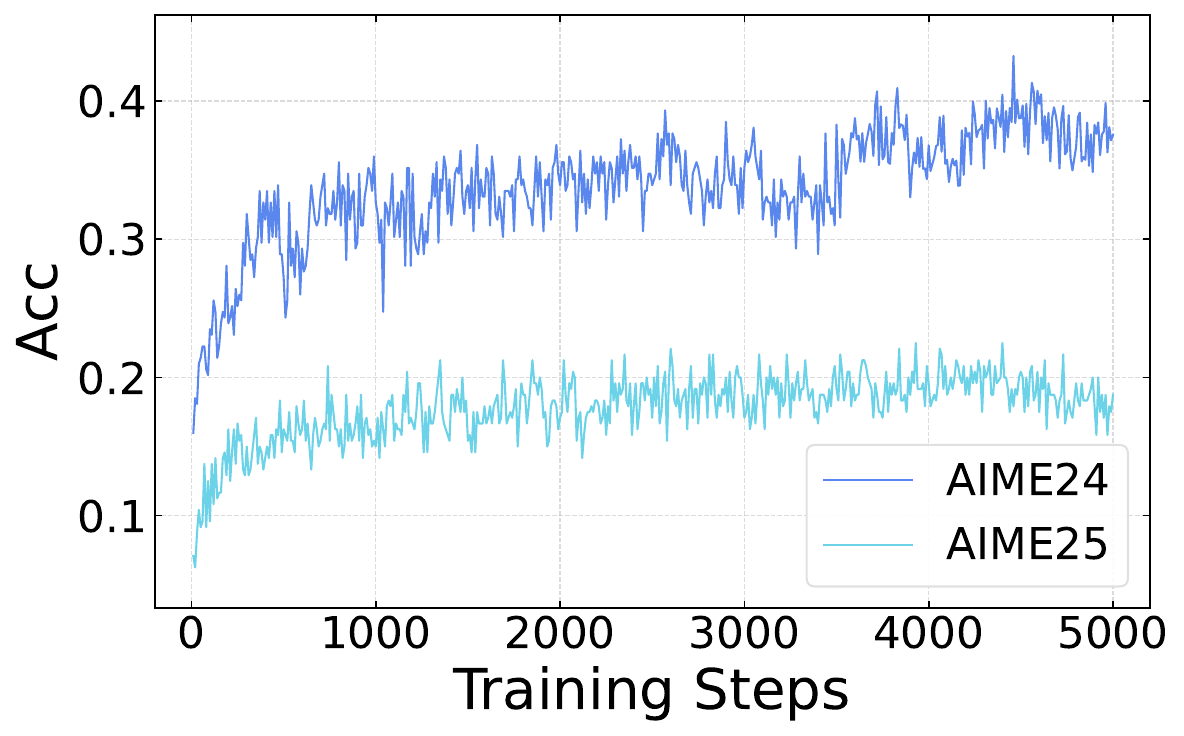}\caption{AIME24 and AIME25 accuracy of STARE under off-policy training.}\label{fig:qwen7b_text_off_policy_aime25_aime24_acc_0_5000_modified}
    \end{subfigure}
    \caption{Off-policy training dynamics of STARE on Qwen2.5-Math-7B-Base with four gradient 
updates per batch over 5{,}000 RL training steps, demonstrating that STARE preserves 
entropy stability and delivers consistent accuracy improvements beyond the on-policy setting.}\label{fig:qwen7b_off_policy_entropy and aime24_25}
\end{figure}

\subsection{Ablation on Target Entropy Threshold}\label{app:Target_Entropy_Threshold} 
Table~\ref{tab:stare_target_entropy_ablation_aime} reports the performance of STARE under different target entropy values $H_{\text{tgt}} \in \{0.1, 0.2, 0.3, 0.4\}$ on Qwen2.5-Math-7B-Base after 4000 RL training steps. All configurations substantially outperform the GRPO-ds baseline, confirming the general effectiveness of the closed-loop entropy regulation mechanism. Performance attains its optimum at $H_{\text{tgt}}=0.3$, reaching 44.2\% on AIME24 and 23.8\% on AIME25, with gains of 7.1\% and 6.1\% over the baseline respectively. The low target entropy restricts the exploration space, whereas the high value easily drives the policy distribution toward over-exploration, and both regimes yield suboptimal outcomes. These results indicate that $H_{\text{tgt}}$ effectively governs the exploration-exploitation balance, and we therefore adopt $H_{\text{tgt}}=0.3$ as the default configuration in the main experiments.

\begin{figure}[t]
    \centering
    \begin{subfigure}[t]{0.48\textwidth}\centering
    \includegraphics[width=\linewidth]{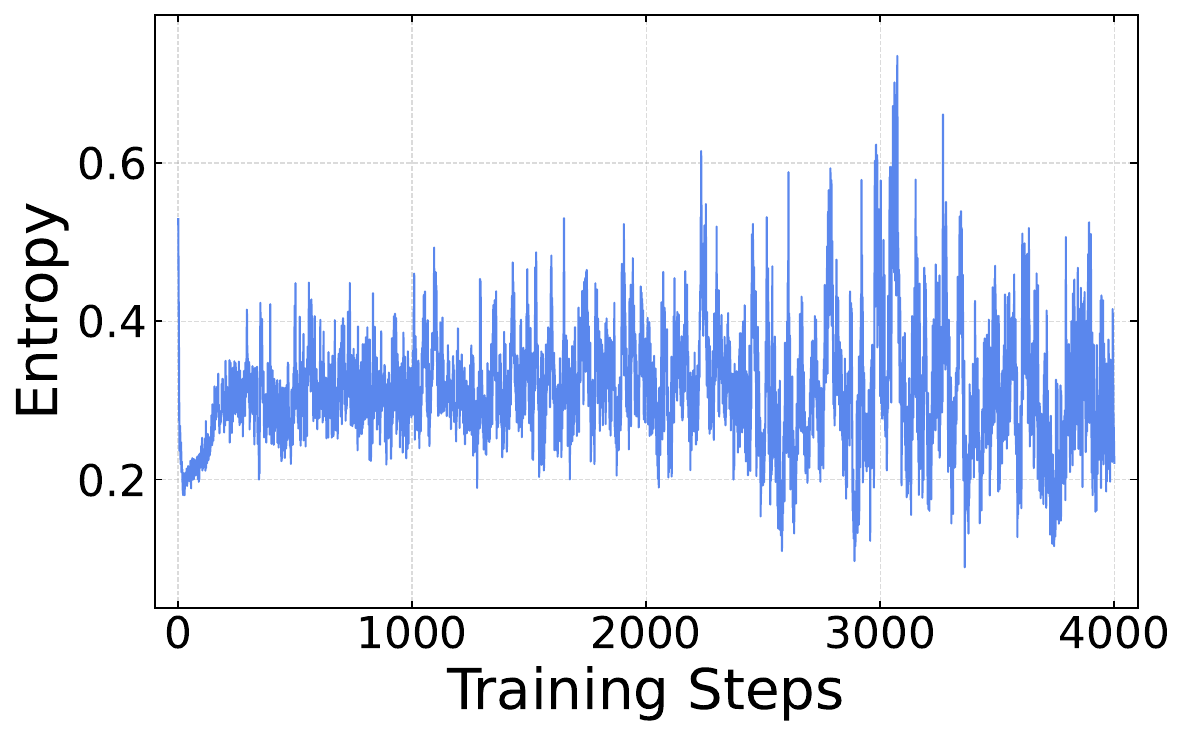}\caption{Policy entropy evolution under fixed-threshold low-probability token reweighting.}\label{fig:qwen7b_0.01_threshold_low_prob_entropy_0_4000}
    \end{subfigure}\hfill
    \begin{subfigure}[t]{0.48\textwidth}\centering
        \includegraphics[width=\linewidth]{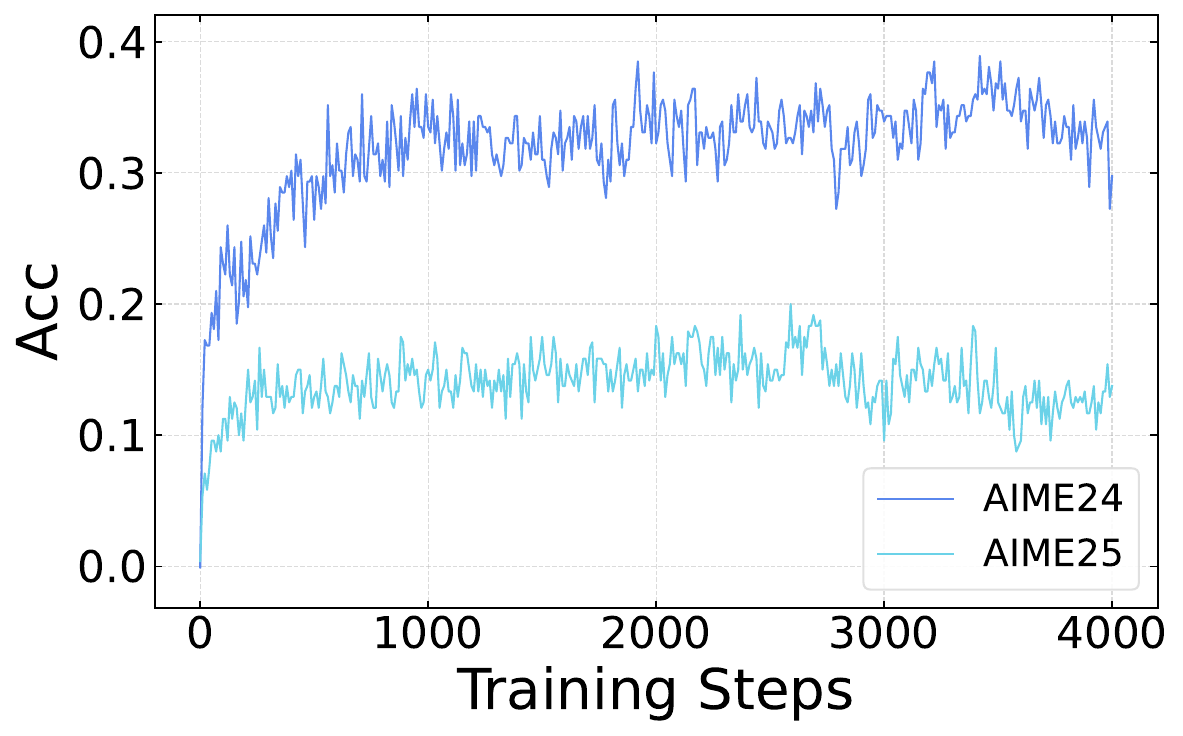}\caption{AIME24 and AIME25 accuracy under fixed-threshold low-probability token reweighting.}\label{fig:qwen7b_0.01_low_prob_aime24_aime25_acc_0_400}
    \end{subfigure}
    \caption{Comparison between fixed-threshold low-probability token reweighting ($p < 0.1$) 
and GRPO-ds on Qwen2.5-Math-7B-Base over 4000 RL training steps, demonstrating the 
inferior entropy regulation and benchmark performance of static probability-based selection 
relative to STARE's batch-internal surprisal-quantile proxy.}\label{fig:qwen7b_0.01_low_prob_entropy and aime24_25}
\end{figure}

\begin{table}[ht]
\centering
\caption{Ablation of target-entropy gate granularity for STARE in 1000 RL training steps on AIME24 and AIME25.}
\label{tab:stare_target_entropy_gate_granularity_aime}
\begin{tabular}{lcc}
\toprule
\textbf{Model} & \textbf{AIME24} & \textbf{AIME25} \\
\midrule
GRPO-ds & 35.2 & 17.3 \\
STARE (token-level target-entropy gate)  & 36.9 & 19.1 \\
STARE (sample-level target-entropy gate) & 37.6 & 19.3 \\
STARE (batch-level target-entropy gate)  & 38.0 & 20.0 \\
\bottomrule
\end{tabular}
\end{table}

\subsection{Ablation on Target-Entropy Gate Granularity}\label{app:Target_Entropy_Threshold_Gate_Granularity} 
On Qwen2.5-Math-7B-Base, we compare three closed-loop gating granularities: a token-level gate that evaluates $H_{i,t} < H_{\text{tgt}}$ at each position, a sample-level gate that conditions activation on the per-sample average entropy $\bar{H}_i$, and a batch-level gate that makes a unified decision based on the batch-average entropy $\bar{H}_k$. As reported in Table~\ref{tab:stare_target_entropy_gate_granularity_aime}, all three granularities substantially outperform the GRPO-ds baseline (35.2\%/17.3\%), confirming the general effectiveness of the closed-loop mechanism. The batch-level gate yields the best performance (38.0\%/20.0\%), surpassing the token-level (36.9\%/19.1\%) and sample-level (37.6\%/19.3\%) variants. Since local entropy estimates at the token and sample levels exhibit higher variance, they tend to induce frequent switching of the gating signal and undermine the stability of the regulation. We therefore adopt the batch-level gate as the default setting in our main experiments.

\begin{table}[ht]
\centering
\caption{Comparison of off-policy and on-policy STARE in 4000 RL training steps on AIME24 and AIME25.}
\label{tab:stare_off_policy_on_policy_aime}
\begin{tabular}{lcc}
\toprule
\textbf{Model} & \textbf{AIME24} & \textbf{AIME25} \\
\midrule
GRPO-ds & 37.1 & 17.7 \\
STARE with off-policy & 43.8 & 22.1 \\
STARE with on-policy  & 44.2 & 23.8 \\
\bottomrule
\end{tabular}
\end{table}

\subsection{Validation of STARE under Off-Policy Training}\label{app:STARE_off_policy} 
To assess the applicability of STARE in the off-policy setting, we conduct 4000 steps of RL training on Qwen2.5-Math-7B-Base with four gradient updates per batch. Figure~\ref{fig:qwen7b_off_policy_entropy and aime24_25} shows that STARE still maintains the policy entropy within the target band, while the accuracy improves steadily throughout training. As reported in Table~\ref{tab:stare_off_policy_on_policy_aime}, off-policy STARE attains $43.8\%/22.1\%$ on AIME24/AIME25, exceeding GRPO-ds ($37.1\%/17.7\%$) by $6.7\%$ and $4.4\%$ respectively, and falling only marginally short of on-policy STARE ($44.2\%/23.8\%$). These results confirm that STARE remains effective under off-policy training, demonstrating the robustness and generality of the proposed method.

\begin{table}[ht]
\centering
\caption{Comparison with Fixed-Threshold Low-Probability Token Reweighting on AIME24 and AIME25.}
\label{tab:fixed_threshold_reweighting_aime}
\begin{tabular}{lcc}
\toprule
\textbf{Method} & \textbf{AIME24} & \textbf{AIME25} \\
\midrule
GRPO-ds & 37.1 & 17.7 \\
Fixed-Threshold Reweighting ($p<0.1$) & 38.9 & 19.7 \\
STARE & 44.2 & 23.8 \\
\bottomrule
\end{tabular}
\end{table}

\subsection{STARE vs. Fixed-Threshold Low-Probability Token Reweighting}\label{app:Fixed_Threshold_logprob} 
STARE identifies entropy-critical tokens through a batch-internal surprisal-quantile proxy. A more direct alternative is to apply uniform weight amplification to all tokens whose sampling probability falls below a fixed threshold such as $p < 0.1$ (Figure ~\ref{fig:qwen7b_0.01_low_prob_entropy and aime24_25}). Under an identical configuration on Qwen2.5-Math-7B-Base with 4000 RL training steps, $W = 1.1$, and $H_{\text{tgt}} = 0.3$, we compare these two token-selection strategies. As reported in Table~\ref{tab:fixed_threshold_reweighting_aime}, fixed-threshold reweighting yields only marginal improvements over GRPO-ds, with gains of 1.8\% on AIME24 and 2.0\% on AIME25, whereas STARE achieves substantially larger gains of 7.1\% and 6.1\% respectively, confirming the superiority of the batch-internal top-$P\%$ surprisal-quantile proxy in identifying entropy-critical tokens. Moreover, the quantile proxy of STARE adaptively tracks the current policy distribution at every training step, and as shown in Figure~\ref{fig:qwen7b_Surprisal_net_entropy}(\subref{fig:qwen7b_surprisal_entropy_below_tgt_s_ratio_0_1000}), its agreement with the theoretical threshold $\mathfrak{s}^*$ rises from approximately 60\% to over 95\% throughout training, ensuring high-precision identification of entropy-critical tokens. These results jointly demonstrate the effectiveness and reliability of the batch-internal surprisal-quantile proxy adopted by STARE.

\newpage
\section{Algorithm: Main STARE Procedure}\label{app:Algorithm_stare} 

\begin{algorithm}[htbp]
\caption{\textsc{STARE-O1}: Surprisal-Guided Token-Level Advantage Reweighting with Fixed Weights and Batch-Level Closed-Loop Target-Entropy Gating}
\label{alg:stare-o1}
\begin{algorithmic}[1]
\Require Initial policy $\pi_{\theta_{0}}$; prompt distribution $\mathcal{D}$; batch size $B$; group size $G$; PPO clip range $\epsilon$;
reweighting factor $W>1$; top-surprisal ratio $P\in(0,1)$; target entropy $H_{\mathrm{tgt}}$; learning rate $\eta$; total steps $K$
\Ensure Trained policy $\pi_{\theta}$
\State Initialize $\theta \leftarrow \theta_{0}$,\ \ $\theta_{\mathrm{old}} \leftarrow \theta_{0}$
\For{$k = 1, 2, \ldots, K$}
    \Statex \quad\quad\textit{\# Stage 1: Rollout and group-normalized advantage estimation}
    \State Sample a batch of prompts $\{x_{i}\}_{i=1}^{B} \sim \mathcal{D}$
    \For{$i = 1, \ldots, B$}
        \State Roll out $G$ responses $\{o_{i,j}\}_{j=1}^{G} \sim \pi_{\theta_{\mathrm{old}}}(\cdot \mid x_{i})$ and obtain verifier rewards $\{r_{i,j}\}_{j=1}^{G}$
        \State $\hat{A}_{i,j} \leftarrow \big(r_{i,j} - \mathrm{mean}(\{r_{i,\cdot}\})\big)\big/\mathrm{std}(\{r_{i,\cdot}\})$ \Comment{shared trajectory-level advantage}
    \EndFor
    \State Flatten all rollouts into a token-level batch with total length $N = \sum_{i,j} T_{i,j}$;\ \ broadcast $\hat{A}_{i}$ to every token $(i,t)$
    \Statex
    \Statex \quad\quad\textit{\# Stage 2: Batch-level closed-loop entropy gating}
    \State $H_{i,t} \leftarrow -\!\sum_{v\in\mathcal{V}}\pi_{\theta}(v\mid c_{i,t})\ln \pi_{\theta}(v\mid c_{i,t})$ for every position $(i,t)$
    \State $\bar{H}_{k} \leftarrow \tfrac{1}{N}\sum_{(i,t)} H_{i,t}$
    \State $g_{k} \leftarrow \mathbb{1}\!\left[\bar{H}_{k} < H_{\mathrm{tgt}}\right]$
       \Comment{activate intervention only when entropy is below target}
    \Statex
    \Statex \quad\quad\textit{\# Stage 3: Surprisal-guided entropy-critical token selection (only when gated on)}
    \If{$g_{k} = 1$}
        \State $s_{i,t} \leftarrow -\ln \pi_{\theta}(o_{i,t}\mid x_{i}, o_{i,<t})$ for every token $(i,t)$
        \State $\mathcal{T}^{+} \leftarrow \{(i,t)\ :\ \hat{A}_{i} > 0\}$
            \Comment{positive-advantage token set}
        \State Sort $\mathcal{T}^{+}$ in \emph{descending} order of $s_{i,t}$; let $K^{+} \leftarrow \lceil P\cdot|\mathcal{T}^{+}|\rceil$
        \State $\mathcal{L}_{q}^{+} \leftarrow$ the first $K^{+}$ entries of the sorted list
            \Comment{top-$P$\% high-surprisal positive-advantage tokens}
    \Else
        \State $\mathcal{L}_{q}^{+} \leftarrow \emptyset$
    \EndIf
    \Statex
    \Statex \quad\quad\textit{\# Stage 4: Token-level advantage reweighting (Variant I, one-sided amplification)}
    \For{each token $(i,t)$ in the batch}
        \State $\omega_{i,t} \leftarrow
        \begin{cases}
            1 + g_{k}\,(W - 1), & (i,t) \in \mathcal{L}_{q}^{+} \\
            1, & \text{otherwise}
        \end{cases}$
    \EndFor
    \Statex
    \Statex \quad\quad\textit{\# Stage 5: STARE clipped-surrogate policy update}
    \State $\rho_{i,t}(\theta) \leftarrow \dfrac{\pi_{\theta}(o_{i,t}\mid x_{i}, o_{i,<t})}{\pi_{\theta_{\mathrm{old}}}(o_{i,t}\mid x_{i}, o_{i,<t})}$
    \State
    $\mathcal{J}_{\textsc{stare}}(\theta) \leftarrow
    \dfrac{1}{N}\!\sum_{(i,t)}\!\omega_{i,t}\,
    \min\!\Big(\rho_{i,t}(\theta)\,\hat{A}_{i},\ \mathrm{clip}\big(\rho_{i,t}(\theta),\,1-\epsilon,\,1+\epsilon\big)\hat{A}_{i}\Big)$
    \State $\theta \leftarrow \theta + \eta\,\nabla_{\theta}\,\mathcal{J}_{\textsc{stare}}(\theta)$
        \Comment{single on-policy gradient step}
    \State $\theta_{\mathrm{old}} \leftarrow \theta$
\EndFor
\State \Return $\pi_{\theta}$
\end{algorithmic}
\end{algorithm}

\newpage

\section{Basic Derivations for Sections~\ref{sec:preliminaries} and~\ref{sec:first-order-gradient}}\label{app:basic-derivations}

\subsection{Softmax Jacobian Derivation}\label{app:softmax-jacobian}

\begin{proposition}[Softmax Jacobian]\label{prop:app-softmax-jacobian}
Let
\[
\pi_\theta(v\mid c)
=
\frac{\exp(z_v)}{\sum_{u\in\mathcal V}\exp(z_u)}.
\]
Then, for any \(v,v'\in\mathcal V\),
\[
\frac{\partial \pi_{v'}}{\partial z_v}
=
\pi_{v'}(\delta_{v'v}-\pi_v).
\]
\end{proposition}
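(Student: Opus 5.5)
The plan is to differentiate the softmax directly, writing $Z\triangleq\sum_{u\in\mathcal V}\exp(z_u)$ so that $\pi_{v'}=\exp(z_{v'})/Z$. The only inputs are two elementary facts. First, $\partial \exp(z_{v'})/\partial z_v=\delta_{v'v}\exp(z_{v'})$, since the numerator depends on $z_v$ only when $v'=v$. Second, $\partial Z/\partial z_v=\exp(z_v)$, since exactly one summand of the partition function involves $z_v$.

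Applying the quotient rule to $\pi_{v'}=\exp(z_{v'})/Z$ gives
\[
\frac{\partial \pi_{v'}}{\partial z_v}
=\frac{\delta_{v'v}\exp(z_{v'})\,Z-\exp(z_{v'})\exp(z_v)}{Z^2}.
\]
Splitting this into two fractions and recognizing each factor $\exp(\cdot)/Z$ as a probability, the first term becomes $\delta_{v'v}\pi_{v'}$ and the second becomes $\pi_{v'}\pi_v$. Factoring out $\pi_{v'}$ then yields $\pi_{v'}(\delta_{v'v}-\pi_v)$, which is the claimed identity.

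An equivalent route is to differentiate $\ln\pi_{v'}=z_{v'}-\ln Z$. This gives $\partial\ln\pi_{v'}/\partial z_v=\delta_{v'v}-\pi_v$, and multiplying by $\pi_{v'}$ recovers the same result. This form is also the one used later in the logit-update derivation.

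There is no real obstacle here. The only point needing care is the bookkeeping of the Kronecker delta, so that the diagonal case $v'=v$, where the result is $\pi_v(1-\pi_v)$, and the off-diagonal case, where it is $-\pi_{v'}\pi_v$, are both covered by the single formula.
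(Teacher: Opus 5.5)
Your proposal is correct and follows the paper's proof essentially verbatim: apply the quotient rule to $\exp(z_{v'})/Z$, then factor out $\exp(z_{v'})/Z=\pi_{v'}$ and identify $\exp(z_v)/Z=\pi_v$. The log-derivative route you mention as an alternative is also valid, and it matches the intermediate identity $\partial \log\pi_a/\partial z_v=\delta_{va}-\pi_v$ that the paper derives later for the logit update.
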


\begin{proof}
Let
\[
Z\triangleq \sum_{u\in\mathcal V}\exp(z_u),
\qquad
\pi_{v'}=\frac{\exp(z_{v'})}{Z}.
\]
Differentiating with respect to \(z_v\) yields
\[
\frac{\partial \pi_{v'}}{\partial z_v}
=
\frac{\delta_{v'v}\exp(z_{v'})\,Z-\exp(z_{v'})\exp(z_v)}{Z^2}.
\]
Factoring out \(\exp(z_{v'})\) from the numerator gives
\[
\frac{\partial \pi_{v'}}{\partial z_v}
=
\frac{\exp(z_{v'})}{Z}
\left(
\delta_{v'v}-\frac{\exp(z_v)}{Z}
\right).
\]
Using
\[
\frac{\exp(z_{v'})}{Z}=\pi_{v'},
\qquad
\frac{\exp(z_v)}{Z}=\pi_v,
\]
we obtain
\[
\frac{\partial \pi_{v'}}{\partial z_v}
=
\pi_{v'}(\delta_{v'v}-\pi_v).
\]
This proves the result.
\end{proof}

\subsection{Token-Level Logit Update in the Unclipped GRPO Regime}\label{app:logit-update}

\begin{proposition}[Token-level logit update in the unclipped GRPO regime]\label{prop:app-logit-update}
In the unclipped regime of GRPO, at a given decoding position, the gradient of the local surrogate objective with respect to the logit vector is aligned with
\[
\hat A\,\nabla_z\log\pi_\theta(a\mid c).
\]
Absorbing the positive proportionality constant into an infinitesimal step size \(\eta>0\) gives the equivalent logit update
\[
\Delta z_v
=
\eta\,\hat A\,(\delta_{va}-\pi_v),
\qquad v\in\mathcal V.
\]
\end{proposition}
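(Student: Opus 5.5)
In the unclipped regime the plan is to compute the per-position surrogate gradient directly with respect to the logit vector $z$, and then show that it is a positive multiple of $\hat A\,\nabla_z\log\pi_\theta(a\mid c)$.

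First, at a fixed position with sampled token $a$, the contribution to $\mathcal J_{\mathrm{GRPO}}$ is $\frac1N\min(\rho\hat A,\operatorname{clip}(\rho,1-\epsilon,1+\epsilon)\hat A)$. In the unclipped regime the minimum is attained by the first branch, and this holds on a neighborhood, so the local objective equals $\frac1N\rho(\theta)\hat A$ with $\rho=\pi_\theta(a\mid c)/\pi_{\theta_{\mathrm{old}}}(a\mid c)$. Since $\pi_{\theta_{\mathrm{old}}}$ is a constant, differentiation gives
\[
\nabla_z\Big(\tfrac1N\rho\hat A\Big)=\tfrac{\hat A}{N}\,\frac{\pi_\theta(a\mid c)}{\pi_{\theta_{\mathrm{old}}}(a\mid c)}\,\nabla_z\log\pi_\theta(a\mid c)=\tfrac{\rho}{N}\,\hat A\,\nabla_z\log\pi_\theta(a\mid c).
\]
Here I used $\nabla\pi=\pi\nabla\log\pi$. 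The prefactor $\rho/N$ is strictly positive, which gives the claimed alignment. In the on-policy evaluation point $\theta=\theta_{\mathrm{old}}$ it equals $1/N$.

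Second, compute $\partial\log\pi_a/\partial z_v=\pi_a^{-1}\,\partial\pi_a/\partial z_v$. Applying Proposition~\ref{prop:app-softmax-jacobian} gives $\pi_a^{-1}\pi_a(\delta_{av}-\pi_v)=\delta_{va}-\pi_v$. Absorbing $\rho/N$ and the learning rate into $\eta>0$ then yields $\Delta z_v=\eta\hat A(\delta_{va}-\pi_v)$.

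The only delicate point is the modeling convention: the statement treats the logits at this position as free parameters, so the update is in logit space rather than parameter space. The step I expect to need the most care is justifying that the min selects the unclipped branch in a neighborhood, so that the gradient exists and the clip contributes nothing. For $\hat A>0$ this requires $\rho<1+\epsilon$, and for $\hat A<0$ it requires $\rho>1-\epsilon$. Both hold strictly at $\rho=1$, which makes the on-policy case immediate.
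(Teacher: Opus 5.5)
Your proposal is correct and follows essentially the same route as the paper: reduce the unclipped surrogate to $\hat A$ times a positive multiple of $\nabla_z\log\pi_\theta(a\mid c)$, then apply the softmax Jacobian to obtain $\delta_{va}-\pi_v$. You are more explicit than the paper, which simply asserts the proportionality. You identify the constant as $\rho/N$, and you check when the minimum selects the unclipped branch: $\rho<1+\epsilon$ for $\hat A>0$, and $\rho>1-\epsilon$ for $\hat A<0$.
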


\begin{proof}
When clipping is inactive, the gradient direction of the local surrogate objective with respect to the model parameters is proportional to
\[
\hat A\,\nabla_\theta \log \pi_\theta(a\mid c).
\]
By the chain rule, an equivalent ascent direction in logit space is
\[
\frac{\partial}{\partial z_v}
\bigl[\hat A\log\pi_\theta(a\mid c)\bigr]
=
\hat A\,\frac{\partial}{\partial z_v}\log\pi_a.
\]
Moreover,
\[
\frac{\partial}{\partial z_v}\log\pi_a
=
\frac{1}{\pi_a}\frac{\partial \pi_a}{\partial z_v}.
\]
Substituting Proposition~\ref{prop:app-softmax-jacobian} gives
\[
\frac{\partial \pi_a}{\partial z_v}
=
\pi_a(\delta_{va}-\pi_v),
\]
and therefore
\[
\frac{\partial}{\partial z_v}\log\pi_a
=
\frac{1}{\pi_a}\cdot \pi_a(\delta_{va}-\pi_v)
=
\delta_{va}-\pi_v.
\]
Hence
\[
\frac{\partial}{\partial z_v}
\bigl[\hat A\log\pi_a\bigr]
=
\hat A(\delta_{va}-\pi_v).
\]
Taking an infinitesimal gradient ascent step yields
\[
\Delta z_v
=
\eta\,\hat A(\delta_{va}-\pi_v).
\]
This proves the result.
\end{proof}

\subsection{Lemma~\ref{lem:entropy-gradient-logits} (Entropy Gradient with Respect to Logits: Surprisal-Deviation Form)}\label{app:entropy-gradient-logits}

For any \(v\in\mathcal V\),
\[
\frac{\partial H}{\partial z_v}
=
\pi_v(\mathfrak s_v-H),
\qquad
\mathfrak s_v\triangleq -\ln\pi_v.
\]

\begin{proof}
Fix the context \(c\) and suppress it in the notation. By the definition of the Shannon entropy of the conditional next-token distribution,
\[
H=-\sum_{u\in\mathcal V}\pi_u\ln\pi_u.
\]
Differentiating with respect to \(z_v\) gives
\[
\frac{\partial H}{\partial z_v}
=
-\sum_{u\in\mathcal V}
\frac{\partial}{\partial z_v}\bigl(\pi_u\ln\pi_u\bigr).
\]
Applying the product rule to each term,
\[
\frac{\partial}{\partial z_v}(\pi_u\ln\pi_u)
=
\frac{\partial \pi_u}{\partial z_v}\ln\pi_u
+
\pi_u\cdot \frac{1}{\pi_u}\frac{\partial \pi_u}{\partial z_v}
=
\frac{\partial \pi_u}{\partial z_v}(\ln\pi_u+1).
\]
Thus
\[
\frac{\partial H}{\partial z_v}
=
-\sum_{u\in\mathcal V}
\frac{\partial \pi_u}{\partial z_v}(\ln\pi_u+1).
\]
Using Proposition~\ref{prop:app-softmax-jacobian} yields
\[
\frac{\partial H}{\partial z_v}
=
-\sum_{u\in\mathcal V}
\pi_u(\delta_{uv}-\pi_v)(\ln\pi_u+1).
\]
Separating the two terms gives
\[
\frac{\partial H}{\partial z_v}
=
-\sum_{u\in\mathcal V}\pi_u\delta_{uv}(\ln\pi_u+1)
+
\sum_{u\in\mathcal V}\pi_u\pi_v(\ln\pi_u+1).
\]
The first term reduces to
\[
-\sum_{u\in\mathcal V}\pi_u\delta_{uv}(\ln\pi_u+1)
=
-\pi_v(\ln\pi_v+1),
\]
and the second term becomes
\[
\sum_{u\in\mathcal V}\pi_u\pi_v(\ln\pi_u+1)
=
\pi_v\sum_{u\in\mathcal V}\pi_u(\ln\pi_u+1).
\]
Since
\[
\sum_{u\in\mathcal V}\pi_u\ln\pi_u=-H,
\qquad
\sum_{u\in\mathcal V}\pi_u=1,
\]
we have
\[
\sum_{u\in\mathcal V}\pi_u(\ln\pi_u+1)
=
-H+1.
\]
Substituting back yields
\[
\frac{\partial H}{\partial z_v}
=
-\pi_v(\ln\pi_v+1)+\pi_v(-H+1)
=
-\pi_v\ln\pi_v-\pi_v H.
\]
Using \(\mathfrak s_v=-\ln\pi_v\), we obtain
\[
-\pi_v\ln\pi_v=\pi_v\mathfrak s_v,
\]
and therefore
\[
\frac{\partial H}{\partial z_v}
=
\pi_v(\mathfrak s_v-H).
\]
This proves the result.
\end{proof}

\subsection{Theorem~\ref{thm:token-entropy-variation} (Token-Level Entropy Variation)}\label{app:token-entropy-variation}

In the unclipped regime of GRPO, let \(\hat A\) denote the trajectory-level normalized advantage assigned to the current token position, and let \(a\) denote the token sampled at the current decoding position, with conditional probability
\[
p=\pi(a\mid c),
\qquad
\mathfrak s_a=-\ln p.
\]
Define
\[
S_2\triangleq \sum_{v\in\mathcal V}\pi_v^2(\ln\pi_v+H),
\]
and
\[
\Phi(p)\triangleq p(\ln p+H)-S_2.
\]
Then the first-order directional derivative of the conditional policy entropy along the GRPO policy-gradient direction satisfies
\[
\left.\frac{dH}{d\eta}\right|_{\eta=0}
=
- \hat A\,\Phi(p).
\]

\begin{proof}
By Proposition~\ref{prop:app-logit-update}, the logit velocity along the GRPO policy-gradient direction is
\[
\left.\frac{dz_v}{d\eta}\right|_{\eta=0}
=
\hat A(\delta_{va}-\pi_v).
\]
Therefore, by the definition of the directional derivative,
\[
\left.\frac{dH}{d\eta}\right|_{\eta=0}
=
\sum_{v\in\mathcal V}
\frac{\partial H}{\partial z_v}
\left.\frac{dz_v}{d\eta}\right|_{\eta=0}.
\]
Substituting Lemma~\ref{lem:entropy-gradient-logits} gives
\[
\left.\frac{dH}{d\eta}\right|_{\eta=0}
=
\sum_{v\in\mathcal V}
\pi_v(\mathfrak s_v-H)\,\hat A(\delta_{va}-\pi_v).
\]
Factoring out \(\hat A\) yields
\[
\left.\frac{dH}{d\eta}\right|_{\eta=0}
=
\hat A
\sum_{v\in\mathcal V}
\pi_v(\mathfrak s_v-H)(\delta_{va}-\pi_v).
\]
Splitting the sum gives
\[
\left.\frac{dH}{d\eta}\right|_{\eta=0}
=
\hat A
\left[
\sum_{v\in\mathcal V}\pi_v(\mathfrak s_v-H)\delta_{va}
-
\sum_{v\in\mathcal V}\pi_v^2(\mathfrak s_v-H)
\right].
\]
Only the term \(v=a\) contributes to the first sum, so
\[
\sum_{v\in\mathcal V}\pi_v(\mathfrak s_v-H)\delta_{va}
=
\pi_a(\mathfrak s_a-H)
=
p(\mathfrak s_a-H).
\]
Using \(\mathfrak s_v=-\ln\pi_v\), the second term becomes
\[
-\sum_{v\in\mathcal V}\pi_v^2(\mathfrak s_v-H)
=
\sum_{v\in\mathcal V}\pi_v^2(-\mathfrak s_v+H)
=
\sum_{v\in\mathcal V}\pi_v^2(\ln\pi_v+H)
=
S_2.
\]
Hence
\[
\left.\frac{dH}{d\eta}\right|_{\eta=0}
=
\hat A\bigl[p(\mathfrak s_a-H)+S_2\bigr].
\]
Substituting \(\mathfrak s_a=-\ln p\) yields
\[
p(\mathfrak s_a-H)+S_2
=
p(-\ln p-H)+S_2
=
-\bigl[p(\ln p+H)-S_2\bigr]
=
-\Phi(p).
\]
Therefore,
\[
\left.\frac{dH}{d\eta}\right|_{\eta=0}
=
- \hat A\,\Phi(p).
\]
This proves the result.
\end{proof}

\section{Complete Proofs for Sections~\ref{sec:four-quadrant} and~\ref{sec:batch-level} and Near-Criticality Analysis}\label{app:theory-proofs}

\subsection{Positivity of \texorpdfstring{$S_2$}{S2} under Non-Uniform Distributions}\label{app:s2-positive}

\begin{lemma}[Positivity of \texorpdfstring{$S_2$}{S2} under non-uniform distributions]\label{lem:app-s2-positive}
If the conditional distribution $\pi$ is non-uniform, then
\[
S_2=\sum_{v\in\mathcal V}\pi_v^2(\ln\pi_v+H)>0.
\]
If $\pi$ is uniform, then $S_2=0$.
\end{lemma}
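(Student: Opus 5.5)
The plan is to rewrite $S_2$ as a covariance under $\pi$. Since $\ln\pi_v + H = -(\mathfrak{s}_v - H)$, we have
\[
S_2=\sum_{v}\pi_v\cdot\pi_v(\ln\pi_v+H)=\mathbb{E}_{\pi}\bigl[\pi_v(\ln\pi_v+H)\bigr].
\]
Because $\mathbb{E}_\pi[\ln\pi_v+H]=-H+H=0$, any constant can be subtracted from the first factor, giving
\[
S_2=\mathrm{Cov}_{\pi}\bigl(\pi_v,\ \ln\pi_v\bigr)=\mathbb{E}_\pi\bigl[(\pi_v-\mathbb{E}_\pi[\pi_v])(\ln\pi_v-\mathbb{E}_\pi[\ln\pi_v])\bigr].
\]
Only tokens with $\pi_v>0$ contribute, so the logarithm is well defined on the support.

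The key step is to use that $x\mapsto\ln x$ is strictly increasing. The covariance of a random variable $X=\pi_V$ with $f(X)$, for strictly increasing $f$, is nonnegative, and it is strictly positive unless $X$ is a.s. constant. To see this cleanly, I would use the symmetrized form with an independent copy $V'\sim\pi$:
\[
S_2=\tfrac12\sum_{v,v'}\pi_v\pi_{v'}(\pi_v-\pi_{v'})(\ln\pi_v-\ln\pi_{v'}).
\]
Each summand is $\ge 0$, since both differences share a sign. A summand is $>0$ exactly when $\pi_v\neq\pi_{v'}$ with both probabilities positive.

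Next comes the case analysis, which is where the statement needs care. If $\pi$ is uniform on $\mathcal V$, every pair has $\pi_v=\pi_{v'}$, so $S_2=0$. One can also check this directly: $\ln\pi_v=-H$ for all $v$. If $\pi$ is non-uniform, there exist $v,v'$ with $\pi_v\neq\pi_{v'}$.

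The main obstacle is the situation where all positive masses are equal, i.e. $\pi$ is uniform on a proper subset of its support, with some zero-probability tokens. Then the pairs with differing probabilities involve a zero weight, so they vanish, and $S_2=0$ even though $\pi$ is non-uniform on $\mathcal V$. I would therefore interpret "non-uniform" as non-uniform on its support. This is consistent with softmax parameterization, where every $\pi_v>0$, so the issue disappears and strict positivity follows from the existence of one pair with $\pi_v\neq\pi_{v'}$, both positive. I would state this full-support assumption explicitly at the start of the proof.
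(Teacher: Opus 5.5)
Your proposal is correct and follows essentially the same route as the paper: both write $S_2=\operatorname{Cov}(\pi_V,\ln\pi_V)$ using $\mathbb{E}_\pi[\ln\pi_V+H]=0$, pass to the symmetrized double sum over an independent copy, and conclude from the strict monotonicity of the logarithm. Your caveat about zero-probability tokens is handled in the paper the same way you propose, by invoking the softmax parameterization to guarantee every $\pi_v>0$.
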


\begin{proof}
Let $V\sim\pi$ and define
\[
X\triangleq \pi_V.
\]
Then
\begin{align*}
\mathbb E[\ln X+H]
&= \sum_{v\in\mathcal V}\pi_v(\ln\pi_v+H) \\
&= \sum_v\pi_v\ln\pi_v+H\sum_v\pi_v \\
&= -H+H \\
&=0.
\end{align*}
Moreover,
\begin{align*}
S_2
&= \sum_v\pi_v^2(\ln\pi_v+H) \\
&= \mathbb E\bigl[X(\ln X+H)\bigr].
\end{align*}
Since $\mathbb E[\ln X+H]=0$, we obtain
\begin{align*}
S_2
&= \mathbb E\bigl[X(\ln X+H)\bigr] \\
&\quad - \mathbb E[X]\mathbb E[\ln X+H] \\
&= \operatorname{Cov}(X,\ln X+H) \\
&= \operatorname{Cov}(X,\ln X).
\end{align*}
Using the symmetric form of covariance, and letting $X'$ be an independent copy of $X$, we have
\[
\operatorname{Cov}(X,\ln X)
=
\frac12\mathbb E\bigl[(X-X')(\ln X-\ln X')\bigr].
\]
Therefore,
\[
S_2
=
\frac12\sum_{u,v\in\mathcal V}
\pi_u\pi_v(\pi_u-\pi_v)(\ln\pi_u-\ln\pi_v).
\]
Because the logarithm is strictly increasing, for any $a,b>0$,
\[
(a-b)(\ln a-\ln b)\ge 0,
\]
with equality if and only if $a=b$. Hence every term in the summation above is nonnegative.

If $\pi$ is non-uniform, there exist $u,v$ such that $\pi_u\neq\pi_v$. In the softmax setting considered in this paper, all token probabilities are strictly positive, so the corresponding weight $\pi_u\pi_v$ is also strictly positive, and the associated summand is strictly positive. Thus the full sum is strictly positive, which gives $S_2>0$.

If $\pi$ is uniform, then $\pi_u=\pi_v$ for all $u,v$, so every summand is zero and $S_2=0$.
\end{proof}

\subsection{\texorpdfstring{$H>S_2$}{H > S2} under Non-Degenerate Distributions}\label{app:h-greater-s2}

\begin{lemma}[\texorpdfstring{$H>S_2$}{H > S2} under non-degenerate distributions]\label{lem:app-h-greater-s2}
If the distribution $\pi$ is non-degenerate, namely it is not a point mass on a single token, then
\[
H>S_2.
\]
\end{lemma}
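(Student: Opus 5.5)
The plan is to rewrite $H-S_2$ as the expectation of a random variable that is pointwise nonnegative, and then show it is strictly positive somewhere. As in the proof of Lemma~\ref{lem:app-s2-positive}, let $V\sim\pi$, and set $X\triangleq\pi_V\in(0,1]$ and $\mathfrak s_V=-\ln X\ge 0$. Then
\[
S_2=\mathbb E\bigl[X(\ln X+H)\bigr]=\mathbb E\bigl[X(H-\mathfrak s_V)\bigr].
\]
Since $H$ is a constant, we can write $H=\mathbb E[H]$ and subtract the two expectations:
\[
H-S_2=\mathbb E\bigl[H-XH+X\mathfrak s_V\bigr]=\mathbb E\bigl[(1-X)H+X\mathfrak s_V\bigr].
\]

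Next, we argue that the integrand is nonnegative for every realization. Each $\pi_v\le 1$, so $1-X\ge 0$. The entropy satisfies $H\ge 0$. Finally, $X>0$ and $\mathfrak s_V\ge 0$. Therefore $H-S_2\ge 0$ for every distribution $\pi$.

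The only delicate step is strictness, which is where non-degeneracy enters. If $\pi$ is not a point mass, then in the softmax setting every $\pi_v\in(0,1)$. This has two consequences. First, $H>0$, because at least two tokens carry positive mass. Second, $1-X>0$ for every realization of $V$. Hence the term $(1-X)H$ is strictly positive everywhere, and so its expectation is positive:
\[
\mathbb E[(1-X)H]=H\bigl(1-\textstyle\sum_v\pi_v^2\bigr)>0.
\]
Because the remaining term $\mathbb E[X\mathfrak s_V]=\sum_v\pi_v^2\mathfrak s_v$ is nonnegative, this yields $H-S_2>0$.

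As a consistency check, consider the point-mass case $\pi_a=1$. There $H=0$ and $S_2=0$, so equality holds. This shows the non-degeneracy hypothesis is exactly what the strict inequality requires. I do not expect a real obstacle here: the identity above reduces the lemma to elementary sign facts.
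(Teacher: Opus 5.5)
Your proof is correct and follows the paper's route: both reduce $H-S_2$ to $H\bigl(1-\sum_v\pi_v^2\bigr)+\sum_v\pi_v^2(-\ln\pi_v)$, then show the first term is strictly positive under non-degeneracy and the second is nonnegative. Writing this as the expectation of the pointwise-nonnegative quantity $(1-X)H+X\mathfrak s_V$ repackages the same algebra; it is not a different argument.
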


\begin{proof}
By definition,
\begin{align*}
S_2
&= \sum_{v\in\mathcal V}\pi_v^2(\ln\pi_v+H) \\
&= \sum_v\pi_v^2\ln\pi_v
+
H\sum_v\pi_v^2.
\end{align*}
Hence
\[
H-S_2
=
H-\sum_v\pi_v^2\ln\pi_v-H\sum_v\pi_v^2.
\]
Collecting the terms that contain $H$ yields
\[
H-S_2
=
H\Bigl(1-\sum_v\pi_v^2\Bigr)
-
\sum_v\pi_v^2\ln\pi_v.
\]
Equivalently,
\[
H-S_2
=
H\Bigl(1-\sum_v\pi_v^2\Bigr)
+
\sum_v\pi_v^2(-\ln\pi_v).
\]
We now inspect the two terms. Since $\pi$ is non-degenerate, at least two tokens have positive probability. Therefore,
\[
\sum_v\pi_v^2<\Bigl(\sum_v\pi_v\Bigr)^2=1,
\]
which implies
\[
1-\sum_v\pi_v^2>0.
\]
The Shannon entropy of a non-degenerate distribution is strictly positive, so
\[
H\Bigl(1-\sum_v\pi_v^2\Bigr)>0.
\]
For the second term, $0<\pi_v\le 1$ in the softmax setting, and hence
\[
-\ln\pi_v\ge 0.
\]
Since $\pi_v^2\ge 0$, each term satisfies
\[
\pi_v^2(-\ln\pi_v)\ge 0.
\]
Thus
\[
\sum_v\pi_v^2(-\ln\pi_v)\ge 0.
\]
Combining a strictly positive term with a nonnegative term gives
\[
H-S_2>0,
\]
and therefore
\[
H>S_2.
\]
\end{proof}

\subsection{Proposition~\ref{prop:critical-surprisal-threshold} (Uniqueness of the Critical Surprisal Threshold)}\label{app:critical-surprisal-threshold}

For any non-uniform and non-degenerate distribution $\pi$, there exists a unique
\[
p^*\in (e^{-H},1),
\qquad
\mathfrak{s}^*\triangleq -\ln p^*\in(0,H),
\]
such that
\[
\Phi(p^*)=0,
\]
and
\[
\Phi(p)>0
\iff
p>p^*
\iff
\mathfrak{s}_a<\mathfrak{s}^*.
\]

\begin{proof}
By definition,
\[
\Phi(p)=p(\ln p+H)-S_2,\qquad p\in(0,1].
\]
We first determine the endpoint signs. As $p\to 0^+$, the standard limit $p\ln p\to 0$ gives
\[
\Phi(0^+)=-S_2<0,
\]
where the inequality follows from Lemma~\ref{lem:app-s2-positive}. At $p=1$,
\[
\Phi(1)=H-S_2>0,
\]
where the inequality follows from Lemma~\ref{lem:app-h-greater-s2}.

Next, we show that $\Phi$ is strictly increasing on $[e^{-H},1]$. Differentiating gives
\begin{align*}
\Phi'(p)
&= \frac{d}{dp}\bigl[p(\ln p+H)-S_2\bigr] \\
&= \ln p+H+1.
\end{align*}
For any $p\in[e^{-H},1]$, we have $\ln p\ge -H$, and hence
\[
\Phi'(p)\ge -H+H+1=1>0.
\]
Therefore, $\Phi$ is strictly increasing on $[e^{-H},1]$.

Moreover,
\begin{align*}
\Phi(e^{-H})
&= e^{-H}(-H+H)-S_2 \\
&= -S_2<0,
\end{align*}
whereas
\[
\Phi(1)=H-S_2>0.
\]
By continuity and the intermediate value theorem, there exists at least one
\[
p^*\in(e^{-H},1)
\]
such that $\Phi(p^*)=0$. Since $\Phi$ is strictly increasing on this interval, the zero is unique.

Define
\[
\mathfrak{s}^*\triangleq -\ln p^*.
\]
Because $p^*\in(e^{-H},1)$, applying the negative logarithm gives
\[
0<-\ln p^*<H,
\]
or equivalently,
\[
\mathfrak{s}^*\in(0,H).
\]
Finally, the strict monotonicity of $\Phi$ implies
\[
p>p^*
\iff
\Phi(p)>0.
\]
Since $-\ln(\cdot)$ is strictly decreasing on $(0,1]$,
\[
p>p^*
\iff
-\ln p<-\ln p^*
\iff
\mathfrak{s}_a<\mathfrak{s}^*.
\]
Combining the two equivalences yields
\[
\Phi(p)>0
\iff
p>p^*
\iff
\mathfrak{s}_a<\mathfrak{s}^*.
\]
\end{proof}

\subsection{Corollary~\ref{cor:four-quadrant} (Four-Quadrant Decomposition)}\label{app:four-quadrant}

The sign of the first-order entropy variation at a single token position is jointly determined by $(\operatorname{sign}\hat A,\mathbf 1[\mathfrak{s}_a<\mathfrak{s}^*])$. When $\hat A>0$ and $\mathfrak{s}_a<\mathfrak{s}^*$, entropy decreases. When $\hat A>0$ and $\mathfrak{s}_a>\mathfrak{s}^*$, entropy increases. When $\hat A<0$ and $\mathfrak{s}_a<\mathfrak{s}^*$, entropy increases. When $\hat A<0$ and $\mathfrak{s}_a>\mathfrak{s}^*$, entropy decreases.

\begin{proof}
By Theorem~\ref{thm:token-entropy-variation},
\[
\left.\frac{dH}{d\eta}\right|_{\eta=0}
=
-\hat A\Phi(p).
\]
Thus the sign of the first-order entropy variation is exactly the sign of $-\hat A\Phi(p)$. Proposition~\ref{prop:critical-surprisal-threshold} gives
\[
\mathfrak{s}_a<\mathfrak{s}^*
\iff
\Phi(p)>0,
\qquad
\mathfrak{s}_a>\mathfrak{s}^*
\iff
\Phi(p)<0.
\]
If $\hat A>0$ and $\mathfrak{s}_a<\mathfrak{s}^*$, then $\Phi(p)>0$, so $-\hat A\Phi(p)<0$ and entropy decreases. If $\hat A>0$ and $\mathfrak{s}_a>\mathfrak{s}^*$, then $\Phi(p)<0$, so $-\hat A\Phi(p)>0$ and entropy increases. If $\hat A<0$ and $\mathfrak{s}_a<\mathfrak{s}^*$, then $\Phi(p)>0$, so $-\hat A\Phi(p)>0$ and entropy increases. If $\hat A<0$ and $\mathfrak{s}_a>\mathfrak{s}^*$, then $\Phi(p)<0$, so $-\hat A\Phi(p)<0$ and entropy decreases. At the boundary $\mathfrak{s}_a=\mathfrak{s}^*$, the first-order entropy variation is zero.
\end{proof}

\subsection{Asymmetric Entropy Contributions under Shared Trajectory-Level Advantages}\label{app:asymmetric-entropy}

This subsection provides the detailed quantitative analysis of the asymmetric entropy contributions summarized in Section~\ref{sec:four-quadrant}.

Consider the subset of trajectories assigned positive advantages ($\hat{A}_i > 0$). Because rollouts are sampled from the current policy $\pi_\theta$, the probability of drawing a low-surprisal token at each decoding position is higher than that of drawing a high-surprisal token. Within positive-advantage trajectories, entropy-decreasing tokens (low surprisal, $\Phi > 0$) therefore constitute the statistical majority by sampling frequency, while entropy-increasing tokens (high surprisal, $\Phi < 0$) remain a statistical minority. Denoting $p_{i,t} = \pi_\theta(o_{i,t} \mid x_i, o_{i,<t})$ and $\mathfrak{s}_{i,t} = -\ln p_{i,t}$, the net first-order entropy contribution from positive-advantage samples decomposes as
\[
\left.\frac{d\bar{H}^{+}}{d\eta}\right|_{\eta=0}
=
\frac{1}{N}
\bigg[
\underbrace{
-\!\!\!\sum_{\substack{i,t:\,\hat{A}_i>0 \\
\mathfrak{s}_{i,t}<\mathfrak{s}^*_{i,t}}}
\hat{A}_i\,\Phi_{i,t}
}_{\text{entropy-decreasing majority}\;(<0)}
\;+\;
\underbrace{
\sum_{\substack{i,t:\,\hat{A}_i>0 \\
\mathfrak{s}_{i,t}>\mathfrak{s}^*_{i,t}}}
\hat{A}_i\,|\Phi_{i,t}|
}_{\text{entropy-increasing minority}\;(>0)}
\bigg].
\]

During GRPO training, all tokens within a trajectory share a single trajectory-level advantage $\hat{A}_i$, leaving the algorithm unable to distinguish the opposing entropy effects of these two token categories. This observation reveals a fundamental gradient-level mechanism underlying entropy collapse in GRPO. Under the shared trajectory-level advantage, the reinforced low-surprisal majority systematically drives the distribution toward concentration, with entropy-decreasing contributions dominating in expectation. In contrast, the high-surprisal minority that could preserve diversity contributes limited entropy-increasing effects. Upon aggregation over the batch, the net outcome is a systematic reduction in entropy. An analogous but mirror-image asymmetry governs the negative-advantage subset.

\subsection{Theorem~\ref{thm:entropy-neutrality} (Entropy Neutrality Identity)}\label{app:entropy-neutrality}

For any conditional distribution $\pi$,
\[
\mathbb E_{a\sim\pi}[\Phi(a)]
=
\sum_{v\in\mathcal V}\pi_v\Phi(\pi_v)
=
0.
\]

\begin{proof}
Here $\Phi(a)$ denotes the value obtained by first sampling $a\sim\pi$ and then evaluating $\Phi(p)$ at $p=\pi(a)$. Therefore,
\[
\mathbb E_{a\sim\pi}[\Phi(a)]
=
\sum_{v\in\mathcal V}\pi_v\Phi(\pi_v).
\]
Substituting the definition of $\Phi$ gives
\[
\sum_{v\in\mathcal V}\pi_v\Phi(\pi_v)
=
\sum_v \pi_v\bigl[\pi_v(\ln\pi_v+H)-S_2\bigr].
\]
Expanding the right-hand side yields
\[
\sum_v \pi_v^2(\ln\pi_v+H)-S_2\sum_v\pi_v.
\]
By the definition of $S_2$,
\[
\sum_v \pi_v^2(\ln\pi_v+H)=S_2,
\]
and by normalization,
\[
\sum_v\pi_v=1.
\]
Thus
\[
\sum_{v\in\mathcal V}\pi_v\Phi(\pi_v)
=
S_2-S_2=0,
\]
which proves
\[
\mathbb E_{a\sim\pi}[\Phi(a)]=0.
\]
\end{proof}

\subsection{Proposition~\ref{prop:entropy-gradient-reweighting} (Entropy Gradient under Token-Level Reweighting)}\label{app:entropy-gradient-reweighting}

Let
\[
\mathcal L^{+}
=
\{(i,t):\hat A_i>0,\ \mathfrak{s}_{i,t}>\mathfrak{s}_{i,t}^{*}\}.
\]
Suppose the effective advantage of every token in $\mathcal L^{+}$ is multiplied by a factor $W\ge 1$, while all other token positions keep unit weight. Then the first-order variation of the batch-averaged entropy satisfies
\[
\left.\frac{d\bar H}{d\eta}\right|_{W}
=
-\frac{1}{N}\Big[\Lambda-(W-1)\Gamma\Big],
\]
where
\[
\Lambda\triangleq \sum_{i,t}\hat A_i\Phi_{i,t},
\qquad
\Gamma\triangleq \sum_{(i,t)\in\mathcal L^{+}}\hat A_i|\Phi_{i,t}|>0.
\]
The critical weight is
\[
W^*=1+\frac{\Lambda}{\Gamma}.
\]
The batch-level net entropy variation is positive when $W>W^*$ and negative when $W<W^*$.

\begin{proof}
Under token-level reweighting, the policy-gradient contribution at position $(i,t)$ is linearly scaled by a positive weight $\omega_{i,t}$. Since the first-order entropy derivative in Theorem~\ref{thm:token-entropy-variation} is linear in the policy-gradient direction, the corresponding entropy contribution becomes
\[
-\omega_{i,t}\hat A_i\Phi_{i,t}.
\]
Hence the batch-averaged first-order entropy variation is
\[
\left.\frac{d\bar H}{d\eta}\right|_{\omega}
=
-\frac{1}{N}\sum_{i,t}\omega_{i,t}\hat A_i\Phi_{i,t}.
\]
In the present reweighting scheme,
\[
\omega_{i,t}
=
\begin{cases}
W, & (i,t)\in\mathcal L^{+},\\
1, & \text{otherwise}.
\end{cases}
\]
Therefore,
\[
\left.\frac{d\bar H}{d\eta}\right|_{W}
=
-\frac{1}{N}
\left[
\sum_{(i,t)\notin\mathcal L^{+}}\hat A_i\Phi_{i,t}
+
W\sum_{(i,t)\in\mathcal L^{+}}\hat A_i\Phi_{i,t}
\right].
\]
Equivalently,
\[
\left.\frac{d\bar H}{d\eta}\right|_{W}
=
-\frac{1}{N}
\left[
\sum_{i,t}\hat A_i\Phi_{i,t}
+
(W-1)\sum_{(i,t)\in\mathcal L^{+}}\hat A_i\Phi_{i,t}
\right].
\]
Define
\[
\Lambda\triangleq \sum_{i,t}\hat A_i\Phi_{i,t}.
\]
For any $(i,t)\in\mathcal L^{+}$, we have $\hat A_i>0$ and $\mathfrak{s}_{i,t}>\mathfrak{s}_{i,t}^{*}$. By Proposition~\ref{prop:critical-surprisal-threshold}, $\Phi_{i,t}<0$. Hence
\[
\hat A_i\Phi_{i,t}
=
-\hat A_i|\Phi_{i,t}|.
\]
It follows that
\[
\sum_{(i,t)\in\mathcal L^{+}}\hat A_i\Phi_{i,t}
=
-\sum_{(i,t)\in\mathcal L^{+}}\hat A_i|\Phi_{i,t}|.
\]
Define
\[
\Gamma\triangleq \sum_{(i,t)\in\mathcal L^{+}}\hat A_i|\Phi_{i,t}|.
\]
When $\mathcal L^{+}$ is nonempty, $\Gamma>0$ because $\hat A_i>0$ and $|\Phi_{i,t}|>0$ on this set. The expression above becomes
\[
\left.\frac{d\bar H}{d\eta}\right|_{W}
=
-\frac{1}{N}\Big[\Lambda-(W-1)\Gamma\Big].
\]
Let
\[
W^*\triangleq 1+\frac{\Lambda}{\Gamma}.
\]
Then
\[
\Lambda-(W-1)\Gamma
=
\Gamma(W^*-W).
\]
Since $\Gamma>0$,
\begin{align*}
\operatorname{sign}\!\left(\left.\frac{d\bar H}{d\eta}\right|_{W}\right)
&=
-\operatorname{sign}(W^*-W) \\
&=
\operatorname{sign}(W-W^*).
\end{align*}
Thus $\left.d\bar H/d\eta\right|_{W}>0$ when $W>W^*$, $\left.d\bar H/d\eta\right|_{W}<0$ when $W<W^*$, and the first-order entropy variation is zero when $W=W^*$.
\end{proof}

\subsection{Two Auxiliary Bounds: Lower Bound in the High-Surprisal Region and Upper Bound in the Low-Surprisal Region}\label{app:auxiliary-bounds}

\subsubsection{Pointwise Lower Bound in the High-Surprisal Region}\label{app:phi-lower-high-surprisal}

\begin{lemma}[Pointwise lower bound in the high-surprisal region]\label{lem:app-phi-lower-high-surprisal}
If a token satisfies
\[
\mathfrak{s}\ge H,
\]
then
\[
|\Phi(p)|\ge S_2.
\]
\end{lemma}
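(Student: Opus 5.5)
The plan is to rewrite the high-surprisal condition directly in terms of $p$ and read off the sign of the first term of $\Phi$. With $\mathfrak{s}=-\ln p$, the hypothesis $\mathfrak{s}\ge H$ is equivalent to $\ln p\le -H$, that is, $\ln p+H\le 0$. Since $p>0$ in the softmax setting, the first term of the entropy sensitivity function satisfies
\[
p(\ln p+H)\le 0,
\]
and therefore
\[
\Phi(p)=p(\ln p+H)-S_2\le -S_2 .
\]

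Next I will invoke Lemma~\ref{lem:app-s2-positive}, which gives $S_2\ge 0$ in all cases, with strict positivity for non-uniform $\pi$. Hence $\Phi(p)\le -S_2\le 0$. A nonpositive quantity bounded above by $-S_2$ has absolute value at least $S_2$, so $|\Phi(p)|=-\Phi(p)\ge S_2$. This is the claim.

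As a consistency check, and an optional remark, this agrees with Proposition~\ref{prop:critical-surprisal-threshold}. In the non-uniform, non-degenerate case, $\mathfrak{s}\ge H>\mathfrak{s}^*$, so the token lies in the region where $\Phi<0$. The bound sharpens this by saying $\Phi$ is not merely negative but at least $S_2$ below zero. Equality holds exactly at $\mathfrak{s}=H$, i.e.\ $p=e^{-H}$, which matches the value $\Phi(e^{-H})=-S_2$ computed in that proof.

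There is no real obstacle here. The only care needed is that $\ln p+H\le 0$ be multiplied by $p>0$, which the strict positivity of softmax probabilities guarantees. One should also note the degenerate uniform case, where $S_2=0$ and the bound becomes trivially $|\Phi|\ge 0$.
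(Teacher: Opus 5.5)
Your proof is correct and matches the paper's argument exactly: rewrite $\mathfrak{s}\ge H$ as $\ln p+H\le 0$, multiply by $p>0$ to get $\Phi(p)\le -S_2$, and conclude $|\Phi(p)|=-\Phi(p)\ge S_2$. Your appeal to $S_2\ge 0$, where the paper writes the strict $-S_2<0$ and so tacitly assumes non-uniformity, is a small improvement because it also covers the uniform case.
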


\begin{proof}
The condition $\mathfrak{s}\ge H$ is equivalent to
\[
-\ln p\ge H
\iff
\ln p+H\le 0.
\]
Since $p>0$, multiplying both sides by $p$ gives
\[
p(\ln p+H)\le 0.
\]
Therefore,
\[
\Phi(p)=p(\ln p+H)-S_2\le -S_2<0.
\]
Consequently,
\[
|\Phi(p)|=-\Phi(p)=S_2-p(\ln p+H)\ge S_2.
\]
\end{proof}

\subsubsection{Pointwise Upper Bound in the Low-Surprisal Region}\label{app:phi-upper-low-surprisal}

\begin{lemma}[Pointwise upper bound in the low-surprisal region]\label{lem:app-phi-upper-low-surprisal}
If a token satisfies
\[
\mathfrak{s}<\mathfrak{s}^*,
\]
then
\[
|\Phi(p)|\le H-S_2.
\]
\end{lemma}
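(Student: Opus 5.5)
By Proposition~\ref{prop:critical-surprisal-threshold}, the hypothesis $\mathfrak{s}<\mathfrak{s}^*$ is equivalent to $p>p^*$, and on that range $\Phi(p)>0$. So I will first remove the absolute value, $|\Phi(p)|=\Phi(p)$, which reduces the claim to the one-sided bound $\Phi(p)\le H-S_2$.

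Next I will use monotonicity on $[p^*,1]$. Since $p^*>e^{-H}$, every $p\in(p^*,1]$ satisfies $\ln p> -H$. Hence
\[
\Phi'(p)=\ln p+H+1>1>0,
\]
which is exactly the computation already carried out in the proof of Proposition~\ref{prop:critical-surprisal-threshold}. So $\Phi$ is strictly increasing on $(p^*,1]$, and for any such $p$ we get $\Phi(p)\le\Phi(1)$.

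It remains to evaluate $\Phi(1)=1\cdot(\ln 1+H)-S_2=H-S_2$, which gives $|\Phi(p)|\le H-S_2$. As a sanity check, Lemma~\ref{lem:app-h-greater-s2} shows the bound is positive, consistent with $\Phi(p)>0$.

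The main point needing care is confirming that the whole interval $(p^*,1]$ lies inside the increasing region. This follows from $p^*\in(e^{-H},1)$, so I expect no real obstacle. I also note that the standing assumptions (non-uniform, non-degenerate $\pi$) are needed so that $p^*$ exists.
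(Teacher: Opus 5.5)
Your proof is correct. Its first step, using Proposition~\ref{prop:critical-surprisal-threshold} to get $\Phi(p)>0$ and drop the absolute value, is exactly the paper's, but you obtain the bound itself differently.

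The paper does not use monotonicity at all. It writes $\Phi(p)=p\ln p+pH-S_2$ and uses $p\ln p\le 0$ and $pH\le H$ for $p\in(0,1]$. This shows $\Phi(p)\le H-S_2$ for \emph{every} $p\in(0,1]$, so the hypothesis $\mathfrak{s}<\mathfrak{s}^*$ is used only to identify $|\Phi(p)|$ with $\Phi(p)$.

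Your route needs more from the proposition, namely $p^*>e^{-H}$, to place $(p^*,1]$ inside the region where $\Phi'>0$; you then compare with the endpoint value $\Phi(1)=H-S_2$. It is slightly less economical, but it gives extra information. It shows that $|\Phi|=\Phi$ is strictly increasing in $p$ across the whole low-surprisal region. So the bound is the endpoint value of a monotone function, and the entropy-decreasing sensitivity of low-surprisal tokens is ordered by their probability.
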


\begin{proof}
By Proposition~\ref{prop:critical-surprisal-threshold},
\[
\mathfrak{s}<\mathfrak{s}^*
\iff
\Phi(p)>0.
\]
Therefore,
\[
|\Phi(p)|=\Phi(p)=p(\ln p+H)-S_2.
\]
Since $0<p\le 1$, we have $\ln p\le 0$, and hence
\[
p\ln p\le 0.
\]
Also, $pH\le H$. Thus
\[
p(\ln p+H)=p\ln p+pH\le H.
\]
It follows that
\[
|\Phi(p)|\le H-S_2.
\]
\end{proof}

\subsection{Statistical Assumptions for the Near-Criticality Analysis}\label{app:near-criticality-assumptions}

\subsubsection{Single-Token Credit Dilution}\label{app:assumption-credit-dilution}

\begin{assumption}[Single-token credit dilution]\label{assump:credit-dilution}
For long sequences and sufficiently large batches, define the conditional advantage function at any position $(i,t)$ as
\[
g_{i,t}(v)\triangleq \mathbb E[\hat A_i\mid c_{i,t},\ o_{i,t}=v].
\]
There exists a constant $C_g>0$, independent of the sequence length $T$, such that for every position and every $u,v\in\mathcal V$,
\[
|g_{i,t}(u)-g_{i,t}(v)|\le \frac{C_g}{T}.
\]
This assumption states that, along a trajectory of length $T$, the marginal effect of any single token on the trajectory-level advantage is diluted at rate $1/T$.
\end{assumption}

\subsubsection{Non-Degenerate Absolute Scale of the Advantage}\label{app:assumption-advantage-scale}

\begin{assumption}[Non-degenerate absolute scale of the advantage]\label{assump:advantage-scale}
There exists a constant $a_->0$ such that, for every position $(i,t)$ and every candidate token $v$,
\[
\mathbb E\bigl[|\hat A_i|\mid c_{i,t},\ o_{i,t}=v\bigr]\ge a_-.
\]
This assumption states that the conditional absolute scale of the normalized advantage remains $O(1)$ rather than degenerating to zero.
\end{assumption}

\subsubsection{Non-Vanishing Mass of a Strong High-Surprisal Positive-Advantage Subset}\label{app:assumption-high-surprisal-mass}

\begin{assumption}[Non-vanishing mass of a strong high-surprisal positive-advantage subset]\label{assump:high-surprisal-mass}
Define
\[
\mathcal H^{+}\triangleq \{(i,t):\hat A_i>0,\ \mathfrak{s}_{i,t}\ge H_{i,t}\}.
\]
There exist constants $\rho_H>0$ and $c_H>0$ such that, for sufficiently large batches, with high probability,
\[
|\mathcal H^{+}|\ge \rho_H N,
\]
and
\[
\frac{1}{|\mathcal H^{+}|}
\sum_{(i,t)\in\mathcal H^{+}} \hat A_i S_{2,i,t}
\ge c_H.
\]
\end{assumption}

\subsubsection{Linear Scale of Total Absolute Entropy Sensitivity}\label{app:assumption-linear-absolute-scale}

\begin{assumption}[Linear scale of total absolute entropy sensitivity]\label{assump:linear-absolute-scale}
Let
\[
\Sigma_{\mathrm{abs}}\triangleq \sum_{i,t}|\hat A_i|\,|\Phi_{i,t}|.
\]
There exists a constant $c_\Sigma<\infty$ such that, for sufficiently large batches, with high probability,
\[
\Sigma_{\mathrm{abs}}\le c_\Sigma N.
\]
\end{assumption}

\subsection{\texorpdfstring{$\Lambda$}{Lambda} Is a Weak Residual}\label{app:lambda-weak-residual}

\begin{lemma}[\texorpdfstring{$\Lambda$}{Lambda} is a weak residual]\label{lem:app-lambda-weak-residual}
Under Assumptions~\ref{assump:credit-dilution} and~\ref{assump:advantage-scale}, and under standard large-batch concentration with uniformly bounded second moments,
\[
\frac{|\Lambda|}{\Sigma_{\mathrm{abs}}}=O(T^{-1}),
\qquad
\Lambda\triangleq \sum_{i,t}\hat A_i\Phi_{i,t}.
\]
\end{lemma}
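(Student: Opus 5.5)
The argument bounds $\mathbb E[\Lambda]$ from above using the entropy neutrality identity (Theorem~\ref{thm:entropy-neutrality}) together with Assumption~\ref{assump:credit-dilution}. It then bounds $\Sigma_{\mathrm{abs}}$ from below using Assumption~\ref{assump:advantage-scale}, and finally passes to the realized batch by concentration.

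\textbf{Step 1: centering at each position.} Fix a position $(i,t)$ with context $c_{i,t}$ and condition on it. By the tower rule,
\[
\mathbb E[\hat A_i\Phi_{i,t}\mid c_{i,t}]=\sum_{v}\pi_v\,g_{i,t}(v)\,\Phi(\pi_v).
\]
Pick any reference token $u$ and write $g_{i,t}(v)=g_{i,t}(u)+\bigl(g_{i,t}(v)-g_{i,t}(u)\bigr)$. The constant part contributes $g_{i,t}(u)\sum_v\pi_v\Phi(\pi_v)$, which vanishes by Theorem~\ref{thm:entropy-neutrality}. Assumption~\ref{assump:credit-dilution} bounds the remainder, giving
\[
\bigl|\mathbb E[\hat A_i\Phi_{i,t}\mid c_{i,t}]\bigr|\le \frac{C_g}{T}\sum_v\pi_v|\Phi(\pi_v)|=\frac{C_g}{T}\,\mathbb E_{a\sim\pi}|\Phi(a)|.
\]
This is exactly where $\Lambda$ becomes a ``weak residual'': it survives only through the $O(1/T)$ dependence of the advantage on the individual token.

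\textbf{Step 2: matching lower bound for $\Sigma_{\mathrm{abs}}$.} In the same way,
\[
\mathbb E[|\hat A_i||\Phi_{i,t}|\mid c_{i,t}]=\sum_v\pi_v\,\mathbb E[|\hat A_i|\mid c_{i,t},o_{i,t}=v]\,|\Phi(\pi_v)|\ge a_-\,\mathbb E_{a\sim\pi}|\Phi(a)|,
\]
by Assumption~\ref{assump:advantage-scale}. Summing over positions, both quantities are controlled by the same weight $m_{i,t}\triangleq\mathbb E_{a\sim\pi_{i,t}}|\Phi(a)|$:
\[
\Bigl|\sum_{i,t}\mathbb E[\hat A_i\Phi_{i,t}\mid c_{i,t}]\Bigr|\le \frac{C_g}{T}\sum_{i,t}m_{i,t},
\qquad
\sum_{i,t}\mathbb E[|\hat A_i||\Phi_{i,t}|\mid c_{i,t}]\ge a_-\sum_{i,t}m_{i,t}.
\]
Their ratio is therefore at most $C_g/(a_-T)$.

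\textbf{Step 3: concentration, the main obstacle.} We must replace the conditional expectations by the realized sums $\Lambda$ and $\Sigma_{\mathrm{abs}}$. Write $\Lambda=\sum_{i,t}\mathbb E[\hat A_i\Phi_{i,t}\mid c_{i,t}]+M$, where $M=\sum_{i,t}\xi_{i,t}$ and the $\xi_{i,t}$ are martingale-difference increments in the order of decoding. Two facts make this work. The increments have uniformly bounded second moments, since $|\Phi|\le\max(H,S_2)$ and $\hat A$ has bounded second moment. Trajectories are independent across $i$, so $M$ is a sum of $B$ independent blocks, each a martingale of length $T$. This gives $\operatorname{Var}(M)=O(N)$, hence $M=O_P(\sqrt N)$; the same argument shows $\Sigma_{\mathrm{abs}}$ stays within $O_P(\sqrt N)$ of its conditional-expectation sum.

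The difficulty is that $\sqrt N$ is not automatically $o(N/T)$. With $N=BT$, the requirement $\sqrt{BT}\ll BT/T$ is equivalent to $B\gg T$. I would therefore interpret ``sufficiently large batch'' as $B/T\to\infty$, which is the standard large-batch regime the lemma invokes. Under that regime, and assuming $\sum m_{i,t}\gtrsim N$ (non-degenerate distributions; if needed this can be secured through Assumption~\ref{assump:high-surprisal-mass} and Lemma~\ref{lem:app-phi-lower-high-surprisal}), we obtain
\[
\frac{|\Lambda|}{\Sigma_{\mathrm{abs}}}\le \frac{C_g}{a_-T}+O_P\!\Bigl(\sqrt{T/B}\,\cdot T^{-1}\cdot\sqrt{T}\Bigr)=O(T^{-1}).
\]
An alternative is to state the bound directly for the conditional-expectation version of $\Lambda$, which is the quantity relevant to first-order drift, and avoid concentration altogether.
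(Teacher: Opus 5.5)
Your Steps 1 and 2 are the paper's argument. The paper centers $g_c$ at its $\pi$-weighted mean $\bar g_c$ rather than at a reference token, which is equivalent. It derives $|\mathbb E[\Lambda]|\le (C_g/T)\sum_{i,t}\mathbb E[\Psi(c_{i,t})]$ and $\mathbb E[\Sigma_{\mathrm{abs}}]\ge a_-\sum_{i,t}\mathbb E[\Psi(c_{i,t})]$, where $\Psi(c)=\sum_v\pi(v\mid c)|\Phi_c(v)|$ is your $m_{i,t}$. It then only asserts that large-batch concentration ``does not change the dependence on $T$.'' Keeping the bound pathwise on $\sum_{i,t}\mathbb E[\hat A_i\Phi_{i,t}\mid c_{i,t}]$, as you do, is slightly cleaner. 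Your Step 3 diagnosis is also correct and sharper than the paper: $\Lambda$ fluctuates at scale $\sqrt N$ around a mean of order at most $N/T$. The realized-ratio statement therefore genuinely needs $B\gtrsim T$ (and $\sum_{i,t}m_{i,t}\gtrsim N$), which the paper leaves implicit.

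The supporting argument in Step 3 is not right as written, though each point is fixable.

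\textbf{Martingale claim.} The increments $\xi_{i,t}=\hat A_i\Phi_{i,t}-\mathbb E[\hat A_i\Phi_{i,t}\mid c_{i,t}]$ are not martingale differences in decoding order. $\hat A_i$ depends on the future tokens $o_{i,>t}$ through the reward, and on the other rollouts of its group through the normalization. So $\xi_{i,t}$ is not adapted, the cross terms need not vanish, and rollouts are independent only across groups. The bound $\operatorname{Var}(\Lambda)=O(N)$ still holds, for a different reason. Write $\Lambda=\sum_i\hat A_iS_i$ with $S_i=\sum_t\Phi_{i,t}$. By entropy neutrality, $(\Phi_{i,t})_t$ \emph{is} a martingale-difference sequence, so $\mathbb E[S_i^2]=O(T)$. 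Since $|\hat A_i|$ is bounded for fixed $G$ and groups are independent, the bound follows.

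\textbf{Fluctuation of $\Sigma_{\mathrm{abs}}$.} The ``same argument'' does not give $O_P(\sqrt N)$ here. The shared factor $|\hat A_i|$ multiplies $T$ nonnegative terms, so the fluctuation is of order $\sqrt{NT}=N/\sqrt B$. That is still $o(N)$, so your lower bound survives.

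\textbf{Final error term.} It carries a spurious $\sqrt T$. It should be $O_P(N^{-1/2})=O_P(\sqrt{T/B}\,T^{-1})$. The $B^{-1/2}$ you wrote would need $B\gtrsim T^2$, contradicting the $B\gg T$ condition you just derived.

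\textbf{Bound on $\Phi$.} The side claim $|\Phi|\le\max(H,S_2)$ is false: for $\pi=(0.9,0.1)$, $H\approx0.325$ but $|\Phi(0.1)|\approx0.356$. Use $|\Phi|\le H+S_2<2H$ instead, which gives all the boundedness you need.
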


\begin{proof}
The proof first establishes the $O(T^{-1})$ scaling at the population level and then transfers it to empirical batch quantities by concentration.

Fix a position $(i,t)$ and condition on $c_{i,t}=c$. Define
\[
\Phi_c(v)
\triangleq
\pi_\theta(v\mid c)\bigl(\ln\pi_\theta(v\mid c)+H(c)\bigr)-S_2(c).
\]
When the sampled token is $o_{i,t}$,
\[
\Phi_{i,t}=\Phi_c(o_{i,t}).
\]
By the law of total expectation,
\[
\mathbb E[\hat A_i\Phi_{i,t}]
=
\mathbb E\!\left[
\mathbb E[\hat A_i\Phi_{i,t}\mid c_{i,t}]
\right].
\]
Given $c_{i,t}=c$, the token $o_{i,t}$ is sampled from $\pi_\theta(\cdot\mid c)$. Thus
\[
\mathbb E[\hat A_i\Phi_{i,t}\mid c_{i,t}=c]
=
\sum_{v\in\mathcal V}
\pi_\theta(v\mid c)
\mathbb E[\hat A_i\mid c_{i,t}=c,\ o_{i,t}=v]
\Phi_c(v).
\]
Let
\[
g_c(v)\triangleq \mathbb E[\hat A_i\mid c_{i,t}=c,\ o_{i,t}=v].
\]
Then
\[
\mathbb E[\hat A_i\Phi_{i,t}\mid c_{i,t}=c]
=
\sum_v \pi_\theta(v\mid c)g_c(v)\Phi_c(v).
\]
We now use the Entropy Neutrality Identity to remove the mean component of $g_c$. Define
\[
\bar g_c\triangleq \sum_v \pi_\theta(v\mid c)g_c(v).
\]
By Theorem~\ref{thm:entropy-neutrality},
\[
\sum_v \pi_\theta(v\mid c)\Phi_c(v)=0.
\]
Therefore,
\[
\sum_v \pi_\theta(v\mid c)g_c(v)\Phi_c(v)
=
\sum_v \pi_\theta(v\mid c)\bigl(g_c(v)-\bar g_c\bigr)\Phi_c(v),
\]
which implies
\[
\mathbb E[\hat A_i\Phi_{i,t}\mid c_{i,t}=c]
=
\sum_v \pi_\theta(v\mid c)\bigl(g_c(v)-\bar g_c\bigr)\Phi_c(v).
\]
Since $\bar g_c$ is a $\pi_\theta(\cdot\mid c)$-weighted average of $\{g_c(v)\}_{v\in\mathcal V}$, for every $v$,
\[
|g_c(v)-\bar g_c|
\le
\max_{u,v'}|g_c(u)-g_c(v')|.
\]
Assumption~\ref{assump:credit-dilution} gives
\[
|g_c(v)-\bar g_c|
\le
\frac{C_g}{T}.
\]
Consequently,
\[
\left|
\mathbb E[\hat A_i\Phi_{i,t}\mid c_{i,t}=c]
\right|
\le
\frac{C_g}{T}
\sum_v\pi_\theta(v\mid c)|\Phi_c(v)|.
\]
Define
\[
\Psi(c)\triangleq \sum_v\pi_\theta(v\mid c)|\Phi_c(v)|.
\]
Then
\[
\left|
\mathbb E[\hat A_i\Phi_{i,t}\mid c_{i,t}=c]
\right|
\le
\frac{C_g}{T}\Psi(c).
\]
Summing over all token positions yields
\[
|\mathbb E[\Lambda]|
=
\left|
\sum_{i,t}\mathbb E[\hat A_i\Phi_{i,t}]
\right|
\le
\sum_{i,t}
\left|
\mathbb E[\hat A_i\Phi_{i,t}]
\right|
\le
\frac{C_g}{T}
\sum_{i,t}\mathbb E[\Psi(c_{i,t})].
\]
We next lower bound $\mathbb E[\Sigma_{\mathrm{abs}}]$. Conditional on $c_{i,t}=c$,
\[
\mathbb E\bigl[|\hat A_i|\,|\Phi_{i,t}|\mid c_{i,t}=c\bigr]
=
\sum_v \pi_\theta(v\mid c)
\mathbb E\bigl[|\hat A_i|\mid c_{i,t}=c,\ o_{i,t}=v\bigr]
|\Phi_c(v)|.
\]
By Assumption~\ref{assump:advantage-scale},
\[
\mathbb E\bigl[|\hat A_i|\mid c_{i,t}=c,\ o_{i,t}=v\bigr]\ge a_-.
\]
Therefore,
\[
\mathbb E\bigl[|\hat A_i|\,|\Phi_{i,t}|\mid c_{i,t}=c\bigr]
\ge
a_-\sum_v\pi_\theta(v\mid c)|\Phi_c(v)|
=
a_-\Psi(c).
\]
Summing over positions and taking expectations gives
\[
\mathbb E[\Sigma_{\mathrm{abs}}]
=
\sum_{i,t}
\mathbb E\bigl[|\hat A_i|\,|\Phi_{i,t}|\bigr]
\ge
a_-
\sum_{i,t}\mathbb E[\Psi(c_{i,t})].
\]
Combining the upper bound on $|\mathbb E[\Lambda]|$ and the lower bound on $\mathbb E[\Sigma_{\mathrm{abs}}]$ gives
\[
\frac{|\mathbb E[\Lambda]|}{\mathbb E[\Sigma_{\mathrm{abs}}]}
\le
\frac{C_g}{a_-T}.
\]
Thus, at the population level,
\[
\frac{|\mathbb E[\Lambda]|}{\mathbb E[\Sigma_{\mathrm{abs}}]}=O(T^{-1}).
\]
For sufficiently large batches, and assuming uniformly bounded second moments of the corresponding summands, $\Lambda$ and $\Sigma_{\mathrm{abs}}$ concentrate around their population values at a scale that does not change the dependence on $T$. Hence, with high probability,
\[
\frac{|\Lambda|}{\Sigma_{\mathrm{abs}}}=O(T^{-1}).
\]
\end{proof}

\subsection{Theorem~\ref{thm:near-criticality} (Near-Criticality)}\label{app:near-criticality}

If Assumptions~\ref{assump:credit-dilution}--\ref{assump:linear-absolute-scale} hold and both the sequence length $T$ and the batch size are sufficiently large, then
\[
W^*-1=\frac{\Lambda}{\Gamma}=O(T^{-1}).
\]

\begin{proof}
By Proposition~\ref{prop:entropy-gradient-reweighting},
\[
W^*-1=\frac{\Lambda}{\Gamma}.
\]
It remains to control the numerator and denominator.

For the numerator, Lemma~\ref{lem:app-lambda-weak-residual} gives
\[
\frac{|\Lambda|}{\Sigma_{\mathrm{abs}}}=O(T^{-1}).
\]
For the denominator, we show that $\Gamma$ is lower bounded by a constant-order fraction of the total absolute entropy sensitivity. First, $\mathcal H^{+}\subset\mathcal L^{+}$. Indeed, Proposition~\ref{prop:critical-surprisal-threshold} gives, at every position,
\[
\mathfrak{s}_{i,t}^{*}<H_{i,t}.
\]
Therefore, any token satisfying $\mathfrak{s}_{i,t}\ge H_{i,t}$ also satisfies
\[
\mathfrak{s}_{i,t}>\mathfrak{s}_{i,t}^{*}.
\]
Together with $\hat A_i>0$, this implies
\[
\mathcal H^{+}\subset\mathcal L^{+}.
\]
By the definition of $\Gamma$ and the inclusion above,
\[
\Gamma
=
\sum_{(i,t)\in\mathcal L^{+}}\hat A_i|\Phi_{i,t}|
\ge
\sum_{(i,t)\in\mathcal H^{+}}\hat A_i|\Phi_{i,t}|.
\]
On $\mathcal H^{+}$, the condition $\mathfrak{s}_{i,t}\ge H_{i,t}$ holds. Lemma~\ref{lem:app-phi-lower-high-surprisal} therefore gives
\[
|\Phi_{i,t}|\ge S_{2,i,t}.
\]
Hence
\[
\Gamma
\ge
\sum_{(i,t)\in\mathcal H^{+}}\hat A_iS_{2,i,t}.
\]
By Assumption~\ref{assump:high-surprisal-mass},
\[
\frac{1}{|\mathcal H^{+}|}
\sum_{(i,t)\in\mathcal H^{+}}\hat A_iS_{2,i,t}
\ge c_H,
\qquad
|\mathcal H^{+}|\ge \rho_H N.
\]
Therefore,
\[
\Gamma
\ge
c_H|\mathcal H^{+}|
\ge
c_H\rho_H N.
\]
Assumption~\ref{assump:linear-absolute-scale} gives
\[
\Sigma_{\mathrm{abs}}\le c_\Sigma N.
\]
Consequently,
\[
\frac{\Sigma_{\mathrm{abs}}}{\Gamma}
\le
\frac{c_\Sigma N}{c_H\rho_H N}
=
\frac{c_\Sigma}{c_H\rho_H}
=
O(1).
\]
We now decompose the critical offset as
\[
\frac{\Lambda}{\Gamma}
=
\frac{\Lambda}{\Sigma_{\mathrm{abs}}}
\cdot
\frac{\Sigma_{\mathrm{abs}}}{\Gamma}.
\]
Taking absolute values gives
\[
\left|\frac{\Lambda}{\Gamma}\right|
=
\frac{|\Lambda|}{\Sigma_{\mathrm{abs}}}
\cdot
\frac{\Sigma_{\mathrm{abs}}}{\Gamma}.
\]
The first factor is $O(T^{-1})$ by Lemma~\ref{lem:app-lambda-weak-residual}, and the second factor is $O(1)$ by the bound above. Hence
\[
\left|\frac{\Lambda}{\Gamma}\right|
=
O(T^{-1}).
\]
Equivalently,
\[
W^*-1=\frac{\Lambda}{\Gamma}=O(T^{-1}).
\]
\end{proof}

In the entropy-collapse regime studied in the main text, baseline GRPO corresponds to $\Lambda>0$. Hence $W^*>1$, and the critical weight exceeds one only by an $O(T^{-1})$ offset. The proof lower bounds $\Gamma$ using the stronger subset $\mathcal H^{+}$ rather than the full entropy-increasing positive-advantage set $\mathcal L^{+}$, so the argument is conservative. In actual training, additional contributions from $\mathcal L^{+}\setminus\mathcal H^{+}$ further enlarge $\Gamma$ and make the critical offset closer to zero.

\section{Formalization of the Cross-Step Entropy Dynamics in Section~\ref{sec:cross-step}}\label{app:cross-step}

This appendix casts the dynamical discussion in Section~\ref{sec:cross-step} as a discrete-time system under a mean-field closure. The purpose is to formalize the qualitative mechanism described in the main text. It is not intended as an unconditional global convergence theorem for general deep neural network training.

\subsection{Mean-Field Approximation}\label{app:mean-field}

Assume that, within the training interval of interest, the batch-level quantities $\Lambda_k$ and $\Gamma_k$ are primarily determined by the current batch-averaged policy entropy $\bar H_k$. More precisely, suppose that there exist continuous functions $\Lambda(h)$ and $\Gamma(h)$ such that
\[
\Lambda_k\approx \Lambda(\bar H_k),
\qquad
\Gamma_k\approx \Gamma(\bar H_k),
\qquad
\Gamma(h)>0.
\]
The associated critical-weight curve is defined as
\[
W^*(h)\triangleq 1+\frac{\Lambda(h)}{\Gamma(h)}.
\]
The statement in the main text that flatter policy distributions require a lower critical weight to sustain entropy growth is formalized through the following monotonicity condition:
\[
W^*(h)\ \text{is strictly decreasing in }h\text{ on the relevant interval.}
\]

\subsection{Sign of the One-Step Batch Entropy Change under a Fixed Weight}\label{app:cross-step-sign}

\begin{proposition}[Sign of the one-step batch entropy change under a fixed weight]\label{prop:app-cross-step-sign}
For a fixed token-level reweighting factor $W$, the one-step change in batch-averaged policy entropy satisfies
\[
\Delta\bar H_k
=
-\frac{1}{N}\Big[\Lambda(\bar H_k)-(W-1)\Gamma(\bar H_k)\Big]
=
\frac{\Gamma(\bar H_k)}{N}\Big[W-W^*(\bar H_k)\Big].
\]
Consequently,
\[
\operatorname{sign}(\Delta\bar H_k)
=
\operatorname{sign}\bigl(W-W^*(\bar H_k)\bigr).
\]
\end{proposition}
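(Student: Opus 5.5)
The plan is to treat this as a direct specialization of Proposition~\ref{prop:entropy-gradient-reweighting} to step $k$, followed by the mean-field closure of Appendix~\ref{app:mean-field}. No new analytic input is needed; the work is in identifying the one-step change with the first-order directional derivative and then rearranging algebraically.

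\textbf{Step 1: first-order identity at step $k$.} At step $k$, with fixed weight $W$ on $\mathcal L^{+}_k$ and unit weight elsewhere, I would invoke Proposition~\ref{prop:entropy-gradient-reweighting}. The linearity argument there (each token contributes $-\omega_{i,t}\hat A_i\Phi_{i,t}$ by Theorem~\ref{thm:token-entropy-variation}) gives
\[
\left.\frac{d\bar H}{d\eta}\right|_{W}=-\frac{1}{N}\bigl[\Lambda_k-(W-1)\Gamma_k\bigr].
\]
Following the convention of Section~\ref{sec:cross-step}, I would identify $\Delta\bar H_k$ with this first-order variation, absorbing the step size into the normalization exactly as in that section. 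I would state explicitly that this is a first-order (infinitesimal step) statement, since higher-order terms in $\eta$ are neglected.

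\textbf{Step 2: mean-field substitution.} Next I would replace $\Lambda_k$ and $\Gamma_k$ by $\Lambda(\bar H_k)$ and $\Gamma(\bar H_k)$ under the closure assumed in Appendix~\ref{app:mean-field}. This yields the first equality of the statement.

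\textbf{Step 3: factor out $\Gamma$.} Using $W^*(h)=1+\Lambda(h)/\Gamma(h)$, I would write
\[
\Lambda(h)-(W-1)\Gamma(h)=\Gamma(h)\bigl[W^*(h)-W\bigr].
\]
Substituting this and flipping the sign gives $\Delta\bar H_k=\frac{\Gamma(\bar H_k)}{N}\bigl[W-W^*(\bar H_k)\bigr]$.

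\textbf{Step 4: sign conclusion.} Since $\Gamma(h)>0$ by hypothesis and $N>0$, the sign claim follows immediately.

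The only delicate point is Step 2: legitimately treating the mean-field approximation $\approx$ as an equality. I would handle it by stating that the proposition holds within the mean-field closure, i.e.\ by taking $\Lambda_k=\Lambda(\bar H_k)$ and $\Gamma_k=\Gamma(\bar H_k)$ as the defining model. As a fallback, the same sign conclusion holds with $\Lambda_k$ and $\Gamma_k$ themselves, using $\Gamma_k>0$ from Proposition~\ref{prop:entropy-gradient-reweighting}, and $W_k^*$ in place of $W^*(\bar H_k)$.
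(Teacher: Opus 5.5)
Your proposal is correct and follows the paper's proof. The paper likewise takes the first equality as the mean-field form of the step-$k$ entropy change (your Steps 1--2 simply make explicit where it comes from, via Proposition~\ref{prop:entropy-gradient-reweighting} and the closure). It then factors out $\Gamma(\bar H_k)$ using $W^*(h)=1+\Lambda(h)/\Gamma(h)$ and reads off the sign from $\Gamma(\bar H_k)>0$ and $N>0$, exactly as in your Steps 3--4.
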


\begin{proof}
By the mean-field form of the batch-level entropy dynamics,
\[
\Delta\bar H_k
=
-\frac{1}{N}\Big[\Lambda(\bar H_k)-(W-1)\Gamma(\bar H_k)\Big].
\]
The bracketed term can be rewritten as
\[
\Lambda(\bar H_k)-(W-1)\Gamma(\bar H_k)
=
\Gamma(\bar H_k)\left[\frac{\Lambda(\bar H_k)}{\Gamma(\bar H_k)}-(W-1)\right].
\]
Since
\[
W^*(\bar H_k)=1+\frac{\Lambda(\bar H_k)}{\Gamma(\bar H_k)},
\]
we have
\[
\frac{\Lambda(\bar H_k)}{\Gamma(\bar H_k)}-(W-1)=W^*(\bar H_k)-W.
\]
Substituting this identity gives
\[
\Delta\bar H_k
=
-\frac{\Gamma(\bar H_k)}{N}\bigl[W^*(\bar H_k)-W\bigr]
=
\frac{\Gamma(\bar H_k)}{N}\bigl[W-W^*(\bar H_k)\bigr].
\]
Because $N>0$ and $\Gamma(\bar H_k)>0$, the sign of $\Delta\bar H_k$ is entirely determined by $W-W^*(\bar H_k)$. This proves the proposition.
\end{proof}

\subsection{Open-Loop Self-Reinforcing Entropy Collapse and Recovery}\label{app:cross-step-feedback}

\begin{corollary}[Open-loop self-reinforcing entropy collapse and recovery]\label{cor:app-cross-step-feedback}
Under the mean-field approximation above, the open-loop dynamics exhibit two symmetric feedback regimes. If $W=1$ and the current entropy level satisfies $W^*(\bar H_k)>1$, then Proposition~\ref{prop:app-cross-step-sign} gives $\Delta\bar H_k<0$. Since $W^*(h)$ is strictly decreasing in $h$, the next step satisfies
\[
\bar H_{k+1}<\bar H_k
\quad\Longrightarrow\quad
W^*(\bar H_{k+1})\ge W^*(\bar H_k).
\]
Thus, the unit-weight GRPO baseline moves farther below the critical-weight curve, which strengthens the entropy-decreasing pressure and reinforces entropy collapse.

Conversely, if at some step $W>W^*(\bar H_k)$, then Proposition~\ref{prop:app-cross-step-sign} gives $\Delta\bar H_k>0$. By the same monotonicity condition,
\[
\bar H_{k+1}>\bar H_k
\quad\Longrightarrow\quad
W^*(\bar H_{k+1})\le W^*(\bar H_k).
\]
Therefore, the margin by which the fixed weight exceeds the critical-weight curve increases after the entropy rises. The entropy-increasing pressure is consequently strengthened, which reinforces entropy recovery.
\end{corollary}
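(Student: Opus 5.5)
The statement immediately above is Corollary~\ref{cor:app-cross-step-feedback}. I will derive it from Proposition~\ref{prop:app-cross-step-sign} together with the mean-field monotonicity hypothesis on $W^*(h)$. The argument has two symmetric branches, and each branch has a one-step part and a propagation part.

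\textbf{Collapse branch ($W=1$).} I first invoke Proposition~\ref{prop:app-cross-step-sign} with $W=1$, which gives
\[
\Delta\bar H_k=\frac{\Gamma(\bar H_k)}{N}\bigl[1-W^*(\bar H_k)\bigr].
\]
Since $\Gamma>0$ and $W^*(\bar H_k)>1$, this quantity is strictly negative, so $\bar H_{k+1}<\bar H_k$. I then apply strict monotone decrease of $W^*$ on the relevant interval and obtain $W^*(\bar H_{k+1})>W^*(\bar H_k)\ge W^*(\bar H_k)$. In particular $W^*(\bar H_{k+1})>1$ still holds. The hypothesis therefore reproduces itself at step $k+1$, and induction shows that $\bar H_k$ keeps decreasing. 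The gap $W^*(\bar H_k)-1$ is nondecreasing along the way, which is the claimed ``moving farther below the curve'' and the resulting strengthening of the entropy-decreasing pressure.

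\textbf{Recovery branch ($W>W^*(\bar H_k)$).} The same proposition gives $\Delta\bar H_k>0$, so $\bar H_{k+1}>\bar H_k$. Monotonicity then gives $W^*(\bar H_{k+1})\le W^*(\bar H_k)<W$, and hence $W-W^*(\bar H_{k+1})\ge W-W^*(\bar H_k)>0$. The margin is nondecreasing and the condition $W>W^*$ persists, so induction again propagates the regime.

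\textbf{Main obstacle.} The delicate point is ensuring that the iterates stay inside the ``relevant interval'' where the monotonicity of $W^*$ and the approximations $\Lambda_k\approx\Lambda(\bar H_k)$ and $\Gamma_k\approx\Gamma(\bar H_k)$ are assumed. I would state explicitly that the conclusion holds for as long as $\bar H_k$ remains in that interval. The claim about growing pressure also needs care. Strictly, the proposition shows that $|\Delta\bar H|$ equals $\Gamma$ times the gap, divided by $N$, and $\Gamma$ itself may shrink as $h$ drops. So I would phrase ``strengthening'' only in terms of the widening gap $|W-W^*|$, not as a bound on the step size. I would also take the approximations as exact equalities within the mean-field closure rather than attempting error control.
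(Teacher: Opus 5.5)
Your proposal is correct and follows essentially the same route as the paper: Proposition~\ref{prop:app-cross-step-sign} gives the sign of $\Delta\bar H_k$, and the monotonicity of $W^*(h)$ then shows that the gap $|W-W^*|$ widens after one step. Your additions are sound refinements of points the paper leaves implicit: the explicit induction, the caveat that the iterates must stay in the interval where the mean-field closure and monotonicity hold, and the remark that ``strengthening'' should be read as a widening gap rather than a larger step, since $\Gamma$ may shrink.
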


\begin{proof}
Both claims follow from Proposition~\ref{prop:app-cross-step-sign} and the monotonicity of $W^*(h)$. For the collapse regime, if $W=1$ and $W^*(\bar H_k)>1$, then $\Delta\bar H_k<0$, so $\bar H_{k+1}<\bar H_k$. Since the critical-weight curve decreases with entropy, evaluating it at the smaller entropy value $\bar H_{k+1}$ yields a critical weight that is no smaller than before. The unit-weight baseline is therefore even farther below the critical threshold at the next step. The recovery regime follows by the same argument with the inequalities reversed. This proves the corollary.
\end{proof}

\subsection{Local Stability of Batch-Level Target-Entropy Gating}\label{app:target-gating-stability}

\begin{proposition}[Local stability of batch-level target-entropy gating]\label{prop:app-target-gating-stability}
Consider the batch-level target-entropy gating rule introduced in Section~\ref{sec:target-entropy-gating},
\[
g_k=\mathbf 1[\bar H_k<H_{\mathrm{tgt}}].
\]
For the one-sided STARE variant, the one-step change in batch-averaged policy entropy is
\[
\Delta\bar H_k
=
-\frac{1}{N}\Big[\Lambda(\bar H_k)-g_k(W-1)\Gamma(\bar H_k)\Big].
\]
Define the gated and ungated update maps as
\[
F_{\mathrm{on}}(h)\triangleq -\frac{1}{N}\big[\Lambda(h)-(W-1)\Gamma(h)\big],
\]
\[
F_{\mathrm{off}}(h)\triangleq -\frac{1}{N}\Lambda(h).
\]
Assume that there exists a target-entropy neighborhood
\[
I=[H_{\mathrm{tgt}}-\delta,\ H_{\mathrm{tgt}}+\delta]
\]
such that the gated update points toward higher entropy below the target, while the ungated update points toward lower entropy at and above the target:
\[
F_{\mathrm{on}}(h)>0,
\qquad
h\in[H_{\mathrm{tgt}}-\delta,H_{\mathrm{tgt}}),
\]
\[
F_{\mathrm{off}}(h)<0,
\qquad
h\in[H_{\mathrm{tgt}},H_{\mathrm{tgt}}+\delta].
\]
Assume in addition that the one-step displacement inside this neighborhood is bounded by the neighborhood half-width:
\[
\Delta_{\max}
\triangleq
\max\Bigg\{
\sup_{h\in[H_{\mathrm{tgt}}-\delta,H_{\mathrm{tgt}})}F_{\mathrm{on}}(h),
\ 
\sup_{h\in[H_{\mathrm{tgt}},H_{\mathrm{tgt}}+\delta]}|F_{\mathrm{off}}(h)|
\Bigg\}
\le \delta.
\]
Then $I$ is forward invariant for the induced discrete-time entropy dynamics. Equivalently, once $\bar H_k$ enters $I$, all subsequent batch-averaged entropy values remain inside this neighborhood. The closed-loop system therefore exhibits bounded oscillation around the target entropy, rather than continuing to drift monotonically away from it.
\end{proposition}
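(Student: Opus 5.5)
The plan is a direct case analysis on the gate, followed by induction on $k$. Under the mean-field closure of Appendix~\ref{app:mean-field}, the induced dynamics is the scalar map
\[
\bar H_{k+1}=\bar H_k+\Delta\bar H_k,
\qquad
\Delta\bar H_k=
\begin{cases}
F_{\mathrm{on}}(\bar H_k), & \bar H_k<H_{\mathrm{tgt}},\\
F_{\mathrm{off}}(\bar H_k), & \bar H_k\ge H_{\mathrm{tgt}}.
\end{cases}
\]
This follows from the gated expression in the statement: substituting $g_k=1$ or $g_k=0$ into $-\frac1N[\Lambda(\bar H_k)-g_k(W-1)\Gamma(\bar H_k)]$ gives $F_{\mathrm{on}}$ or $F_{\mathrm{off}}$. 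The claim then reduces to a one-step statement: if $h\in I$, then $h+\Delta(h)\in I$.

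\emph{Case 1: $h\in[H_{\mathrm{tgt}}-\delta,H_{\mathrm{tgt}})$.} The gate is on. By hypothesis $0<F_{\mathrm{on}}(h)\le\Delta_{\max}\le\delta$. The positivity gives $h'>h\ge H_{\mathrm{tgt}}-\delta$. The displacement bound together with $h<H_{\mathrm{tgt}}$ gives $h'\le h+\delta<H_{\mathrm{tgt}}+\delta$. Hence $h'\in I$.

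\emph{Case 2: $h\in[H_{\mathrm{tgt}},H_{\mathrm{tgt}}+\delta]$.} The gate is off. By hypothesis $F_{\mathrm{off}}(h)<0$ and $|F_{\mathrm{off}}(h)|\le\delta$. So $h'<h\le H_{\mathrm{tgt}}+\delta$, and $h'\ge h-\delta\ge H_{\mathrm{tgt}}-\delta$. Hence $h'\in I$.

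These two cases exhaust $I$, so $I$ is mapped into itself. Forward invariance then follows by induction on $k$: if $\bar H_{k_0}\in I$, then $\bar H_k\in I$ for every $k\ge k_0$.

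For the ``bounded oscillation'' interpretation, I would add a remark drawn from the same case analysis. In each case the sign hypothesis forces the increment to point toward $H_{\mathrm{tgt}}$: it is positive strictly below the target and negative at or above it. Consequently the trajectory cannot drift monotonically away from the target inside $I$, and any excursion is bounded by $\Delta_{\max}\le\delta$.

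I expect the only delicate point to be the modeling step, not the inequalities. One must take the mean-field identities $\Lambda_k=\Lambda(\bar H_k)$ and $\Gamma_k=\Gamma(\bar H_k)$ as exact, rather than merely approximate, for the discrete map to be well defined. The boundary $h=H_{\mathrm{tgt}}$ must also be assigned consistently to the ungated branch, which the indicator $\mathbf 1[\bar H_k<H_{\mathrm{tgt}}]$ does. I would state explicitly that the proposition is proved for this idealized map, in line with the disclaimer at the start of Appendix~\ref{app:cross-step}.
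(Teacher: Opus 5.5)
Your proposal is correct and matches the paper's proof essentially step for step. It uses the same split on the gate ($g_k=1$ below the target, $g_k=0$ at or above it), with the sign hypothesis securing one endpoint of $I$ and the bound $\Delta_{\max}\le\delta$ securing the other. Your explicit induction on $k$, and your note that the mean-field identities $\Lambda_k=\Lambda(\bar H_k)$, $\Gamma_k=\Gamma(\bar H_k)$ must be taken as exact, spell out what the paper leaves implicit.
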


\begin{proof}
Within $I$, the assumed sign conditions imply
\[
F_{\mathrm{on}}(h)>0\quad (h<H_{\mathrm{tgt}}),
\qquad
F_{\mathrm{off}}(h)<0\quad (h\ge H_{\mathrm{tgt}}).
\]
The bounded-step condition gives $\Delta_{\max}\le\delta$.

If $\bar H_k\in[H_{\mathrm{tgt}}-\delta,H_{\mathrm{tgt}})$, then $g_k=1$, and hence
\[
\bar H_{k+1}=\bar H_k+F_{\mathrm{on}}(\bar H_k).
\]
Since $F_{\mathrm{on}}(\bar H_k)>0$,
\[
\bar H_{k+1}>\bar H_k\ge H_{\mathrm{tgt}}-\delta.
\]
Moreover, $F_{\mathrm{on}}(\bar H_k)\le \Delta_{\max}\le \delta$, so
\[
\bar H_{k+1}
\le
\bar H_k+\delta
<
H_{\mathrm{tgt}}+\delta.
\]
Therefore,
\[
\bar H_{k+1}\in I.
\]

If $\bar H_k\in[H_{\mathrm{tgt}},H_{\mathrm{tgt}}+\delta]$, then $g_k=0$, and hence
\[
\bar H_{k+1}=\bar H_k+F_{\mathrm{off}}(\bar H_k).
\]
Since $F_{\mathrm{off}}(\bar H_k)<0$,
\[
\bar H_{k+1}<\bar H_k\le H_{\mathrm{tgt}}+\delta.
\]
At the same time, $|F_{\mathrm{off}}(\bar H_k)|\le \Delta_{\max}\le \delta$, which gives
\[
\bar H_{k+1}
\ge
\bar H_k-\delta
\ge
H_{\mathrm{tgt}}-\delta.
\]
Thus,
\[
\bar H_{k+1}\in I.
\]
In both cases, one step of the dynamics maps $I$ into itself. Hence $I$ is forward invariant, and any trajectory that enters the target-entropy neighborhood remains there thereafter. This proves the proposition.
\end{proof}

\section{Single-Polarity Operations and Finer-Grained Closed-Loop Extensions}\label{app:single-polarity}

\subsection{Definition of Surprisal-Quantile Proxy Sets}\label{app:quantile-proxy-sets}

In Section~\ref{sec:entropy-critical-partition} of the main text, we define
\[
\mathcal T^{+}=\{(i,t):\hat A_i>0\},
\qquad
\mathcal T^{-}=\{(i,t):\hat A_i<0\}.
\]
Within $\mathcal T^{+}$ and $\mathcal T^{-}$, token positions are sorted separately in descending order of surprisal,
\[
\mathfrak{s}_{i,t}=-\ln\pi_\theta(o_{i,t}\mid x_i,o_{i,<t}),
\]
and the top $P\%$ positions are selected to form
\[
\mathcal L_q^{+}\subset \mathcal T^{+},
\qquad
\mathcal L_q^{-}\subset \mathcal T^{-}.
\]
To cover all four single-polarity operations, we also define the corresponding low-surprisal proxy sets. Let $\mathcal U_q^{+}$ denote the bottom $P\%$ positions with the lowest surprisal in $\mathcal T^{+}$, and let $\mathcal U_q^{-}$ denote the bottom $P\%$ positions with the lowest surprisal in $\mathcal T^{-}$.

These four proxy sets approximate the four theoretical quadrants in the advantage-surprisal decomposition. The set $\mathcal L_q^{+}$ corresponds to positive-advantage high-surprisal positions and targets the entropy-increasing quadrant. The set $\mathcal U_q^{+}$ corresponds to positive-advantage low-surprisal positions and targets the entropy-decreasing quadrant. The set $\mathcal U_q^{-}$ corresponds to negative-advantage low-surprisal positions and targets the entropy-increasing quadrant. The set $\mathcal L_q^{-}$ corresponds to negative-advantage high-surprisal positions and targets the entropy-decreasing quadrant.

\subsection{Position-Level Logit Update under STARE Reweighting}\label{app:position-logit-update-stare}

\begin{proposition}[Position-level logit update under STARE reweighting]\label{prop:app-position-logit-update-stare}
Under the weighted clipped surrogate objective
\[
\mathcal J_{\mathrm{STARE}}(\theta)
=
\frac{1}{N}\sum_{i,t}\omega_{i,t}
\min\!\Big(
\rho_{i,t}(\theta)\hat A_i,\,
\operatorname{clip}(\rho_{i,t}(\theta),1-\epsilon,1+\epsilon)\hat A_i
\Big),
\]
if position $(i,t)$ lies in the unclipped regime, its effective logit update is
\[
\Delta z_v
=
\eta\,\omega_{i,t}\hat A_i(\delta_{va}-\pi_v),
\qquad v\in\mathcal V.
\]
Since $\omega_{i,t}>0$ always holds, STARE never reverses any token-level policy-gradient direction. It only selectively rescales the magnitude of the original GRPO learning signal.
\end{proposition}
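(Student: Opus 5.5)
The plan is to reduce the statement to the unweighted case already handled in Proposition~\ref{prop:app-logit-update}, exploiting the fact that $\omega_{i,t}$ is a constant with respect to $\theta$. Fix a position $(i,t)$ with context $c=c_{i,t}$ and sampled token $a=o_{i,t}$. Because the weights $\omega_{i,t}$ are computed from the batch (surprisal quantiles, the gate $g_k$, and the current $W,M$) and then held fixed during differentiation, they act as stop-gradient scalars. Only the $(i,t)$ summand of $\mathcal J_{\mathrm{STARE}}$ depends on the logits $z$ at this position. So the gradient of $\mathcal J_{\mathrm{STARE}}$ with respect to $z$ equals $\frac{\omega_{i,t}}{N}$ times the gradient of the local clipped term $\min(\rho_{i,t}\hat A_i,\operatorname{clip}(\rho_{i,t},1-\epsilon,1+\epsilon)\hat A_i)$.

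Next I use the unclipped-regime hypothesis. In this regime the min selects $\rho_{i,t}(\theta)\hat A_i$ locally, so the local term is differentiable and its gradient is $\hat A_i\nabla_z\rho_{i,t}$. Since $\nabla_z\rho_{i,t}=\rho_{i,t}\nabla_z\log\pi_a$, and since $\rho_{i,t}=1$ in the on-policy setting (and $\rho_{i,t}>0$ in general), this gradient is a positive multiple of $\hat A_i\nabla_z\log\pi_a$. The derivative $\partial\log\pi_a/\partial z_v=\delta_{va}-\pi_v$ comes from Proposition~\ref{prop:app-softmax-jacobian}, exactly as in the proof of Proposition~\ref{prop:app-logit-update}.

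Absorbing the positive constants $\rho_{i,t}/N$ into the infinitesimal step size $\eta>0$ then gives the update
\[
\Delta z_v=\eta\,\omega_{i,t}\hat A_i(\delta_{va}-\pi_v).
\]

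The direction claim follows by comparing this update with the GRPO update $\eta\hat A_i(\delta_{va}-\pi_v)$. The two differ by the factor $\omega_{i,t}$, which lies in $\{W,M,1\}$, or equals $1+g_k(W-1)$ or $1-g_k(1-M)$, with $W>1$ and $0<M<1$. Every such value is strictly positive. The sign of each logit component, and hence whether $a$ is reinforced or suppressed, is therefore unchanged; only its magnitude is rescaled.

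The main subtlety is the stop-gradient treatment of $\omega_{i,t}$. It depends on $\pi_\theta$ through the surprisal ranking and through $\bar H_k$, so I will state explicitly that these quantities are detached, as in Algorithm~\ref{alg:stare-o1}. A second point to make explicit is the boundary case of the min or clip, where I restrict attention to the interior of the unclipped region.
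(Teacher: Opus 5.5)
Your proposal is correct and follows the same route as the paper, which reduces to Proposition~\ref{prop:app-logit-update} by observing that the positive weight $\omega_{i,t}$ multiplies the local surrogate term and therefore scales its logit gradient. You additionally make explicit that $\omega_{i,t}$ is a detached (stop-gradient) scalar and that the factor $\rho_{i,t}$ is absorbed into $\eta$; the paper leaves both points implicit, and neither changes the argument.
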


\begin{proof}
The proof follows the same argument as Proposition~\ref{prop:app-logit-update}. The only change is that the local surrogate term is multiplied by the positive weight $\omega_{i,t}$. Therefore, the corresponding logit gradient is scaled by $\omega_{i,t}$:
\[
\frac{\partial}{\partial z_v}\bigl[\omega_{i,t}\hat A_i\log\pi_a\bigr]
=
\omega_{i,t}\hat A_i(\delta_{va}-\pi_v).
\]
Taking one infinitesimal gradient ascent step gives the stated update. Because $\omega_{i,t}>0$, the policy-gradient direction is preserved. This proves the proposition.
\end{proof}

\subsection{Exact Batch-Level Entropy Variation under Reweighting an Arbitrary Token Subset}\label{app:arbitrary-subset-reweighting}

\begin{proposition}[Exact batch-level entropy variation under reweighting an arbitrary token subset]\label{prop:app-arbitrary-subset-reweighting}
Let $\mathcal S$ be an arbitrary subset of token positions. Apply a uniform weight $r>0$ to positions in $\mathcal S$ and unit weight to all remaining positions:
\[
\omega_{i,t}
=
\begin{cases}
r, & (i,t)\in\mathcal S,\\
1, & \text{otherwise}.
\end{cases}
\]
Then, in the unclipped regime,
\[
\left.\frac{d\bar H}{d\eta}\right|_{\mathcal S,r}
=
-\frac{1}{N}\Big[\Lambda+(r-1)\Xi(\mathcal S)\Big],
\]
where
\[
\Lambda\triangleq \sum_{i,t}\hat A_i\Phi_{i,t},
\qquad
\Xi(\mathcal S)\triangleq \sum_{(i,t)\in\mathcal S}\hat A_i\Phi_{i,t}.
\]
\end{proposition}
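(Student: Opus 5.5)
The plan is to reuse the linearity argument from the proof of Proposition~\ref{prop:entropy-gradient-reweighting}, now for a general subset $\mathcal S$ and a general weight $r>0$.

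First, I would invoke Proposition~\ref{prop:app-position-logit-update-stare}. It gives the unclipped logit update $\Delta z_v=\eta\,\omega_{i,t}\hat A_i(\delta_{va}-\pi_v)$ at each position. So the logit velocity at $(i,t)$ is exactly $\omega_{i,t}$ times the GRPO velocity used in Theorem~\ref{thm:token-entropy-variation}.

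Second, since $dH/d\eta$ is the inner product of $\partial H/\partial z$ (Lemma~\ref{lem:entropy-gradient-logits}) with this velocity, it is linear in the velocity. Repeating the computation of Theorem~\ref{thm:token-entropy-variation} with $\hat A$ replaced by $\omega_{i,t}\hat A_i$ gives the per-position contribution $-\omega_{i,t}\hat A_i\Phi_{i,t}$.

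Third, I would average over positions, following the same convention as Proposition~\ref{prop:entropy-gradient-reweighting}: the batch entropy $\bar H=N^{-1}\sum H_{i,t}$ changes through each position's own update. This yields
\[
\left.\frac{d\bar H}{d\eta}\right|_{\mathcal S,r}=-\frac1N\sum_{i,t}\omega_{i,t}\hat A_i\Phi_{i,t}.
\]

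Finally, I would split the sum into positions outside $\mathcal S$ (weight $1$) and positions in $\mathcal S$ (weight $r$), and write $r=1+(r-1)$ on $\mathcal S$. This gives $\sum_{i,t}\hat A_i\Phi_{i,t}+(r-1)\sum_{\mathcal S}\hat A_i\Phi_{i,t}=\Lambda+(r-1)\Xi(\mathcal S)$.

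No sign information about $\Phi$ is needed, because $\Xi(\mathcal S)$ is kept signed. This is what makes the identity "exact" for arbitrary $\mathcal S$, unlike the $\Gamma$ formulation, which relies on $\Phi<0$ on $\mathcal L^+$.

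The only delicate point is the modelling convention that cross-position parameter coupling is ignored, i.e.\ each $H_{i,t}$ responds only to its own logit update. I would state this explicitly as the same first-order, per-position convention already adopted in Proposition~\ref{prop:entropy-gradient-reweighting}, rather than try to justify it anew. As a consistency check, I would note that $\mathcal S=\mathcal L^+$ recovers that proposition via $\Xi(\mathcal L^+)=-\Gamma$ and $r=W$.
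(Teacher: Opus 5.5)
Your proposal is correct and follows essentially the same route as the paper: you scale each position's contribution to $-\omega_{i,t}\hat A_i\Phi_{i,t}$ via Proposition~\ref{prop:app-position-logit-update-stare} and Theorem~\ref{thm:token-entropy-variation}, average over the batch, and split the sum using $r=1+(r-1)$ on $\mathcal S$. Your explicit statement of the per-position, first-order convention (each $H_{i,t}$ responds only to its own logit update) is a fair clarification of an assumption the paper's proof uses implicitly.
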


\begin{proof}
By Proposition~\ref{prop:app-position-logit-update-stare} and Theorem~\ref{thm:token-entropy-variation}, the first-order entropy contribution at each position is linearly scaled by its token-level weight:
\[
-\omega_{i,t}\hat A_i\Phi_{i,t}.
\]
Thus,
\[
\left.\frac{d\bar H}{d\eta}\right|_{\omega}
=
-\frac{1}{N}\sum_{i,t}\omega_{i,t}\hat A_i\Phi_{i,t}.
\]
Substituting the single-subset reweighting form gives
\[
\sum_{i,t}\omega_{i,t}\hat A_i\Phi_{i,t}
=
\sum_{(i,t)\notin\mathcal S}\hat A_i\Phi_{i,t}
+
r\sum_{(i,t)\in\mathcal S}\hat A_i\Phi_{i,t}.
\]
Rearranging the expression into the unweighted batch sum plus the reweighting correction yields
\[
\sum_{i,t}\omega_{i,t}\hat A_i\Phi_{i,t}
=
\sum_{i,t}\hat A_i\Phi_{i,t}
+
(r-1)\sum_{(i,t)\in\mathcal S}\hat A_i\Phi_{i,t}
=
\Lambda+(r-1)\Xi(\mathcal S).
\]
Substitution completes the proof:
\[
\left.\frac{d\bar H}{d\eta}\right|_{\mathcal S,r}
=
-\frac{1}{N}\Big[\Lambda+(r-1)\Xi(\mathcal S)\Big].
\]
\end{proof}

\subsection{Four Single-Polarity Operations}\label{app:single-polarity-operations}

Operation O1 amplifies $\mathcal L_q^{+}$. Define
\[
\omega_{i,t}^{(\mathrm{O1})}
=
\begin{cases}
W_+, & (i,t)\in\mathcal L_q^{+},\\
1, & \text{otherwise},
\end{cases}
\qquad W_+>1.
\]
By Proposition~\ref{prop:app-arbitrary-subset-reweighting},
\[
\left.\frac{d\bar H}{d\eta}\right|_{\mathrm{O1}}
=
-\frac{1}{N}\Big[\Lambda+(W_+-1)\Xi(\mathcal L_q^{+})\Big].
\]
When $\mathcal L_q^{+}$ is dominated by positive-advantage high-surprisal entropy-increasing positions, we have $\Xi(\mathcal L_q^{+})<0$. Define
\[
\Gamma_{L^+}\triangleq -\Xi(\mathcal L_q^{+})
=
\sum_{(i,t)\in\mathcal L_q^{+}}|\hat A_i\Phi_{i,t}|.
\]
Then,
\[
\left.\frac{d\bar H}{d\eta}\right|_{\mathrm{O1}}
=
-\frac{1}{N}\Big[\Lambda-(W_+-1)\Gamma_{L^+}\Big].
\]

Operation O2 attenuates $\mathcal U_q^{+}$. Define
\[
\omega_{i,t}^{(\mathrm{O2})}
=
\begin{cases}
M_+, & (i,t)\in\mathcal U_q^{+},\\
1, & \text{otherwise},
\end{cases}
\qquad 0<M_+<1.
\]
By Proposition~\ref{prop:app-arbitrary-subset-reweighting},
\[
\left.\frac{d\bar H}{d\eta}\right|_{\mathrm{O2}}
=
-\frac{1}{N}\Big[\Lambda+(M_+-1)\Xi(\mathcal U_q^{+})\Big].
\]
When $\mathcal U_q^{+}$ is dominated by positive-advantage low-surprisal entropy-decreasing positions, we have $\Xi(\mathcal U_q^{+})>0$. Define
\[
\Gamma_{U^+}\triangleq \Xi(\mathcal U_q^{+})
=
\sum_{(i,t)\in\mathcal U_q^{+}}|\hat A_i\Phi_{i,t}|.
\]
Then,
\[
\left.\frac{d\bar H}{d\eta}\right|_{\mathrm{O2}}
=
-\frac{1}{N}\Big[\Lambda-(1-M_+)\Gamma_{U^+}\Big].
\]

Operation O3 amplifies $\mathcal U_q^{-}$. Define
\[
\omega_{i,t}^{(\mathrm{O3})}
=
\begin{cases}
W_-, & (i,t)\in\mathcal U_q^{-},\\
1, & \text{otherwise},
\end{cases}
\qquad W_->1.
\]
By Proposition~\ref{prop:app-arbitrary-subset-reweighting},
\[
\left.\frac{d\bar H}{d\eta}\right|_{\mathrm{O3}}
=
-\frac{1}{N}\Big[\Lambda+(W_--1)\Xi(\mathcal U_q^{-})\Big].
\]
When $\mathcal U_q^{-}$ is dominated by negative-advantage low-surprisal entropy-increasing positions, we have $\Xi(\mathcal U_q^{-})<0$. Define
\[
\Gamma_{U^-}\triangleq -\Xi(\mathcal U_q^{-})
=
\sum_{(i,t)\in\mathcal U_q^{-}}|\hat A_i\Phi_{i,t}|.
\]
Then,
\[
\left.\frac{d\bar H}{d\eta}\right|_{\mathrm{O3}}
=
-\frac{1}{N}\Big[\Lambda-(W_--1)\Gamma_{U^-}\Big].
\]

Operation O4 attenuates $\mathcal L_q^{-}$. Define
\[
\omega_{i,t}^{(\mathrm{O4})}
=
\begin{cases}
M_-, & (i,t)\in\mathcal L_q^{-},\\
1, & \text{otherwise},
\end{cases}
\qquad 0<M_-<1.
\]
By Proposition~\ref{prop:app-arbitrary-subset-reweighting},
\[
\left.\frac{d\bar H}{d\eta}\right|_{\mathrm{O4}}
=
-\frac{1}{N}\Big[\Lambda+(M_--1)\Xi(\mathcal L_q^{-})\Big].
\]
When $\mathcal L_q^{-}$ is dominated by negative-advantage high-surprisal entropy-decreasing positions, we have $\Xi(\mathcal L_q^{-})>0$. Define
\[
\Gamma_{L^-}\triangleq \Xi(\mathcal L_q^{-})
=
\sum_{(i,t)\in\mathcal L_q^{-}}|\hat A_i\Phi_{i,t}|.
\]
Then,
\[
\left.\frac{d\bar H}{d\eta}\right|_{\mathrm{O4}}
=
-\frac{1}{N}\Big[\Lambda-(1-M_-)\Gamma_{L^-}\Big].
\]

\subsection{Exact Entropy Variation under Unified Closed-Loop Gating}\label{app:closed-loop-gating-exact}

\begin{proposition}[Exact entropy variation under unified closed-loop gating]\label{prop:app-closed-loop-gating-exact}
Let $g_{i,t}\in\{0,1\}$ be an arbitrary binary gate. For the one-sided variant, define
\[
\omega_{i,t}^{(\mathrm{V1},g)}
=
1+g_{i,t}(W-1)\mathbf 1[(i,t)\in\mathcal L_q^{+}],
\qquad W>1.
\]
Then,
\[
\left.\frac{d\bar H}{d\eta}\right|_{\mathrm{V1},g}
=
-\frac{1}{N}
\left[
\Lambda
+
(W-1)\sum_{i,t}g_{i,t}\mathbf 1[(i,t)\in\mathcal L_q^{+}]\hat A_i\Phi_{i,t}
\right].
\]
If the activated proxy set is dominated by the corresponding entropy-increasing quadrant, the expression reduces to
\[
\left.\frac{d\bar H}{d\eta}\right|_{\mathrm{V1},g}
=
-\frac{1}{N}\bigl[\Lambda-(W-1)\Gamma_g^{+}\bigr],
\]
where
\[
\Gamma_g^{+}
\triangleq
\sum_{i,t}g_{i,t}\mathbf 1[(i,t)\in\mathcal L_q^{+}]|\hat A_i\Phi_{i,t}|.
\]

For the two-sided variant, define
\[
\resizebox{0.97\linewidth}{!}{$\displaystyle
\omega_{i,t}^{(\mathrm{V2},g)}
=
1+g_{i,t}(W-1)\mathbf 1[(i,t)\in\mathcal L_q^{+}]
-g_{i,t}(1-M)\mathbf 1[(i,t)\in\mathcal L_q^{-}],
\qquad W>1,\quad 0<M<1.
$}
\]
Then,
\[
\resizebox{0.97\linewidth}{!}{$\displaystyle
\left.\frac{d\bar H}{d\eta}\right|_{\mathrm{V2},g}
=
-\frac{1}{N}
\left[
\Lambda
+
(W-1)\sum_{i,t}g_{i,t}\mathbf 1[(i,t)\in\mathcal L_q^{+}]\hat A_i\Phi_{i,t}
+
(M-1)\sum_{i,t}g_{i,t}\mathbf 1[(i,t)\in\mathcal L_q^{-}]\hat A_i\Phi_{i,t}
\right].
$}
\]

If the two activated proxy sets are respectively dominated by the corresponding entropy-increasing and entropy-decreasing quadrants, the expression reduces to
\[
\left.\frac{d\bar H}{d\eta}\right|_{\mathrm{V2},g}
=
-\frac{1}{N}\bigl[\Lambda-(W-1)\Gamma_g^{+}-(1-M)\Gamma_g^{-}\bigr],
\]
where
\[
\Gamma_g^{-}
\triangleq
\sum_{i,t}g_{i,t}\mathbf 1[(i,t)\in\mathcal L_q^{-}]|\hat A_i\Phi_{i,t}|.
\]
\end{proposition}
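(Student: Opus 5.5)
The argument is the same linearity computation used for Proposition~\ref{prop:app-arbitrary-subset-reweighting}, now with a position-dependent weight in place of a uniform one. Two facts do all the work. By Proposition~\ref{prop:app-position-logit-update-stare}, a position with weight $\omega_{i,t}>0$ in the unclipped regime has logit velocity $\omega_{i,t}\hat A_i(\delta_{va}-\pi_v)$. Pairing this with Lemma~\ref{lem:entropy-gradient-logits}, exactly as in the proof of Theorem~\ref{thm:token-entropy-variation}, gives the per-position entropy contribution $-\omega_{i,t}\hat A_i\Phi_{i,t}$. Averaging over the $N$ positions then yields the master identity
\[
\left.\frac{d\bar H}{d\eta}\right|_{\omega}=-\frac{1}{N}\sum_{i,t}\omega_{i,t}\hat A_i\Phi_{i,t}.
\]
This identity requires only that every $\omega_{i,t}$ be a fixed positive scalar during the infinitesimal step.

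For the one-sided variant, I would substitute $\omega^{(\mathrm{V1},g)}_{i,t}=1+g_{i,t}(W-1)\mathbf 1[(i,t)\in\mathcal L_q^{+}]$. Splitting the sum gives $\Lambda$ plus the correction $(W-1)\sum_{i,t} g_{i,t}\mathbf 1[\cdot]\hat A_i\Phi_{i,t}$, which is the first display.

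For the reduced form, I would read ``dominated by the entropy-increasing quadrant'' as the hypothesis that $\hat A_i\Phi_{i,t}\le 0$ on each activated position of $\mathcal L_q^{+}$. By Proposition~\ref{prop:critical-surprisal-threshold} this holds exactly when the activated proxy positions lie in $\widetilde{\mathcal L}^{+}$. Under that hypothesis $\hat A_i\Phi_{i,t}=-|\hat A_i\Phi_{i,t}|$ termwise, so the correction equals $-(W-1)\Gamma_g^{+}$. If one prefers the weaker aggregate reading, where only the signed sum equals minus $\Gamma_g^{+}$, then the reduction should be stated as the definition of the effective quantity, as was done for O1.

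The two-sided variant goes the same way. With
\[
\omega^{(\mathrm{V2},g)}_{i,t}=1+g_{i,t}(W-1)\mathbf 1[\mathcal L_q^{+}]-g_{i,t}(1-M)\mathbf 1[\mathcal L_q^{-}],
\]
one first checks positivity. Since $\mathcal L_q^{+}\subset\mathcal T^{+}$ and $\mathcal L_q^{-}\subset\mathcal T^{-}$ are disjoint, at most one indicator is active at any position. So $\omega\in\{1,W,M\}$, all of which are positive, and Proposition~\ref{prop:app-position-logit-update-stare} applies. Expanding gives $\Lambda$ plus the two corrections, with coefficients $(W-1)$ and $-(1-M)=(M-1)$, which is the stated formula.

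For the reduced form, on the activated part of $\mathcal L_q^{-}$ the positions lie in the quadrant with $\hat A_i<0$ and $\mathfrak s>\mathfrak s^*$. There $\Phi<0$, so $\hat A_i\Phi_{i,t}=|\hat A_i\Phi_{i,t}|\ge0$. The second correction is therefore $(M-1)\Gamma_g^{-}=-(1-M)\Gamma_g^{-}$, which combines with the first to give the final expression.

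The main obstacle is not computational. It is making the ``dominated by'' hypothesis precise enough that the termwise sign substitution is legitimate: the quantile proxy sets are not guaranteed to lie inside $\widetilde{\mathcal L}^{\pm}$, as the roughly $60$–$95\%$ agreement rate shows. I would handle this by stating the exact formulas unconditionally and treating the reduced forms as conditional on the pointwise sign condition. I would also note that the gate $g_{i,t}$ is fixed before the gradient step (it is computed from the current batch), so it does not depend on $\eta$ and does not disturb the linearity.
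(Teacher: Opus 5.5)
Your proposal is correct and is essentially the paper's proof. You substitute the gated weights into the weighted identity $\left.d\bar H/d\eta\right|_{\omega}=-\frac{1}{N}\sum_{i,t}\omega_{i,t}\hat A_i\Phi_{i,t}$ from the proof of Proposition~\ref{prop:app-arbitrary-subset-reweighting}, expand $\omega_{i,t}-1$ by linearity, and rewrite the signed sums as absolute sums once the activated proxy positions carry their target quadrants' signs. Your additions (the disjointness check giving $\omega_{i,t}\in\{1,W,M\}$, the $\eta$-independence of $g_{i,t}$, and the termwise reading of ``dominated by'') only make explicit what the paper leaves implicit; but with $\Gamma_g^{+}$ defined as a sum of absolute values, the aggregate equality $\sum_{i,t}g_{i,t}\mathbf 1[(i,t)\in\mathcal L_q^{+}]\hat A_i\Phi_{i,t}=-\Gamma_g^{+}$ already forces every term to be nonpositive, so it is not actually a weaker reading.
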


\begin{proof}
The result follows by substituting the corresponding gated weights into the general weighted entropy identity derived in the proof of Proposition~\ref{prop:app-arbitrary-subset-reweighting}. In the one-sided case,
\[
\omega_{i,t}^{(\mathrm{V1},g)}-1
=
g_{i,t}(W-1)\mathbf 1[(i,t)\in\mathcal L_q^{+}],
\]
which gives
\[
\left.\frac{d\bar H}{d\eta}\right|_{\mathrm{V1},g}
=
-\frac{1}{N}
\left[
\Lambda
+
(W-1)\sum_{i,t}g_{i,t}\mathbf 1[(i,t)\in\mathcal L_q^{+}]\hat A_i\Phi_{i,t}
\right].
\]
The two-sided case follows by the same linearity argument. When the gated proxy sets preserve the dominant signs of their target quadrants, the corresponding signed sums can be rewritten as sums of absolute contributions. This proves the proposition.
\end{proof}

\subsection{Unified Instantiations of Three Closed-Loop Granularities}\label{app:closed-loop-granularities}

The gate $g_{i,t}$ in Proposition~\ref{prop:app-closed-loop-gating-exact} admits three closed-loop instantiations. For batch-level gating, define
\[
g_{i,t}^{(\mathrm{batch})}
=
\mathbf 1[\bar H_k<H_{\mathrm{tgt}}].
\]
For sample-level gating, define
\[
g_{i,t}^{(\mathrm{sample})}
=
\mathbf 1[\bar H_i<H_{\mathrm{tgt}}],
\qquad
\bar H_i\triangleq \frac{1}{T_i}\sum_{t=1}^{T_i}H_{i,t}.
\]
For token-level gating, define
\[
g_{i,t}^{(\mathrm{token})}
=
\mathbf 1[H_{i,t}<H_{\mathrm{tgt}}].
\]
Because $\omega_{i,t}>0$ holds in all three cases, every closed-loop variant preserves the direction of the original GRPO learning signal. The variants differ only in the granularity at which they modulate the local strength of the token-level update.

\section{Combined Reweighting Operations and Adaptive Weights}\label{app:combined-operations}

\subsection{Exact Entropy Variation for Arbitrary Two-Subset Reweighting}\label{app:two-subset-reweighting}

\begin{proposition}[Exact entropy variation for arbitrary two-subset reweighting]\label{prop:app-two-subset-reweighting}
Let $\mathcal S_1$ and $\mathcal S_2$ be two disjoint token subsets. We assign multiplicative token weights $r_1>0$ and $r_2>0$ to them, respectively, while all remaining token positions receive unit weight:
\[
\omega_{i,t}
=
\begin{cases}
r_1, & (i,t)\in\mathcal S_1,\\
r_2, & (i,t)\in\mathcal S_2,\\
1, & \text{otherwise}.
\end{cases}
\]
Then, in the unclipped regime of the weighted GRPO surrogate,
\[
\left.\frac{d\bar H}{d\eta}\right|_{\mathcal S_1,r_1;\mathcal S_2,r_2}
=
-\frac{1}{N}
\Big[
\Lambda+(r_1-1)\Xi(\mathcal S_1)+(r_2-1)\Xi(\mathcal S_2)
\Big].
\]
\end{proposition}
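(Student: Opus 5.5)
The plan is to run the same linearity argument used for Proposition~\ref{prop:app-arbitrary-subset-reweighting}, now with two weighted subsets instead of one.

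\textbf{Step 1 (per-position contribution).} By Proposition~\ref{prop:app-position-logit-update-stare}, at each unclipped position $(i,t)$ the logit velocity is $\omega_{i,t}\hat A_i(\delta_{va}-\pi_v)$. This is simply the GRPO velocity of Proposition~\ref{prop:app-logit-update} scaled by the positive scalar $\omega_{i,t}$. The directional derivative in Theorem~\ref{thm:token-entropy-variation} is the inner product of $\partial H/\partial z$ (Lemma~\ref{lem:entropy-gradient-logits}) with this velocity, so it is linear in the velocity. Hence the first-order entropy contribution at $(i,t)$ is $-\omega_{i,t}\hat A_i\Phi_{i,t}$.

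\textbf{Step 2 (batch average).} Averaging over the $N$ positions gives
\[
\left.\frac{d\bar H}{d\eta}\right|_{\omega}=-\frac{1}{N}\sum_{i,t}\omega_{i,t}\hat A_i\Phi_{i,t}.
\]
This step is identical to the one used in the proof of Proposition~\ref{prop:app-arbitrary-subset-reweighting}.

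\textbf{Step 3 (split the sum).} Because $\mathcal S_1$ and $\mathcal S_2$ are disjoint, the index set splits into the three pieces $\mathcal S_1$, $\mathcal S_2$, and the complement of $\mathcal S_1\cup\mathcal S_2$. The weighted sum therefore equals
\[
\sum_{\text{rest}}\hat A_i\Phi_{i,t}+r_1\,\Xi(\mathcal S_1)+r_2\,\Xi(\mathcal S_2).
\]

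\textbf{Step 4 (regroup).} Add and subtract $\Xi(\mathcal S_1)+\Xi(\mathcal S_2)$. This reassembles the full unweighted sum $\Lambda$ and leaves the correction terms $(r_1-1)\Xi(\mathcal S_1)+(r_2-1)\Xi(\mathcal S_2)$. Substituting back into the batch average yields the claimed formula.

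\textbf{Main obstacle.} Essentially nothing here is hard; the only points needing care are the following.
\begin{itemize}
\item Disjointness is what makes the three-way split well defined. Without it, an overlapping position would receive an ambiguous weight.
\item The ``unclipped regime'' hypothesis must hold at every position. Only then does each position contribute through the linear logit update, so that the positive weights factor out as scalars.
\end{itemize}
It is also worth remarking that the argument extends verbatim to any finite collection of disjoint subsets, which is the form the combined operations C1--C4 will need.
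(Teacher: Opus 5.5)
Your proposal is correct and takes the same route as the paper: the first-order entropy contribution at each position is linear in the positive token weight, so the weighted batch sum splits over $\mathcal S_1$, $\mathcal S_2$, and the rest. Regrouping that sum into $\Lambda+(r_1-1)\Xi(\mathcal S_1)+(r_2-1)\Xi(\mathcal S_2)$ is exactly the paper's argument, which it states as a direct extension of the single-subset Proposition~\ref{prop:app-arbitrary-subset-reweighting}.
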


\begin{proof}
The proof follows directly from the same argument as Proposition~\ref{prop:app-arbitrary-subset-reweighting}. By the linearity of multiplicative token weighting,
\[
\sum_{i,t}\omega_{i,t}\hat A_i\Phi_{i,t}
=
\sum_{i,t}\hat A_i\Phi_{i,t}
+
(r_1-1)\sum_{(i,t)\in\mathcal S_1}\hat A_i\Phi_{i,t}
+
(r_2-1)\sum_{(i,t)\in\mathcal S_2}\hat A_i\Phi_{i,t}.
\]
Using the definitions of $\Lambda$ and $\Xi(\cdot)$, this becomes
\[
\sum_{i,t}\omega_{i,t}\hat A_i\Phi_{i,t}
=
\Lambda+(r_1-1)\Xi(\mathcal S_1)+(r_2-1)\Xi(\mathcal S_2).
\]
Substituting this identity into the first-order batch-averaged entropy derivative yields the result.
\end{proof}

\subsection{Four Representative Combined Reweighting Operations}\label{app:representative-combined-ops}

We next instantiate Proposition~\ref{prop:app-two-subset-reweighting} for four representative two-subset reweighting schemes. Each scheme combines two single-polarity interventions from the four-quadrant entropy decomposition.

\paragraph{(1) Combination C1: amplifying $\mathcal L_q^{+}$ and amplifying $\mathcal U_q^{-}$.}

C1 amplifies the two dominant entropy-increasing token categories on both advantage sides.

Define
\[
\omega_{i,t}^{(\mathrm{C1})}
=
\begin{cases}
W_+, & (i,t)\in\mathcal L_q^{+},\\
W_-, & (i,t)\in\mathcal U_q^{-},\\
1, & \text{otherwise},
\end{cases}
\qquad W_+>1,\quad W_->1.
\]
By Proposition~\ref{prop:app-two-subset-reweighting},
\[
\left.\frac{d\bar H}{d\eta}\right|_{\mathrm{C1}}
=
-\frac{1}{N}
\Big[
\Lambda+(W_+-1)\Xi(\mathcal L_q^{+})+(W_--1)\Xi(\mathcal U_q^{-})
\Big].
\]
When the two proxy sets capture the dominant mass of their corresponding entropy-increasing quadrants, we have
\[
\left.\frac{d\bar H}{d\eta}\right|_{\mathrm{C1}}
=
-\frac{1}{N}
\Big[
\Lambda-(W_+-1)\Gamma_{L^+}-(W_--1)\Gamma_{U^-}
\Big].
\]

\paragraph{(2) Combination C2: amplifying $\mathcal L_q^{+}$ and attenuating $\mathcal L_q^{-}$.}

C2 strengthens high-surprisal entropy-increasing tokens under positive advantages and weakens high-surprisal entropy-decreasing tokens under negative advantages. This is the combined operation used by Variant II in the main text.

Define
\[
\omega_{i,t}^{(\mathrm{C2})}
=
\begin{cases}
W, & (i,t)\in\mathcal L_q^{+},\\
M, & (i,t)\in\mathcal L_q^{-},\\
1, & \text{otherwise},
\end{cases}
\qquad W>1,\quad 0<M<1.
\]
By Proposition~\ref{prop:app-two-subset-reweighting},
\[
\left.\frac{d\bar H}{d\eta}\right|_{\mathrm{C2}}
=
-\frac{1}{N}
\Big[
\Lambda+(W-1)\Xi(\mathcal L_q^{+})+(M-1)\Xi(\mathcal L_q^{-})
\Big].
\]
When the two proxy sets capture the dominant mass of the corresponding entropy-increasing and entropy-decreasing quadrants, respectively, we obtain
\[
\left.\frac{d\bar H}{d\eta}\right|_{\mathrm{C2}}
=
-\frac{1}{N}
\Big[
\Lambda-(W-1)\Gamma_{L^+}-(1-M)\Gamma_{L^-}
\Big].
\]
This expression matches the Variant II batch entropy shift in the main text:
\[
\Lambda_{\mathrm{V2}}
=
\Lambda-(W-1)\Gamma^{+}-(1-M)\Gamma^{-},
\]
where
\[
\Gamma^{+}=\Gamma_{L^+},
\qquad
\Gamma^{-}=\Gamma_{L^-}.
\]

\paragraph{(3) Combination C3: attenuating $\mathcal U_q^{+}$ and amplifying $\mathcal U_q^{-}$.}

C3 regulates the low-surprisal region from both advantage sides. It weakens the entropy-decreasing signal from positive-advantage low-surprisal tokens and strengthens the entropy-increasing signal from negative-advantage low-surprisal tokens.

Define
\[
\omega_{i,t}^{(\mathrm{C3})}
=
\begin{cases}
M_+, & (i,t)\in\mathcal U_q^{+},\\
W_-, & (i,t)\in\mathcal U_q^{-},\\
1, & \text{otherwise},
\end{cases}
\qquad 0<M_+<1,\quad W_->1.
\]
By Proposition~\ref{prop:app-two-subset-reweighting},
\[
\left.\frac{d\bar H}{d\eta}\right|_{\mathrm{C3}}
=
-\frac{1}{N}
\Big[
\Lambda+(M_+-1)\Xi(\mathcal U_q^{+})+(W_--1)\Xi(\mathcal U_q^{-})
\Big].
\]
When the proxy sets align with their target quadrants and capture the dominant contributions, we have
\[
\left.\frac{d\bar H}{d\eta}\right|_{\mathrm{C3}}
=
-\frac{1}{N}
\Big[
\Lambda-(1-M_+)\Gamma_{U^+}-(W_--1)\Gamma_{U^-}
\Big].
\]

\paragraph{(4) Combination C4: attenuating $\mathcal U_q^{+}$ and attenuating $\mathcal L_q^{-}$.}

C4 attenuates the two dominant entropy-decreasing token categories.

Define
\[
\omega_{i,t}^{(\mathrm{C4})}
=
\begin{cases}
M_+, & (i,t)\in\mathcal U_q^{+},\\
M_-, & (i,t)\in\mathcal L_q^{-},\\
1, & \text{otherwise},
\end{cases}
\qquad 0<M_+<1,\quad 0<M_-<1.
\]
By Proposition~\ref{prop:app-two-subset-reweighting},
\[
\left.\frac{d\bar H}{d\eta}\right|_{\mathrm{C4}}
=
-\frac{1}{N}
\Big[
\Lambda+(M_+-1)\Xi(\mathcal U_q^{+})+(M_--1)\Xi(\mathcal L_q^{-})
\Big].
\]
When the two proxy sets capture the dominant mass of their corresponding entropy-decreasing quadrants, we obtain
\[
\left.\frac{d\bar H}{d\eta}\right|_{\mathrm{C4}}
=
-\frac{1}{N}
\Big[
\Lambda-(1-M_+)\Gamma_{U^+}-(1-M_-)\Gamma_{L^-}
\Big].
\]

\begin{table}[ht]
\centering
\caption{Summary of O1 to O4 and C1 to C4. The Target subset column specifies the entropy quadrant or pair of quadrants selected by each operation.}
\label{tab:o1-o4-c1-c4-summary}
\small
\setlength{\tabcolsep}{4pt}
\scalebox{0.9}{
\begin{tabularx}{\textwidth}{@{} c c c p{3.2cm} X @{}}
\toprule
\textbf{Notation} & \textbf{Type} & \textbf{Selected Subset} & \textbf{Weight Adjustment} & \textbf{Mechanistic  Interpretation} \\
\midrule
\addlinespace[2pt]
O1 & Single-polarity & $\mathcal{L}_q^{+}$ & Amplification, $W_+ > 1$ & Strengthens high-surprisal tokens in positive-advantage samples \\
\addlinespace[4pt]
O2 & Single-polarity & $\mathcal{U}_q^{+}$ & Attenuation, $0 < M_+ < 1$ & Weakens low-surprisal tokens in positive-advantage samples \\
\addlinespace[4pt]
O3 & Single-polarity & $\mathcal{U}_q^{-}$ & Amplification, $W_- > 1$ & Strengthens the suppression of low-surprisal tokens under negative advantages \\
\addlinespace[4pt]
O4 & Single-polarity & $\mathcal{L}_q^{-}$ & Attenuation, $0 < M_- < 1$ & Weakens the suppression of high-surprisal tokens under negative advantages \\
\midrule
\addlinespace[2pt]
C1 & Combined & $\mathcal{L}_q^{+} + \mathcal{U}_q^{-}$ & Joint amplification & Strengthens entropy-increasing token categories on both advantage sides \\
\addlinespace[4pt]
C2 & Combined & $\mathcal{L}_q^{+} + \mathcal{L}_q^{-}$ & Amplifies the former and attenuates the latter & Strengthens positive-advantage high-surprisal tokens while weakening negative-advantage high-surprisal entropy-decreasing tokens \\
\addlinespace[4pt]
C3 & Combined & $\mathcal{U}_q^{+} + \mathcal{U}_q^{-}$ & Attenuates the former and amplifies the latter & Jointly regulates the low-surprisal region on both advantage sides \\
\addlinespace[4pt]
C4 & Combined & $\mathcal{U}_q^{+} + \mathcal{L}_q^{-}$ & Joint attenuation & Suppresses entropy-decreasing token categories on both advantage sides \\
\addlinespace[2pt]
\bottomrule
\end{tabularx}
}
\end{table}

\subsection{Summary of Single-Polarity and Combined Operations}\label{app:operations-summary}

For clarity, we summarize the single-polarity operations O1 to O4 and the combined operations C1 to C4 below.

Single-polarity operations act on one entropy quadrant:
\begin{itemize}
    \item O1 amplifies $\mathcal L_q^{+}$, the positive-advantage high-surprisal set. It strengthens the entropy-increasing minority under positive advantages.
    \item O2 attenuates $\mathcal U_q^{+}$, the positive-advantage low-surprisal set. It weakens the dominant entropy-decreasing signal under positive advantages.
    \item O3 amplifies $\mathcal U_q^{-}$, the negative-advantage low-surprisal set. It strengthens the entropy-increasing effect induced by suppressing low-surprisal tokens.
    \item O4 attenuates $\mathcal L_q^{-}$, the negative-advantage high-surprisal set. It weakens the entropy-decreasing effect induced by suppressing high-surprisal tokens.
\end{itemize}

Under the four-quadrant analysis in Section~\ref{sec:four-quadrant}, O1 and O3 amplify entropy-increasing signals, whereas O2 and O4 attenuate entropy-decreasing signals. Variant I in the main text is a direct instance of O1, since it only amplifies token weights in $\mathcal L_q^{+}$.

Combined operations act on two entropy quadrants:
\begin{itemize}
    \item C1 jointly amplifies $\mathcal L_q^{+}$ and $\mathcal U_q^{-}$. It strengthens entropy-increasing signals on both the positive- and negative-advantage sides.
    \item C2 amplifies $\mathcal L_q^{+}$ and attenuates $\mathcal L_q^{-}$. It simultaneously strengthens positive-advantage high-surprisal entropy-increasing tokens and weakens negative-advantage high-surprisal entropy-decreasing tokens.
    \item C3 attenuates $\mathcal U_q^{+}$ and amplifies $\mathcal U_q^{-}$. It regulates the low-surprisal region by weakening positive-advantage entropy-decreasing tokens and strengthening negative-advantage entropy-increasing tokens.
    \item C4 jointly attenuates $\mathcal U_q^{+}$ and $\mathcal L_q^{-}$. It suppresses the two dominant entropy-decreasing token categories.
\end{itemize}

Variant II in the main text is a representative instance of C2. It combines amplification of $\mathcal L_q^{+}$ with attenuation of $\mathcal L_q^{-}$. Since all weights remain positive, the sign of each original GRPO policy-gradient update is preserved. The method only rescales token-level contribution magnitudes, thereby regulating batch-level entropy dynamics through two complementary mechanisms: amplifying entropy-increasing contributions and attenuating entropy-decreasing contributions.

Single-polarity operations implement targeted interventions on individual entropy quadrants, while combined operations perform joint regulation across two quadrants.

Here, $\mathcal L_q^{+}$ and $\mathcal L_q^{-}$ denote high-surprisal proxy subsets within positive- and negative-advantage tokens, respectively, while $\mathcal U_q^{+}$ and $\mathcal U_q^{-}$ denote low-surprisal proxy subsets within positive- and negative-advantage tokens. Amplification refers to assigning weights larger than one, whereas attenuation refers to assigning weights in $(0,1)$.

\section{Limitations and Broader Impacts}\label{app:limitations}

Our empirical evaluation may not exhaustively cover all possible task distributions and optimization settings. Moreover, similar to other LLMs, our trained model could also generate potentially unethical or misleading information sometimes. We hope our work contributes positively to the development of more reliable post-training paradigms for large language models.

\stopcontents[appendices]
\clearpage

\end{document}